\newcommand{\floor}[1]{\left\lfloor #1 \right\rfloor}
\theoremstyle{plain}
\newtheorem{theorem}{Theorem}[section]
\newtheorem{lemma}[theorem]{Lemma}
\newtheorem{corollary}[theorem]{Corollary}
\theoremstyle{definition}
\newtheorem{definition}[theorem]{Definition}
\theoremstyle{remark}
\newtheorem{remark}[theorem]{Remark}
\newcommand{\State}{\STATE}
\newcommand{\For}{\FOR}
\newcommand{\EndFor}{\ENDFOR}
\newtheorem*{rep@theorem}{\rep@title}
\newcommand{\newreptheorem}[2]{%
	\newenvironment{rep#1}[1]{%
		\def\rep@title{#2 \ref{##1}}%
		\begin{rep@theorem}}%
		{\end{rep@theorem}}}
\newcommand{\mcf}{\mathcal}
\newcommand{\innerprod}[2]{\left\langle{#1},{#2}\right\rangle}
\DeclareMathOperator*{\argmin}{arg\,min}
\DeclareMathOperator*{\argmax}{arg\,max}
\newcommand{\expert}{{\pi_{\textup{E}}}}
\newcommand{\mbs}{\boldsymbol}
\newcommand{\cost}{c}
\newcommand{\true}{c_{\textup{true}}}
\newcommand{\initial}{\mbs{\nu}_0}
\newcommand{\sspace}{\mcf{S}}
\newcommand{\aspace}{\mcf{A}}
 \newcommand{\norm}[1]{\left\| {#1} \right\|}
\newcommand{\bc}[1]{\left\{{#1}\right\}}
\newcommand{\br}[1]{\left({#1}\right)}
\newcommand{\bs}[1]{\left[{#1}\right]}
\newcommand{\abs}[1]{\left| {#1} \right|}
\newcommand\blfootnote[1]{%
  \begingroup
  \renewcommand\thefootnote{}\footnote{#1}%
  \addtocounter{footnote}{-1}%
  \endgroup
}
\title{ IL-SOAR : Imitation Learning with Soft Optimistic  Actor cRitic}
\author{
\and
\textbf{Stefano Viel*\blfootnote{Equal contribution.}} \\
\texttt{stefano.viel@epfl.ch} \\
EPFL 
\and
\textbf{Luca Viano*} \\
\texttt{luca.viano@epfl.ch} \\
LIONS \\
EPFL
\and
\textbf{Volkan Cevher} \\
\texttt{volkan.cevher@epfl.ch} \\
LIONS \\
EPFL
}
\begin{document}

\maketitle
% this must go after the closing bracket ] following \twocolumn[ ...

% This command actually creates the footnote in the first column
% listing the affiliations and the copyright notice.
% The command takes one argument, which is text to display at the start of the footnote.
% The \icmlEqualContribution command is standard text for equal contribution.
% Remove it (just {}) if you do not need this facility.

%\printAffiliationsAndNotice{}  % leave blank if no need to mention equal contribution
% otherwise use the standard text.

\begin{abstract}
\noindent This paper introduces the SOAR framework for imitation learning. SOAR is an algorithmic template 
that learns a policy from expert demonstrations with a primal dual style algorithm that alternates cost and policy updates. Within the policy updates, the SOAR framework uses an actor critic method with multiple critics to estimate the critic uncertainty and build an optimistic critic fundamental to drive exploration.

\noindent When instantiated in the tabular setting, we get a provable algorithm with guarantees that matches the best known results in the desired accuracy parameter $\epsilon$.

\noindent Practically, the SOAR template can boost the performance of \emph{any} imitation learning algorithm based on Soft Actror Critic (SAC).  As an example, we show that SOAR can boost consistently the performance of the following SAC-based imitation learning algorithms: $f$-IRL, ML-IRL and CSIL. Overall, thanks to SOAR, the required number of episodes to achieve the same performance is reduced by half.\footnote{Project code available at \url{https://github.com/stefanoviel/SOAR-IL/tree/master}}
\end{abstract}

\section{Introduction}
Several recent state of the art imitation learning (IL) algorithms \cite{ni2021f,zeng2022maximum,Garg:2021,watson2023coherent,viano2022proximal} are built on Soft Actor Critic (SAC) \cite{Haarnoja:2018} to perform the policy updates.
SAC uses \emph{entropy} regularized policy updates to maintain a strictly positive probability of taking each action. However, this is known to be an inefficient exploration strategy if deployed alone \cite{cesa2017boltzmann}.

Indeed, several recent theoretical imitation learning achieve performance guarantees by adding exploration bonuses on top of the regularized policy updates, which encourage the learner to visit state-action pairs that have not been visited previously. Unfortunately, such works are only available in the tabular setting \cite{Shani:2021,xu2023provably} and in the linear setting \cite{viano2024imitation}. The design of the exploration bonuses in these works is strictly tight to the tabular or linear structure of the transition dynamics, therefore, these analyses offer little insight on how to design an efficient exploration mechanism using neural network function approximation.

There is, therefore, a lack of a technique that satisfies the following two requirements.
\begin{itemize}
\vspace{-2mm}
\item It is statistically and computationally efficient in the tabular setting.
\vspace{-2mm}
\item It can be implemented easily in continuous states and actions problems requiring neural networks function approximation.
\end{itemize}
In this paper, we present a general template, dubbed Soft Optimistic Actor cRitic Imitation Learning (SOAR-IL) satisfying these requirements. 

The main idea is to act according to an \emph{optimistic} critic within the SAC block on which many IL algorithms rely.
Here, optimism means appropriately underestimating the expected cumulative cost incurred by playing a policy in the environment. This principle known as \emph{optimism in the face of uncertainty} has led to several successful algorithms in the bandits community.
\begin{figure*}[t]
    \centering
\includegraphics[width=\textwidth]{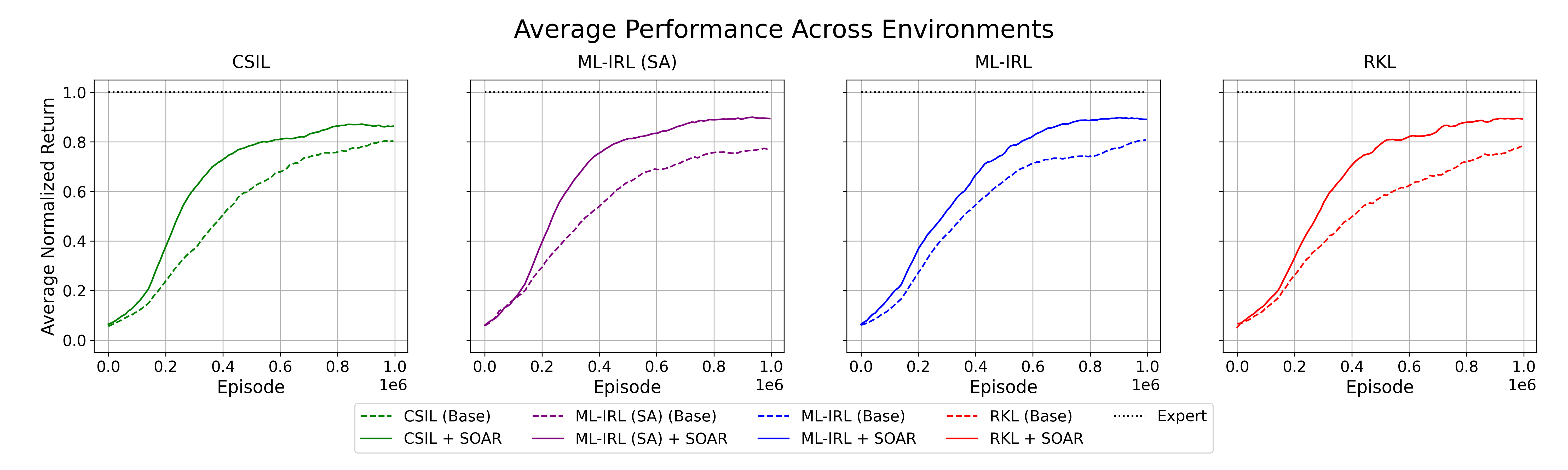}
\vspace{-4mm}
    \caption{\small{Summary of experimental results. Each plot compares the average normalized return across $4$ MuJoCo environments with $16$ expert trajectories for a base algorithm and its SOAR-enhanced version. SOAR replaces the single critic in SAC-based methods with multiple critics to compute an optimistic estimate. Across all algorithms, incorporating SOAR consistently improves performance. ML-IRL (SA) stands for ML-IRL \cite{zeng2022maximum} from expert state-action demonstrations.}}
    \label{fig:average}
\end{figure*}

While optimism is often achieved using the structure of the problem (tabular, linear, etc.), in this work, we build optimistic estimators using an ensemble technique. That is, multiple estimators for the same quantity are maintained and aggregated to obtain an optimistic estimator.
This technique scales well with deep imitation learning.
To summarize, we have the following contributions.
\paragraph{Theoretical contribution}
We show that there exists a computationally efficient algorithm that uses an ensemble based exploration technique that gives access to $\mathcal{O}(\epsilon^{-2})$ expert trajectories and $\mathcal{O}(\epsilon^{-2})$ interactions in a tabular MDP outputs a policy such that its cumulative expected cost is at most $\epsilon$ higher than the expert cumulative expected cost with high probability.
\paragraph{Practical Contribution} We apply an ensemble-based exploration technique, SOAR, to boost the performance of deep imitation learning algorithms built on SAC, demonstrating its effectiveness on MuJoCo environments. Specifically, we show that incorporating SOAR consistently boosts the performance of base methods such as Coherent Soft Imitation Learning (CSIL)\cite{watson2023coherent}, Maximum Likelihood IRL (ML-IRL)\cite{zeng2022maximum} and RKL \cite{ni2021f}. As shown in Figure~\ref{fig:average}, our approach consistently outperforms the base algorithms across all MuJoCo environments. Notably, SOAR achieves the best performance of the baselines requiring only approximately half the number of learning episodes.

\section{Preliminaries and Notation}
The environment is abstracted as Markov Decision Process (MDP) \cite{Puterman:1994} 
which consists of a tuple $(\sspace, \aspace, P, c, \initial, \gamma)$ where $\sspace$ is the state space, $\aspace$ is the action space, $P: \sspace\times\aspace \rightarrow \Delta_{\sspace}$ is the transition kernel, that is, 
$P(s'|s,a)$ denotes the probability of landing in state $s'$ after choosing action $a$ in state $s$. Moreover, $\initial$ is a distribution over states from which the initial state is sampled. Finally, $c: \sspace\times \aspace \rightarrow [0,1]$ is the cost function, and $\gamma \in [0,1)$ is called the discount factor.% (see Protocol~\ref{prot:interaction} in \Cref{sec:protocol}).

\textbf{Value functions and occupancy measures} We define the state value function at state $s \in \sspace$ for the policy $\pi$ under the cost function $c$ as $V_c^{\pi}(s) \triangleq \mathbb{E}\bs{\sum^{\infty}_{h=0} \gamma^{h} c(s_h,a_h) | s_1 = s}$.
The
expectation over both the randomness of the transition dynamics and the one of the learner's policy.
Another convenient quantity is the occupancy measure of a policy $\pi$ denoted as $d^{\pi}\in \Delta_{\sspace\times\aspace}$ and defined as follows
$d^{\pi}(s,a) \triangleq (1 - \gamma)\sum^{\infty}_{h=0} \gamma^{h}\mathbb{P}\bs{s,a \text{ is visited after $h$ steps acting with $\pi$}}$. We can also define the state occupancy measure
as $d^{\pi}(s) \triangleq (1 - \gamma)\sum^{\infty}_{h=0} \gamma^{h}\mathbb{P}\bs{s\text{ is visited after $h$ steps acting with $\pi$}}$.

\textbf{Imitation Learning} In imitation learning, the learner is given a dataset $\mathcal{D}_{\expert}$ of expert trajectories collected by an unknown expert policy $\expert$.\footnote{In order, to accommodate state-only and state-action with a unified analysis we overload the notation for the expert dataset. $\mathcal{D}_\expert$ denotes a collection of samples from the expert state occupancy measure in the former case and a collection of state-actions sampled from the state-action occupancy measure in the latter case.}
By trajectory $\boldsymbol{\tau}^k$, we mean the sequence of states and actions sampled rolling out the policy $\pi_k$ for a number of steps sampled from the distribution $\mathrm{Geometric}(1 - \gamma)$.
 Given $\mathcal{D}_{\expert}$, the learner adopts an algorithm $\mathcal{A}$ to learn a policy $\pi^{\mathrm{out}}$ such that $\innerprod{\initial}{V^{\widehat{\pi}^k}_{c_{\mathrm{true}}}  - V^{\expert}_{c_{\mathrm{true}}} } \leq \epsilon$ with high probability.
 
We use the notation $\mathbf{e}_s$ to denote a vector in $\mathbb{R}^{\abs{\sspace}}$ zero everywhere but in the coordinate corresponding to the state $s$ ( for an arbitrary ordering of the states). Analogously, we use  $\mathbf{e}_{s,a}$ to denote a vector in $\mathbb{R}^{\abs{\sspace}\abs{\aspace}}$ zero everywhere but in the $(s,a)^{th}$ entry which equals one.
\section{The Algorithm}
\begin{algorithm}
\caption{SOAR-Imitation Learning \label{alg:meta}}
\begin{algorithmic}[1]
\REQUIRE Reward step size $\alpha$, Expert dataset $\mathcal{D}_\expert$, Discount factor $\gamma$, Policy step size $\eta$.
\State Initialize $\pi^1$ as uniform distribution over $\mathcal{A}$.%, learner dataset $\mathcal{D}_\pi$.
%\State Initialize neural networks: $r_\theta$ for reward, $\{Q_1, \ldots, Q_K\}$ for q-value estimation, $\pi$ for policy
\State Initialize empty replay buffer,i.e. $\mathcal{D}^0 = \bc{}$
\For{$k = 1$ to $K$}
    \State $\tau^k \gets \textsc{CollectTrajectory}(\pi^{k})$
    \State Add $\tau^k$ to replay buffer,i.e. $\mathcal{D}^{k} = \mathcal{D}^{k - 1} \cup \tau^k $.
    \State $ c^k \gets \textsc{UpdateCost}(c^{k-1}, \mathcal{D}_\expert, \mathcal{D}^k, \alpha)$
    \For{$\ell = 1$ to $L$}
        \State Compute estimator $Q^k_\ell$.
    \EndFor
    \State $Q^k = \textsc{OptimisticQ}( \bc{Q^k_\ell}^L_{\ell=1} )$.
    %\State $\pi^k(a|s) \propto \exp\br{- \eta \sum^k_{\tau =1} Q^k(s,a)}$
    \State $\pi^k(a|s) = \textsc{PolicyUpdate}(\eta, \bc{Q^\tau(s,a)}^k_{\tau=1})$ %\propto \exp\br{- \eta \sum^k_{\tau =1} Q^k(s,a)}$
\EndFor
\end{algorithmic}
\end{algorithm}
In this Section, we describe \Cref{alg:meta}. A meta-algorithm that encompasses several existing imitation learning algorithms.
Inside each iteration of the main for loop, the learner collects a new trajectory sampling actions from the policy $\pi^k$ (Line 4 in \Cref{alg:meta}) and then performs the following steps.
\begin{itemize}
    \item \textbf{The Cost update.} At Line 6 of \Cref{alg:meta}, the learner updates an estimate of the true unknown cost function with the algorithm-dependent routine $\textsc{UpdateCost}$. %Below, we provide examples of how $\textsc{UpdateCost}$ is implemented in popular imitation learning algorithms.
    For instance, Generative Adversarial Imitation Learning (GAIL), Adversarial Inverse Reinforcement Learning (AIRL), and Discriminator Actor Critic (DAC) \cite{Ho:2016b,Fu:2018,Kostrikov:2019} use a reward derived from a discriminator neural network trained to distinguish state-action pairs visited by the expert from those visited by the learner. 
    
    Using a fixed cost function obtained from a behavioral cloning warm up is, instead, the approach taken in CSIL \cite{watson2023coherent}. Moreover, updating the reward to minimize an information theoretic divergence between expert and learner state occupancy measure is the approach taken in RKL \cite{ni2021f}. Finally, \cite{zeng2022maximum} updates the cost using online gradient descent (OGD) \cite{Zinkevich2003}.
    \item \textbf{The state-action value function update.} In the \emph{for} loop at Lines 7-9 of \Cref{alg:meta}, the learner updates $L$ different critics trained on different subsets of the data sampled from the replay buffer $\mathcal{D}^k$ denoted as $\bc{\mathcal{D}^k_\ell}^L_{\ell=1}$. For a fixed state-action pair each dataset contains independent samples from $P(\cdot|s,a)$. This allows creating $L$ jointly independent random variables $\bc{Q^k_\ell}^L_{\ell=1}$ that estimate the ideal value iteration update (i.e. $c^k + \gamma P V^k$) which cannot be implemented exactly due to the lack of knowledge on $P$. 
    
    In \Cref{sec:tabular}, we provide an explicit way to compute $L$ slightly optimistic estimates for the tabular setting. Moreover, in the deep imitation learning experiments, we train $L$ different critics via temporal difference, as it is commonly done in Soft Actor Critic implementation ( see \cite{Haarnoja:2018} ). 
    
    Finally, in Line 10 of \Cref{alg:meta}, the $L$ critics are aggregated to generate an estimate $Q^{k+1}$ which is, with high probability, optimistic i.e. $Q^{k+1} \leq c^k + \gamma P V^k$. In other words, it underestimates the update that could have been performed by value iteration if the transition matrix $P$ was known to the learner.
    We provide aggregation routines that satisfy this requirement if $L$ is large enough.
    \item \textbf{The policy update} As the last step of each inner loop, the learner updates the policy using the optimistic state-action value function estimate.
    In the tabular case, we will instantiate the update using an online mirror descent (OMD) step \cite{Beck:2003,nemirovskij1983problem} (also known as the multiplicative weights update \cite{warmuth1997continuous,auer1995gambling}).
    As it will be evident from \Cref{sec:analysis}, this update we can ensure that the KL divergence between consecutive policies is upper bounded in terms of the policy step size $\eta$.
    For the continuous state-action experiments, the online mirror descent is approximated via a gradient descent step on the SAC loss.
\end{itemize} 
\begin{remark}
Notice that only one pair of critics is used in the implementation of SAC ($L =1$) that serves as base RL algorithm for several commonly used IL algorithms ( GAIL \cite{Ho:2016b}, AIRL \cite{Fu:2018}, IQ-Learn \cite{Garg:2021}, PPIL \cite{viano2022proximal}, RKL \cite{ni2021f} and ML-IRL \cite{zeng2022maximum}). As proven in \Cref{cor:optimism}, $L=1$ is not enough to ensure optimism, not even in the tabular case.
In our experiments, we show that a value of $L$ larger than $1$ is beneficial in all the MuJoCo environments we tested on.
\end{remark}

\subsection{Algorithm with guarantees in the tabular case}
\label{sec:tabular}

We consider an instance of \Cref{alg:meta} in the tabular case for which we will prove theoretical sample efficiency guarantees. We present the pseudocode in \Cref{alg:theory_version}.

For what concerns the analysis, the first step is to extract the policy achieving the sample complexity guarantees above via an online-to-batch conversion. 
That is, the output policy is sampled uniformly from a collection of $K$ policies $\bc{\pi^k}^K_{k=1}$. 
The sample complexity result follows from proving that the policies $\bc{\pi^k}^K_{k=1}$ produced by \Cref{alg:theory_version} is a sequence with sublinear regret in high probability. More formally, we define the regret as follows.
\begin{definition}\label{def:regret}
\textbf{Regret}
The regret is defined as follows
\begin{equation*}
\mathrm{Regret}(K)\triangleq\frac{1}{1 - \gamma}\sum^K_{k=1} \innerprod{\true}{d^{\pi^k} - d^{\expert}}
\end{equation*}
\end{definition}
\begin{remark}
Notice that the regret defined in this way satisfies $\mathrm{Regret}(K) = \sum^K_{k=1} \innerprod{\initial}{V^{\pi^k}_{c_{\mathrm{true}}} - V^{\expert}_{c_{\mathrm{true}}} }$. For this reason, we require the factor $(1-\gamma)^{-1}$ in the definition.
\end{remark}
Omitting dependencies on the horizon and the state action spaces cardinality, we will guarantee that
$$
\mathrm{Regret}(K) \leq \mathcal{O}(K^{1/2} + K \abs{\mathcal{D}_\expert}^{-1/2}),
$$
with high probability.
Notice that this bound is sublinear in $K$, for  $\abs{\mathcal{D}_\expert} = \mathcal{O}(K)$.
To obtain such bound, we adopt the following decomposition for $(1-\gamma)\mathrm{Regret}(K)$ adapted from \cite{Shani:2021} to accommodate the infinite horizon setting.
\begin{equation}
    \underbrace{\sum^K_{k=1} \innerprod{c^k}{d^{\pi^k} - d^{\expert}}}_{:= (1-\gamma)\mathrm{Regret}_{\pi}(K,\expert)} + \underbrace{\sum^K_{k=1} \innerprod{c_{\mathrm{true}} - c^k}{d^{\pi^k} - d^{\expert}}}_{:= (1-\gamma)\mathrm{Regret}_c(K, c_{\mathrm{true}})} \label{eq:dec}
\end{equation}
\begin{algorithm}
\caption{Tabular SOAR-IL \label{alg:theory_version}}
\begin{algorithmic}[1]
\REQUIRE Step size $\eta$, Expert dataset $\mathcal{D}_\expert$, Discount factor $\gamma$, Reward step size $\alpha$, $N^0(s,a)=0$ for all $s,a$, number of estimators $L= 36 \log \br{\abs{\sspace}\abs{\aspace} K /\delta}$.
\State Initialize $\pi^1$ as uniform distribution over $\mathcal{A}$

\For{$k = 1$ to $K$}
    \State Sample trajectory length $L^k \sim \mathrm{Geometric}(1 - \gamma)$.
    \State $\tau^k= \bc{(s^k_t, a^k_t)}^{L^k}_{t=1}$ rolling out $\pi^k$ for $L^k$ steps.
    \State Update counts for all $s^k_t, a^k_t \in \tau_k$:
    $$N^k(s^k_t, a^k_t) = N^{k-1}(s^k_t, a^k_t) + 1.$$
    %\begin{equation*}
    %N^k(s^k_t, a^k_t, s^k_{t+1}) = N^{k-1}(s^k_t, a^k_t,s^k_{t+1}) + 1
    %\end{equation*}
    \State Add $s^k_t, a^k_t, s^k_{t+1}$ to the datasets with index $\ell = N^k(s^k_t, a^k_t)  \mod L$, $$\mathcal{D}_\ell^k = \mathcal{D}_\ell^{k-1} \cup \bc{s^k_t, a^k_t},~~~~ \mathcal{R}_\ell^k = \mathcal{R}_\ell^{k-1} \cup \bc{s^k_t, a^k_t,s^k_{t+1}}. $$ %~~~~\text{for}~~~~ \ell = N^k(s^k_t, a^k_t) \mod L .$$ 
    \State $ c^k = \textsc{CostUpdateTabular}(c^{k-1}, \tau^k, \mathcal{D}_\expert).$
    \For{$\ell = 1$ to $L$}
        \State $N^k_\ell(s,a,s') = \sum_{\bar{s},\bar{a},\bar{s}'\in \mathcal{R}^k_\ell} \mathds{1}_{\bc{\bar{s},\bar{a},\bar{s}' = s,a,s'}}.$
        \State $N^k_\ell(s,a) = \sum_{\bar{s},\bar{a}\in \mathcal{D}^k_\ell} \mathds{1}_{\bc{\bar{s},\bar{a} = s,a}}.$
        \State $\widehat{P}^k_\ell(\cdot|s,a) = \frac{N^k_\ell(s,a,\cdot)}{N^k_\ell(s,a) + 2}$
    \EndFor
    \State $Q^{k+1} = \textsc{OptimisticQTabular}(V^k, \bc{\widehat{P}^k_\ell}^L_{\ell=1}, c^k)$.
    \State $\pi^{k+1}(a|s) \propto \pi^{k}(a|s) \exp\br{-\eta Q^{k+1}(s,a)}$
    \State $ V^{k+1}(s) = \innerprod{\pi^{k+1}(\cdot|s)}{Q^{k+1}(s,\cdot)}$
\EndFor
\State \textbf{Return} The mixture policy $\widehat{\pi}^K$.
\end{algorithmic}
\end{algorithm}

The algorithmic design for the tabular setting aims at updating the cost variable so that the term $\mathrm{Regret}_c$ grows sublinearly (see Line 7 in \Cref{alg:theory_version}). 

We consider both cases of imitation from state-action expert data (Lines 4-6 of $\textsc{CostUpdateTabular}$ ) and state-only expert data (Lines 2-3 of $\textsc{CostUpdateTabular}$). These cases differ only in the stochastic loss for the cost update. Notice that we overload the notation to address both state-only and state-action imitation learning with a unified analysis.
In particular, $\widehat{d^{\pi^k}}$ is an unbiased estimate of the learner occupancy measure. For state-only imitation learning we use $ \widehat{d^{\pi^k}} = \mathbf{e}_{s^k_{L^k}}$ and estimated expert occupancy measure equals to $\widehat{d^\expert} = \abs{\mathcal{D}_\expert}^{-1}\sum_{s\in \mathcal{D}_\expert} \mathbf{e}_s$ while for state-action imitation learning $ \widehat{d^{\pi^k}} = \mathbf{e}_{s^k_{L^k}, a^k_{L^k}} $ and $\widehat{d^\expert} = \abs{\mathcal{D}_\expert}^{-1}\sum_{s,a\in \mathcal{D}_\expert} \mathbf{e}_{s,a}$.
The formal bound on $\mathrm{Regret}_c$ is given in \Cref{thm:reward_regret_bound}.
\begin{algorithm}
\caption{\textsc{CostUpdateTabular}}
\begin{algorithmic}[1]
\REQUIRE Current cost vector $c^{k-1}$, trajectory $\tau^k$, expert dataset $\mathcal{D}_\expert$.
\IF {$\textsc{State-Only} = \textsc{TRUE}$}
    \State $\widehat{d^{\pi^k}} = \mathbf{e}_{s^k_{L^k}}$.
    
    \State $\widehat{d^\expert} = \abs{\mathcal{D}_\expert}^{-1}\sum_{s\in \mathcal{D}_\expert} \mathbf{e}_s$
    \ELSE
    \State $\widehat{d^{\pi^k}} = \mathbf{e}_{s^k_{L^k},a^k_{L^k}}$.
    
    \State $\widehat{d^\expert} = \abs{\mathcal{D}_\expert}^{-1}\sum_{s,a \in \mathcal{D}_\expert} \mathbf{e}_{s,a}$
    \ENDIF
    \State \textbf{Return:} $ c^{k} \gets \Pi_{\mathcal{C}}\bs{c^{k-1} - \alpha (\widehat{d^{\expert}} - \widehat{d^{\pi^k}} )} $
\end{algorithmic}
\end{algorithm}
The rest of the algorithm aims to provide a sublinear bound on $\mathrm{Regret}_\pi$.
In particular, the updates for the estimated transition kernels $\bc{\widehat{P}^k_\ell}^L_{\ell=1}$ in Lines 8-12 of \Cref{alg:theory_version} serves to build $L$ slightly optimistic \footnote{The optimism is achieved by adding $2$ in the denominator of the estimated transition kernels.} estimate of the ideal value function update. 

In the routine $\textsc{OptimisticQTabular}$, we propose two aggregation rules to generate the optimistic $Q$ value estimate to be used in the policy update step. The first one, takes the minimum of the $L$ estimators as in \Cref{eq:update1}, while the second option \Cref{eq:update2} considers the mean of the $L$ estimators minus a factor proportional to the empirical standard deviation. By Samuelson's inequality \cite{samuelson1968deviant}, we prove that the second option is more optimistic.
\begin{algorithm}
\caption{\textsc{OptimisticQTabular}}
\begin{algorithmic}[1]
\REQUIRE current state value function estimate $V^k$, ensemble of estimated transitions $\bc{\widehat{P}^k_\ell}^L_{\ell=1}$, cost $c^k$.
    \State \textcolor{blue}{// Option 1} \begin{equation} \text{\textbf{return}}~~~Q^{k+1} = c^k + \gamma \min_{\ell \in [L]} \widehat{P}^k_\ell V^k \tag{Min}\label{eq:update1}\end{equation}
    \State \textcolor{blue}{// Option 2}
    \begin{equation}
        \text{\textbf{return}}~~~ Q^{k+1} = c^k + \gamma \max\bs{\frac{1}{L}\sum^L_{\ell =1} \widehat{P}^k_\ell V^k - \sigma^k   , 0}\tag{Mean-Std}\label{eq:update2} 
    \end{equation}
    \State with $\sigma^k = \sqrt{\sum^L_{\ell=1} \br{
    \widehat{P}^k_\ell V^k
    - \frac{1}{L} \sum^L_{\ell'=1} \widehat{P}^k_{\ell'} V^k}^2}.$
\end{algorithmic}
\end{algorithm}

Finally, an iteration of the tabular case algorithm is concluded by the policy update implemented via OMD.
%The sublinear bound on the regret for the player that updates $\pi$ ($\mathrm{Regret}_\pi$) is proven in \Cref{thm:policy_regret}.

Having described our main techniques we are in the position of stating our main theoretical results hereafter.
\begin{theorem}
\label{thm:main_result}\textbf{Main Result}
For any MDP, let us consider either the update \Cref{eq:update1} or \Cref{eq:update2}, it holds that with probability $1-5\delta$ that $\frac{\mathrm{Regret}(K)}{K}$ of Tabular SOAR-IL (\Cref{alg:theory_version}) is upper bounded by
\begin{equation*}
\widetilde{\mathcal{O}}\br{\sqrt{\frac{\abs{\sspace}^4 \abs{\aspace}  \log (1 / \delta)}{(1-\gamma)^5 K}}} + \sqrt{ \frac{\abs{\sspace}^2\abs{\aspace} \log \br{\abs{\sspace}\abs{\aspace}/\delta} (\log(\abs{\sspace}) + 2)^2}{(1-\gamma)^2\abs{\mathcal{D}_\expert}}}.
\end{equation*}
Therefore, choosing $K = \widetilde{\mathcal{O}}\br{\frac{\abs{\sspace}^4 \abs{\aspace}  \log (1 / \delta)}{(1-\gamma)^5 \epsilon^2}}$ and $\abs{\mathcal{D}_\expert} = \frac{\abs{\sspace}^2\abs{\aspace}  \log \br{\abs{\sspace}\abs{\aspace} /\delta} (\log(\abs{\sspace}) + 2)^2}{ \epsilon^2 (1-\gamma)^2}$ it holds that the mixture policy $\widehat{\pi}_K$ satisfies $\innerprod{\initial}{V^{\widehat{\pi}^k}_{c_{\mathrm{true}}}  - V^{\expert}_{c_{\mathrm{true}}} } \leq \epsilon$ with probability at least $1-5 \delta$.
%Using the update given in \Cref{eq:update2} gives $K = \widetilde{\mathcal{O}}\br{\frac{\abs{\sspace}^2 \abs{\aspace}  \log (1 / \delta)}{(1-\gamma)^5 \epsilon^2}}$.
\end{theorem}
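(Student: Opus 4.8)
The plan is to establish the main result by controlling the regret via the decomposition in \eqref{eq:dec}, namely $\mathrm{Regret}(K) = \mathrm{Regret}_\pi(K,\expert) + \mathrm{Regret}_c(K,\true)$, and then applying the online-to-batch argument. First I would invoke \Cref{thm:reward_regret_bound} to bound $\mathrm{Regret}_c$: since the cost is updated by online gradient descent (projected onto $\mathcal{C}$) against the stochastic linear losses $\innerprod{\cdot}{\widehat{d^{\pi^k}} - \widehat{d^{\expert}}}$, standard OGD regret plus a martingale concentration (Azuma--Hoeffding for the difference between the stochastic occupancy-measure estimates and their conditional expectations) should give a bound of order $\widetilde{\mathcal{O}}(\sqrt{K}/(1-\gamma) + K/((1-\gamma)\sqrt{\abs{\mathcal D_\expert}}))$, where the second term is the statistical error from estimating $d^\expert$ with $\abs{\mathcal D_\expert}$ samples, and it is this term that yields the $\abs{\mathcal D_\expert}$-dependent summand in the statement.

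Next I would handle $\mathrm{Regret}_\pi(K,\expert) = \sum_k \innerprod{c^k}{d^{\pi^k} - d^{\expert}}$, which is the core of the argument. The idea is the classical mirror-descent / policy-optimization telescoping: for the OMD policy update $\pi^{k+1}(a|s) \propto \pi^k(a|s)\exp(-\eta Q^{k+1}(s,a))$ one writes the per-step value difference using the performance difference lemma against $d^\expert$, getting terms involving $\innerprod{\pi^k(\cdot|s) - \expert(\cdot|s)}{Q^{k+1}(s,\cdot)}$ plus the errors from using $Q^{k+1}$ instead of the true Bellman backup $c^k + \gamma P V^k$. The optimism property — proven as \Cref{cor:optimism}, which guarantees $Q^{k+1} \le c^k + \gamma P V^k$ with high probability when $L = 36\log(\abs{\sspace}\abs{\aspace}K/\delta)$ — is exactly what makes the ``bias'' term in this comparison go the right way, so that only the \emph{overestimation} needs to be controlled. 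I would then bound the OMD term $\sum_k \sum_s d^\expert(s)\innerprod{\pi^k(\cdot|s)-\expert(\cdot|s)}{Q^{k+1}(s,\cdot)}$ by a KL-telescoping argument of the form $\eta^{-1}(\mathrm{KL}(\expert\|\pi^1) - \mathrm{KL}(\expert\|\pi^{K+1})) + \eta \sum_k \|Q^{k+1}\|_\infty^2$, using $\|Q^{k+1}\|_\infty \le (1-\gamma)^{-1}$ and $\mathrm{KL}(\expert\|\pi^1)\le \log\abs{\aspace}$, and optimizing $\eta$ to balance these against $K$. The remaining ``optimism gap'' term $\sum_k \innerprod{d^{\pi^k}}{(c^k + \gamma P V^k) - Q^{k+1}}$ must be bounded above; here I would use that each $\widehat P^k_\ell$ is within roughly $\sqrt{1/N^k_\ell(s,a)}$ of $P$ (empirical Bernstein / Hoeffding on the estimated kernel, with the $+2$ in the denominator giving the slight optimism), average over the $\ell$ used per $(s,a)$, and then apply a pigeonhole/potential argument over visitation counts $\sum_k \sum_{s,a} d^{\pi^k}(s,a)/\sqrt{N^k(s,a)} = \widetilde{\mathcal{O}}(\sqrt{\abs{\sspace}\abs{\aspace} K/(1-\gamma)})$, picking up the extra powers of $\abs{\sspace}$ and $(1-\gamma)^{-1}$ that appear in the first term of the theorem.

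Finally, I would assemble: combining the two regret bounds gives $\mathrm{Regret}(K)/K \le \widetilde{\mathcal{O}}(\sqrt{\abs{\sspace}^4\abs{\aspace}\log(1/\delta)/((1-\gamma)^5 K)}) + \sqrt{\abs{\sspace}^2\abs{\aspace}\log(\abs{\sspace}\abs{\aspace}/\delta)(\log\abs{\sspace}+2)^2/((1-\gamma)^2\abs{\mathcal D_\expert})}$ with probability $1-5\delta$ (the five failure events being: the optimism event from \Cref{cor:optimism}, two concentration events for the transition-kernel estimates, and the martingale events in the cost and policy regret bounds). Since $\widehat\pi^K$ is the uniform mixture over $\pi^1,\dots,\pi^K$, one has $\innerprod{\initial}{V^{\widehat\pi^K}_{\true} - V^\expert_{\true}} = \mathrm{Regret}(K)/K$ by the remark following \Cref{def:regret}, and substituting the stated choices of $K$ and $\abs{\mathcal D_\expert}$ makes each summand at most $\epsilon/2$. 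I expect the main obstacle to be the optimism-gap / count-potential step: one must carefully track that averaging the $L$ per-sample kernel estimates for a given $(s,a)$ genuinely behaves like a single estimator built from $N^k(s,a)$ samples (the $\mathrm{mod}\ L$ bucketing in Line 6 is what makes this work), and simultaneously that the concentration holds uniformly over all $K$ rounds, all $(s,a)$, and the data-dependent value functions $V^k$ — this last point typically requires either a union bound over a net of possible $V$'s or an argument exploiting that $V^k$ is measurable with respect to a filtration independent of the fresh samples used for $\widehat P^k_\ell$.
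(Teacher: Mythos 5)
Your overall strategy coincides with the paper's: decompose $\mathrm{Regret}(K)$ via \eqref{eq:dec} into $\mathrm{Regret}_\pi$ and $\mathrm{Regret}_c$, bound the former with the extended performance-difference decomposition (a mirror-descent term, an optimism term controlled by \Cref{cor:optimism}, and a count-based potential argument as in \Cref{lemma:count_based}), bound the latter with OGD plus Azuma--Hoeffding plus expert concentration exactly as in \Cref{thm:reward_regret_bound}, take a union bound over the five failure events, and finish by online-to-batch. Your bookkeeping of the mirror-descent term (an $\eta\sum_k\norm{Q^{k+1}}_\infty^2$ term in place of the paper's BTRL-plus-shift split via \Cref{lemma:local_regret,lemma:shift}) is an acceptable variant that yields the same rate after optimizing $\eta$.

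There is, however, one genuine missing ingredient: the reduction from a general MDP to a \emph{binarized} MDP (\Cref{remark:MDP_comment}). The optimism guarantee \Cref{cor:optimism}, on which your argument relies, is proven only for MDPs in which $\max_{s,a}\mathrm{supp}(P(\cdot|s,a)) = 2$, and \Cref{thm:policy_regret,thm:reward_regret_bound} are likewise stated for binarized MDPs. To obtain the claim ``for any MDP'' one must first pass to the binarized version, which costs a blow-up from $\abs{\sspace}$ to $\abs{\sspace}^2$ states and an inflation of the effective horizon by a factor $\log_2\abs{\sspace}+2$ (\Cref{lemma:eff_horizon}). This step is precisely what produces the $\abs{\sspace}^4$ in the first summand and the $(\log(\abs{\sspace})+2)^2$ factor in the second summand of the theorem: your proposal writes these factors into the final bound, but the derivation you sketch, which works with the original state space throughout, would only yield $\abs{\sspace}^2$ and no $(\log\abs{\sspace}+2)^2$ factor --- and, more importantly, would not justify invoking \Cref{cor:optimism} when the transition kernel has support larger than two.
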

\begin{remark}
The bound on $\abs{\mathcal{D}_\expert}$ is the bound on the number of either state-only or state-action expert trajectories depending on the setting considered.
\end{remark}
\begin{remark}
The gurantees are stated for the mixture policy $\widehat{\pi}^K$, i.e. the policy which has an occupancy measure equal to the average occupancy measure of the policies in the no-regret sequence. That is, it holds that $d^{\widehat{\pi}^K} = K^{-1} \sum^K_{k=1} d^{\pi^k}$. The policy $\widehat{\pi}^K$ cannot be computed without knowledge of $P$ but sampling a trajectory from it can be done by choosing an index $k \sim \mathrm{Unif}([K])$ at the beginning of each new episode and continuing rolling out the policy $\pi^k$ for a number of steps sampled from $\mathrm{Geom}(1-\gamma)$.
\end{remark}
\begin{remark}
In the case of state-only expert dataset the provided upper bounds for $K$ and $\abs{\mathcal{D}_\expert}$ are optimal up to log factors in the precision parameters $\epsilon$. Indeed, these upper bounds match the lower bounds in \cite{moulin2025optimistically}.
\end{remark}
\section{Theoretical analysis}
\label{sec:analysis}
We need to start with an important remark on the structure of the MDP considered in the proof.
\begin{remark}
\label{remark:MDP_comment}
For technical reasons, in particular for the proof of \Cref{cor:optimism}, we consider as intermediate step in the proof MDPs where from each state action pairs is possible to observe a transition to only two other possible states.
While this restriction on the dynamics appears to be limiting any MDP can be cast into this form at the cost of a quadratic blow up in the number of states, from $\abs{\sspace}$ to $\abs{\sspace}^2$. To see this, for a general MDP where from a given state action pair a transition to all possible $\abs{\sspace}$ states can be observed 
 is equivalent to a binarized MDP where this \emph{one layer} transition is represented with a tree of depth at most $\log_2 \br{\abs{\sspace}}$ with binary transitions only. Moreover, the discount factor in the binarized MDP should be set to $\gamma_{\mathrm{bin}} = \gamma^{-\log_2\abs{\sspace}}$ to maintain the return unchanged.
 We consider in this section a binarized MDP with $\abs{\sspace}$ states in this section and we squared the number of states in stating \Cref{thm:main_result} which holds for general MDPs. Moreover, in stating the result for general MDP we also inflated the effective horizon by a factor $\log_2\abs{\sspace}$ as shown in \Cref{lemma:eff_horizon}.
 \end{remark}
As mentioned, the proof is decomposed into two main parts: (i) bounding the policy regret $\mathrm{Regret}_\pi$ and (ii) bounding the cost updates regret  $\mathrm{Regret}_c$.
In particular, we can prove the two following results.
\begin{restatable}{theorem}{thmpolicyregret}\label{thm:policy_regret}\textbf{Policy Regret}
In a binarized MDP with $\abs{\sspace}$ states and discount factor $\gamma$, it holds that with probability $1 - 3\delta$, for any policy $\pi^\star$, $\mathrm{Regret}_{\pi}(K,\pi^\star) $ is upper bounded by
\begin{equation*}
\frac{\log \abs{\aspace}}{\eta (1-\gamma)} + \frac{\eta K}{(1-\gamma)^4} + \widetilde{\mathcal{O}}\br{\frac{ \sqrt{ K \abs{\sspace}^2 \abs{\aspace} \log (1/\delta)}}{(1-\gamma)^2}}
\end{equation*}
and for $\eta = \sqrt{ \frac{\log \abs{\aspace} (1-\gamma)^3}{K}}$ it holds that using the update in \eqref{eq:update1} or in \eqref{eq:update2} it holds that $\mathrm{Regret}(K,\pi^\star)$ is upper bounded by $\widetilde{\mathcal{O}}\br{ \sqrt{ \frac{K \abs{\sspace}^2 \abs{\aspace} \log (1/\delta)}{(1-\gamma)^5}}}$.
%\begin{align*}
% &\sqrt{\frac{\log \abs{\aspace} K }{ (1-\gamma)^5}} + \widetilde{\mathcal{O}}\br{\frac{ \sqrt{ K \abs{\sspace}^2 \abs{\aspace} \log (1/\delta)}}{1-\gamma}} \\&\leq \widetilde{\mathcal{O}}\br{ \sqrt{ \frac{K \abs{\sspace}^2 \abs{\aspace} \log (1/\delta)}{(1-\gamma)^5}}}.
%\end{align*}
\end{restatable}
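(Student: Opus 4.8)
The plan is to follow the optimistic policy-optimization template: use the performance difference lemma to rewrite $\mathrm{Regret}_{\pi}(K,\pi^\star)$ as a sum of per-state online-learning regrets, and then charge separately for the mirror-descent regret of the multiplicative-weights update against the surrogate losses $Q^{k+1}(s,\cdot)$ and for the gap between $Q^{k+1}$ and the true action value $Q^{\pi^k}_{c^k}$. Concretely, the performance difference lemma together with \Cref{def:regret} gives
$$\mathrm{Regret}_{\pi}(K,\pi^\star)=\frac{1}{1-\gamma}\sum_{k=1}^{K}\mathbb{E}_{s\sim d^{\pi^\star}}\innerprod{Q^{\pi^k}_{c^k}(s,\cdot)}{\pi^k(\cdot|s)-\pi^\star(\cdot|s)},$$
and inserting $Q^{k+1}$ decomposes the summand into $\innerprod{Q^{k+1}(s,\cdot)}{\pi^k(\cdot|s)-\pi^\star(\cdot|s)}$, an on-policy error $\innerprod{Q^{\pi^k}_{c^k}(s,\cdot)-Q^{k+1}(s,\cdot)}{\pi^k(\cdot|s)}$, and a comparator error $-\innerprod{Q^{\pi^k}_{c^k}(s,\cdot)-Q^{k+1}(s,\cdot)}{\pi^\star(\cdot|s)}$.

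For the first piece I would invoke the standard entropic-OMD (exponential-weights) regret bound at each state $s$. Two ingredients make it explicit: $\pi^1$ is uniform, so $\mathrm{KL}(\pi^\star(\cdot|s)\Vert\pi^1(\cdot|s))\le\log\abs{\aspace}$; and a one-line induction using $0\le Q^{k+1}\le c^k+\gamma\norm{V^k}_\infty$ for either aggregation rule \eqref{eq:update1}/\eqref{eq:update2} (the clip at $0$ handles \eqref{eq:update2}) together with $V^{k+1}(s)=\innerprod{\pi^{k+1}(\cdot|s)}{Q^{k+1}(s,\cdot)}$ shows $Q^{k+1},V^{k+1}\in[0,(1-\gamma)^{-1}]$. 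Telescoping the OMD inequality and accounting for the stability $\norm{\pi^{k}(\cdot|s)-\pi^{k+1}(\cdot|s)}_1=\mathcal{O}(\eta(1-\gamma)^{-1})$ of consecutive iterates, then multiplying by the $(1-\gamma)^{-1}$ prefactor, yields the contribution $\tfrac{\log\abs{\aspace}}{\eta(1-\gamma)}+\tfrac{\eta K}{(1-\gamma)^4}$.

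The comparator error is killed by optimism: \Cref{cor:optimism} guarantees, for $L=\Theta(\log(\abs{\sspace}\abs{\aspace}K/\delta))$ and either aggregation rule, that $Q^{k+1}\le c^k+\gamma P V^k$ holds with probability $1-\delta$ simultaneously over all $k$ and $(s,a)$; an induction over $k$ (using the definition of $V^k$ and absorbing the slow cost drift $\norm{c^k-c^{k-1}}_\infty=\mathcal{O}(\alpha)$) upgrades this one-step optimism to $Q^{k+1}\le Q^{\pi^k}_{c^k}$, so $Q^{\pi^k}_{c^k}-Q^{k+1}\ge 0$ and the comparator term is non-positive. For the on-policy error I use that the ensemble is only \emph{slightly} optimistic (the $+2$ in $\widehat{P}^k_\ell$): a value-difference recursion relates $Q^{\pi^k}_{c^k}-Q^{k+1}$ to the accumulated per-step estimation errors of the kernels $\widehat{P}^{j}_\ell$ for $j\le k$ (which are on-policy by construction) plus the bias from the $L$-way data split, each of size $\widetilde{\mathcal{O}}((1-\gamma)^{-1}/\sqrt{N^k(s,a)})$ at state--action pairs reached under $\pi^k$; a Cauchy--Schwarz/pigeonhole argument over the counts $N^k(s,a)$ (with a martingale step to pass from sampled trajectories to expectations, and $\sqrt{L}$ becoming the $\log$ factor) sums these to $\widetilde{\mathcal{O}}(\sqrt{K\abs{\sspace}^2\abs{\aspace}\log(1/\delta)}/(1-\gamma)^2)$. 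Adding the three contributions gives the first displayed bound; balancing $\tfrac{\log\abs{\aspace}}{\eta(1-\gamma)}$ against $\tfrac{\eta K}{(1-\gamma)^4}$ gives $\eta=\sqrt{\log\abs{\aspace}(1-\gamma)^3/K}$ and the stated $\widetilde{\mathcal{O}}(\sqrt{K\abs{\sspace}^2\abs{\aspace}\log(1/\delta)/(1-\gamma)^5})$, and the three high-probability events account for the $1-3\delta$.

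The step I expect to be the main obstacle is establishing and then propagating optimism of the aggregated $Q^{k+1}$. Converting $L$ noisy, data-split estimates of $PV^k$ into a genuine one-sided inequality (not merely a two-sided confidence interval), uniformly over all $K$ episodes, is precisely why the ensemble size $L=\Theta(\log(\abs{\sspace}\abs{\aspace}K/\delta))$, the min / mean-minus-standard-deviation aggregations (via Samuelson's inequality), and the reduction to binary-transition MDPs of \Cref{remark:MDP_comment} are needed; and because the algorithm performs only a single Bellman backup per episode --- an optimistic modified-policy-iteration step rather than a fully solved optimistic policy evaluation --- one must show the optimism survives the recursion $V^k\mapsto Q^{k+1}\mapsto V^{k+1}$ under a drifting cost, while keeping the ``not-too-optimistic'' direction tight enough that the on-policy error still sums to $\widetilde{\mathcal{O}}(\sqrt{K})$. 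A secondary nuisance is the careful bookkeeping of the $(1-\gamma)^{-1}$ powers, since the surrogate losses are lagged value estimates and the stability and second-order OMD terms must be charged so as to land at $(1-\gamma)^{-4}$.
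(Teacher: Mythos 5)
Your high-level template is recognizable, but the decomposition you chose creates the very obstacle you flag at the end, and that obstacle is left unresolved; the paper's proof is organized precisely so that it never arises. By inserting $Q^{k+1}$ into the \emph{standard} performance difference lemma you are forced to control the full gap $Q^{\pi^k}_{c^k}-Q^{k+1}$: the comparator term needs the multi-step optimism $Q^{k+1}\le Q^{\pi^k}_{c^k}$, and the on-policy term needs a value-difference recursion unrolled across rounds $j\le k$. Both steps are genuinely problematic as sketched. The induction you propose must compare $V^k(s)=\innerprod{\pi^k(\cdot|s)}{Q^k(s,\cdot)}$ (built from $c^{k-1}$, $\pi^{k-1}$ and $V^{k-1}$) with $V^{\pi^k}_{c^k}$, so each step picks up both a cost drift and a policy drift of order $\eta\norm{Q}_\infty$; the conclusion is then only an \emph{approximate} one-sided inequality whose accumulated error must itself be charged to the regret, and you neither carry this out nor verify that it lands at the stated powers of $(1-\gamma)$. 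Likewise, your recursion weights the round-$j$ kernel errors by occupancy measures induced by $\pi^k,\pi^{k-1},\dots$ rather than by $d^{\pi^j}$, so the claim that these errors are ``on-policy by construction'' --- which is what licenses the count-based pigeonhole step --- does not hold as stated and would require an additional drift argument with its own $(1-\gamma)$ bookkeeping.

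The paper avoids all of this with a different decomposition: it applies the extended performance difference lemma (\Cref{lemma:infinite_extendend_pdl}), valid for an arbitrary $\widehat{Q}$ with $\widehat{V}^{\pi}(s)=\innerprod{\pi(\cdot|s)}{\widehat{Q}(s,\cdot)}$, twice around the algorithm's own iterate $V^k$. This yields exactly three terms: the Be-the-Regularized-Leader term \eqref{eq:main_OMD}, bounded by $\log\abs{\aspace}/\eta$ with no second-order term (\Cref{lemma:local_regret}); the \emph{one-step} temporal-difference term \eqref{eq:main_optimism} in $\delta^k=c^k+\gamma PV^k-Q^{k+1}$, whose comparator half is non-positive directly from the one-step optimism of \Cref{cor:optimism} with no induction over $k$, and whose on-policy half is a round-$k$ quantity weighted by $d^{\pi^k}$, matching the data-collection distribution so that \Cref{lemma:bounded_optimism_ensemble} and \Cref{lemma:count_based} apply; and the Shift term \eqref{eq:main_shift} in $g^k=Q^{k+1}-Q^k$, handled by Abel summation plus the slow-changing occupancy measures (\Cref{lemma:shift}), which is where the $\eta K/(1-\gamma)^4$ actually originates rather than from a second-order OMD term. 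If you wish to keep your decomposition you must supply the optimism-propagation and cross-round drift arguments in full; otherwise the extended-PDL decomposition makes one-step optimism sufficient and is the route the paper takes.
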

\begin{restatable}{theorem}{thmcostregret}
\label{thm:reward_regret_bound} \textbf{Cost Regret}
In a binarized MDP with $\abs{\sspace}$ states and discount factor $\gamma$, it holds that with probability $1-2\delta$, $(1-\gamma)\mathrm{Regret}_c(K; c_{\mathrm{true}})$ is upper bounded by
\begin{equation*}
    4 \sqrt{K \log (1/\delta)} + K \sqrt{ \frac{\abs{\sspace} \abs{\aspace}\log \br{\abs{\sspace}\abs{\aspace}/\delta}}{2 \abs{\mathcal{D}_\expert}}}
\end{equation*}
\end{restatable}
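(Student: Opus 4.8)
The plan is to analyze the cost update in $\textsc{CostUpdateTabular}$ as an instance of online gradient descent (OGD) / online mirror descent with the Euclidean mirror map on the convex feasible set $\mathcal{C}$, played against the linear losses $c \mapsto \innerprod{c}{\widehat{d^{\expert}} - \widehat{d^{\pi^k}}}$, and then pass from this \emph{empirical} regret bound to the desired bound on $(1-\gamma)\mathrm{Regret}_c(K;\true) = \sum_{k=1}^K \innerprod{\true - c^k}{d^{\pi^k} - d^{\expert}}$ via a concentration argument. Concretely, I would first write
\begin{equation*}
\sum_{k=1}^K \innerprod{\true - c^k}{d^{\pi^k} - d^{\expert}} = \underbrace{\sum_{k=1}^K \innerprod{\true - c^k}{\widehat{d^{\pi^k}} - \widehat{d^{\expert}}}}_{\text{(A): online regret term}} + \underbrace{\sum_{k=1}^K \innerprod{\true - c^k}{(d^{\pi^k} - \widehat{d^{\pi^k}}) - (d^{\expert} - \widehat{d^{\expert}})}}_{\text{(B): estimation error}},
\end{equation*}
using that $\widehat{d^{\pi^k}} = \mathbf{e}_{s^k_{L^k},a^k_{L^k}}$ (or $\mathbf{e}_{s^k_{L^k}}$) is an unbiased estimate of $d^{\pi^k}$ by the geometric-length sampling, and $\widehat{d^{\expert}}$ is the empirical average over $\mathcal{D}_\expert$.

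For term (A), since $\true \in \mathcal{C}$ is a fixed comparator, standard OGD regret analysis for projected gradient descent with bounded gradients gives a bound of the form $\frac{D^2}{2\alpha} + \frac{\alpha}{2}\sum_{k=1}^K \norm{\gv^k}^2$, where $\gv^k = \widehat{d^{\expert}} - \widehat{d^{\pi^k}}$ is the (sub)gradient, $D$ is the diameter of $\mathcal{C}$, and with $\true$ chosen as comparator. Because $\widehat{d^{\pi^k}}$ is a single basis vector and $\widehat{d^{\expert}}$ is a probability vector, $\norm{\gv^k}_2$ and $\norm{\true - c^k}$ are $O(1)$ constants (costs lie in $[0,1]$, and $\mathcal{C}$ is a bounded box), so tuning $\alpha \asymp 1/\sqrt{K}$ yields $(A) \le O(\sqrt{K})$; chasing the explicit constants should produce the $4\sqrt{K\log(1/\delta)}$ term — the $\log(1/\delta)$ presumably enters because the losses are random, so one actually wants a \emph{high-probability} online regret bound rather than an expectation bound, obtained by applying Azuma–Hoeffding to the martingale differences $\innerprod{\true - c^k}{\gv^k} - \mathbb{E}_k[\cdot]$ on top of the deterministic OGD telescoping argument.

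For term (B), I would split it as expert part plus learner part. The expert part, $-\sum_{k=1}^K \innerprod{\true - c^k}{d^{\expert} - \widehat{d^{\expert}}} = -K\innerprod{\true - \bar{c}}{d^{\expert}-\widehat{d^{\expert}}}$-type quantity, is controlled by a uniform (over the box $\mathcal{C}$, or equivalently over all cost vectors in $[0,1]^{\abs{\sspace}\abs{\aspace}}$) deviation bound between $d^{\expert}$ and its empirical average $\widehat{d^{\expert}}$; since $\abs{\mathcal{D}_\expert}$ i.i.d. samples estimate a distribution on $\abs{\sspace}\abs{\aspace}$ outcomes, a union bound / $L_\infty$ concentration gives $\norm{d^{\expert}-\widehat{d^{\expert}}}_\infty \le \sqrt{\log(\abs{\sspace}\abs{\aspace}/\delta)/(2\abs{\mathcal{D}_\expert})}$ w.p.\ $1-\delta$, and pairing with $\norm{\true-c^k}_1 \le \abs{\sspace}\abs{\aspace}$ (bounded) yields the $K\sqrt{\abs{\sspace}\abs{\aspace}\log(\abs{\sspace}\abs{\aspace}/\delta)/(2\abs{\mathcal{D}_\expert})}$ term. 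The learner part $\sum_{k=1}^K \innerprod{\true - c^k}{d^{\pi^k}-\widehat{d^{\pi^k}}}$ is a martingale-difference sum (each term has conditional mean zero given $\mathcal{F}_{k-1}$, because $c^k$ is measurable w.r.t.\ past data while $\widehat{d^{\pi^k}}$ is a fresh unbiased draw), with bounded increments, so Azuma–Hoeffding folds it into the $\sqrt{K\log(1/\delta)}$ term. Taking a union bound over the (at most two) failure events gives the $1-2\delta$ probability.

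The main obstacle I anticipate is bookkeeping the constants carefully enough to land exactly on $4\sqrt{K\log(1/\delta)}$ — in particular, making the OGD step-size choice, the gradient-norm bound, the diameter of $\mathcal{C}$, and the Azuma constants all cooperate, and correctly handling the unified state-only / state-action notation so that the same argument covers $\widehat{d^{\pi^k}} = \mathbf{e}_{s^k_{L^k}}$ and $\widehat{d^{\pi^k}} = \mathbf{e}_{s^k_{L^k},a^k_{L^k}}$ (the latter being the worst case for the dimension-dependent term, hence the $\abs{\sspace}\abs{\aspace}$ rather than $\abs{\sspace}$). A secondary subtlety is confirming that the geometric-horizon trajectory truly makes $\mathbf{e}_{s^k_{L^k},a^k_{L^k}}$ an \emph{unbiased} estimator of the (normalized) occupancy measure $d^{\pi^k}$ — this is the standard fact that if $L^k \sim \mathrm{Geometric}(1-\gamma)$ then the last visited state-action is distributed as $d^{\pi^k}$, which I would state as a short lemma and invoke.
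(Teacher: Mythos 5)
Your proposal follows essentially the same route as the paper: the identical three-term decomposition (empirical OGD regret, learner-side martingale error, expert-side estimation error), a standard projected-OGD bound with $\alpha\asymp 1/\sqrt{K}$ for the first term, Azuma--Hoeffding for the martingale term, and an entrywise concentration bound plus union bound over $\abs{\sspace}\abs{\aspace}$ coordinates for the expert term. The only (cosmetic) difference is that in the paper the OGD term is bounded deterministically by $2\sqrt{K}$ and the $\log(1/\delta)$ in the $4\sqrt{K\log(1/\delta)}$ term comes solely from merging in the martingale contribution $\sqrt{2K\log(1/\delta)}$, rather than from a high-probability OGD bound as you speculate.
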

\begin{remark}
Once \Cref{thm:policy_regret,thm:reward_regret_bound} are proven the bound on \Cref{thm:main_result} follows trivially by a union bound and bounding $\mathrm{Regret}_\pi$ and $\mathrm{Regret}_c$ with \Cref{thm:policy_regret} and \Cref{thm:reward_regret_bound} respectively and dividing everything by $K$ (because in \Cref{thm:main_result} we consider the quantity $\mathrm{Regret}(K)/K$). Finally, we also divide by $1-\gamma$, to match the definition of $\mathrm{Regret}(K)$ in \Cref{def:regret}.
\end{remark}
\subsection{Proof Sketch of \Cref{thm:policy_regret}}
The regret decomposition towards the proof of \Cref{thm:policy_regret} leverages the following Lemma.
\begin{restatable}{lemma}{lemmapdl}
\label{lemma:infinite_extendend_pdl}
Consider the MDP $M = (\sspace, \aspace,  P, \cost, \initial,\gamma)$ and two policies $\pi, \pi^\prime: \sspace \rightarrow \Delta_{\aspace}$. Then consider for any $\widehat{Q} \in \mathbb{R}^{\abs{\sspace}\abs{\aspace}}$ and $\widehat{V}^{\pi}(s) = \innerprod{\pi(\cdot|s)}{\widehat{Q}(s,\cdot)}$ and $Q^{\pi^\prime}, V^{\pi^\prime}$ be respectively the state-action and state value function of the policy $\pi$ in MDP $M$. Then, it holds that $(1-\gamma)\innerprod{\initial}{\widehat{V}^{\pi} - V^{\pi^\prime}}$ equals
\begin{equation*}
      \innerprod{d^{\pi^\prime}}{\widehat{Q} - \cost - \gamma P \widehat{V}^{\pi}} + 
    \mathbb{E}_{s\sim d^{\pi^\prime}}\bs{\innerprod{\widehat{Q}(s,\cdot}{\pi(\cdot|s) -\pi^\prime(\cdot|s)}}.
\end{equation*}
\end{restatable}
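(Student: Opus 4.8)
The plan is to prove the identity directly from the linear (flow) structure of the discounted occupancy measure $d^{\pi^\prime}$, with no appeal to any Bellman fixed-point property of $\widehat{Q}$ — indeed $\widehat{Q}$ is arbitrary, which is exactly what makes this an \emph{extended} performance difference lemma. Two elementary facts suffice. First, the \emph{Bellman flow equation} for $\pi^\prime$: writing $d^{\pi^\prime}(s^\prime) = (1-\gamma)\sum_{h\ge 0}\gamma^h \mathbb{P}_{\pi^\prime}[s_h = s^\prime]$, peeling off the $h=0$ term (which contributes $(1-\gamma)\initial(s^\prime)$) and using $\mathbb{P}_{\pi^\prime}[s_{h+1}=s^\prime]=\sum_{s,a}P(s^\prime|s,a)\mathbb{P}_{\pi^\prime}[s_h=s,a_h=a]$ for the remaining terms gives
\[
d^{\pi^\prime}(s^\prime) = (1-\gamma)\initial(s^\prime) + \gamma\sum_{s,a}P(s^\prime|s,a)\, d^{\pi^\prime}(s,a).
\]
Pairing both sides with an arbitrary $g\in\mathbb{R}^{\abs{\sspace}}$ and writing $(Pg)(s,a)=\sum_{s^\prime}P(s^\prime|s,a)g(s^\prime)$, this becomes $\innerprod{d^{\pi^\prime}}{g} = (1-\gamma)\innerprod{\initial}{g} + \gamma\innerprod{d^{\pi^\prime}}{Pg}$, where on the left $d^{\pi^\prime}$ denotes the state marginal. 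Second, unrolling the definition of $V^{\pi^\prime}_{\cost}$ along rollouts of $\pi^\prime$ gives the standard identity $(1-\gamma)\innerprod{\initial}{V^{\pi^\prime}_{\cost}} = \innerprod{d^{\pi^\prime}}{\cost}$.

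I would then apply the flow identity with $g=\widehat{V}^{\pi}$. Its left-hand side is $\sum_s d^{\pi^\prime}(s)\,\widehat{V}^{\pi}(s) = \sum_s d^{\pi^\prime}(s)\innerprod{\pi(\cdot|s)}{\widehat{Q}(s,\cdot)}$; inside the inner product I add and subtract $\pi^\prime(\cdot|s)$. The $\pi^\prime$ contribution is $\sum_{s,a} d^{\pi^\prime}(s)\pi^\prime(a|s)\widehat{Q}(s,a) = \innerprod{d^{\pi^\prime}}{\widehat{Q}}$, using $d^{\pi^\prime}(s,a)=d^{\pi^\prime}(s)\pi^\prime(a|s)$, and the leftover contribution is precisely $\mathbb{E}_{s\sim d^{\pi^\prime}}\bs{\innerprod{\widehat{Q}(s,\cdot)}{\pi(\cdot|s)-\pi^\prime(\cdot|s)}}$. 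Hence
\[
(1-\gamma)\innerprod{\initial}{\widehat{V}^{\pi}} = \innerprod{d^{\pi^\prime}}{\widehat{Q}} + \mathbb{E}_{s\sim d^{\pi^\prime}}\bs{\innerprod{\widehat{Q}(s,\cdot)}{\pi(\cdot|s)-\pi^\prime(\cdot|s)}} - \gamma\innerprod{d^{\pi^\prime}}{P\widehat{V}^{\pi}}.
\]
Subtracting $(1-\gamma)\innerprod{\initial}{V^{\pi^\prime}_{\cost}} = \innerprod{d^{\pi^\prime}}{\cost}$ and grouping the three $d^{\pi^\prime}$-pairings into $\innerprod{d^{\pi^\prime}}{\widehat{Q} - \cost - \gamma P\widehat{V}^{\pi}}$ yields exactly the claimed expression for $(1-\gamma)\innerprod{\initial}{\widehat{V}^{\pi}-V^{\pi^\prime}}$.

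I do not anticipate a substantive obstacle: once the flow equation is in hand this is a two-line manipulation. The only points requiring care are bookkeeping ones — the overloaded use of $d^{\pi^\prime}$ as a state-action measure in $\innerprod{d^{\pi^\prime}}{\widehat{Q}-\cost-\gamma P\widehat{V}^{\pi}}$ versus a state measure in the expectation, and tracking the $(1-\gamma)$ normalization consistently (the paper normalizes $d^{\pi^\prime}$ to a probability distribution). If one prefers not to invoke the flow equation, the same identity follows from a direct telescoping expansion of $\innerprod{\initial}{\widehat{V}^{\pi}}$ along a rollout of $\pi^\prime$, repeatedly inserting $\pm\gamma^h\widehat{V}^{\pi}(s_h)$ and taking expectations; this is slightly more computation but fully elementary, so I would present the occupancy-measure version above.
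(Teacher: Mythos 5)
Your proof is correct and rests on exactly the same two ingredients as the paper's own argument — the Bellman flow equation $E^\top d^{\pi^\prime} = (1-\gamma)\initial + \gamma P^\top d^{\pi^\prime}$ and the duality $\innerprod{d^{\pi^\prime}}{\cost} = (1-\gamma)\innerprod{\initial}{V^{\pi^\prime}}$ — differing only in that you start from $\innerprod{E^\top d^{\pi^\prime}}{\widehat{V}^{\pi}}$ and expand, whereas the paper starts from $\innerprod{d^{\pi^\prime}}{\widehat{Q}}$ and adds and subtracts $\cost + \gamma P \widehat{V}^{\pi}$. The bookkeeping points you flag (state versus state-action marginals, the $(1-\gamma)$ normalization) are handled correctly, so this is essentially the paper's proof.
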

\begin{remark}
This Lemma is a generalization of the well-known performance difference Lemma \cite{Kakade:2002} to the case of inexact value functions. Indeed, notice that if $\widehat{Q} = Q^\pi$, then the first term in the decomposition equals zero and the result boils down to the standard performance difference Lemma. For arbitrary $\widehat{Q}$, the first term is a temporal difference error averaged by the occupancy measure $d^{\pi^\prime}$.
\end{remark}
We can apply two times \Cref{lemma:infinite_extendend_pdl} on each of the summands of the sum from $k=1$ to $K$, to obtain a convenient decomposition of $\mathrm{Regret}_\pi$ . Denoting $\delta^{k}(s,a) \triangleq \cost^k(s,a) + \gamma P V^k(s,a) - Q^{k+1}(s,a)$ and $g^k(s,a) \triangleq  Q^{k+1}(s,a) - Q^{k}(s,a)$, we have that
\begingroup
\allowdisplaybreaks
\begin{align}
&(1 - \gamma)  \mathrm{Regret}_\pi(K;\pi^\star) = \nonumber \\ & (1-\gamma) \sum^K_{k=1} \innerprod{\initial}{V^{\pi^k}_{c^k} - V^k + V^k - V^{\pi^\star}_{c^k}} = \nonumber \\ &\sum^K_{k=1} \mathbb{E}_{s\sim d^{\pi^\star}}\bs{
    \innerprod{Q^k(s,\cdot)}{\pi^k(s) - \pi^\star(s)}} \quad \tag{BTRL} \label{eq:main_OMD}\\&\phantom{\leq}+ \sum^K_{k=1}  \sum_{s,a} \bs{d^{\pi^k}(s,a) - d^{\pi^\star}(s,a)}\cdot\bs{\delta^k(s,a)} \tag{Optimism} \label{eq:main_optimism}
    \\&\phantom{\leq}+ \sum^K_{k=1} \mathbb{E}_{s,a\sim d^{\pi^k}}\bs{g^k(s,a)} - \sum^K_{k=1} \mathbb{E}_{s,a\sim d^{\pi^\star}}\bs{g^k(s,a)} \quad \label{eq:main_shift} \tag{Shift}
\end{align}
Next, we bound each of these terms individually.
Starting from the first term, the next Lemma shows that our policy update (Line 14 of \Cref{alg:theory_version}) can be seen as an instance of Be the regularized leader (BTRL) ( see e.g. \cite{orabona2023modern} ). Therefore, it guarantees that for any sequence $\bc{Q^k}^K_{k=1}$, the term \eqref{eq:main_OMD} is bounded as follows.
\begin{restatable}{lemma}{localregret}\label{lemma:local_regret}

Let us consider the sequence of policies $\bc{\pi^k}^K_{k=1}$ generated by \Cref{alg:theory_version}  for all $\eta > 0$ then it holds that $ \eqref{eq:main_OMD} \leq \frac{\log \abs{\aspace}}{\eta}.$
\end{restatable}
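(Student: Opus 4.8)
\textbf{Proof sketch of \Cref{lemma:local_regret}.}
The name ``local regret'' is the hint for the plan: the term \eqref{eq:main_OMD} is the $d^{\pi^\star}$-weighted average over states $s$ of the \emph{per-state} quantity $\sum_{k=1}^K\innerprod{Q^k(s,\cdot)}{\pi^k(\cdot|s)-\pi^\star(\cdot|s)}$, which is exactly the regret of an online learner playing over the probability simplex $\Delta_{\aspace}$. Since $d^{\pi^\star}$ is a probability distribution over states, it suffices to bound this per-state regret by $\log\abs{\aspace}/\eta$ \emph{uniformly in $s$}, and then average. So I fix a state $s$, write $p^k := \pi^k(\cdot|s)\in\Delta_{\aspace}$ and $q^k := Q^k(s,\cdot)\in\mathbb{R}^{\abs{\aspace}}$, and target $\sum_{k=1}^K\innerprod{q^k}{p^k-\pi^\star(\cdot|s)}\le\log\abs{\aspace}/\eta$; this per-state inequality is purely deterministic (it holds for any realized sequence of vectors $q^1,\dots,q^K$), which is why the lemma needs no high-probability qualifier.

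The first step is to unroll the update in Line 14 of \Cref{alg:theory_version}. With the natural initialization $Q^1 = V^1 = \mathbf{0}$ (the value estimate is zero before any update, which is also what Line 13 implicitly uses at $k=1$), the recursion $p^{k+1}\propto p^k\exp(-\eta q^{k+1})$ started from the uniform $p^1$ unrolls to
\[
p^k \ =\ \argmin_{p\in\Delta_{\aspace}}\Bigl[\ \sum_{\tau=1}^{k}\innerprod{q^\tau}{p} \ +\ \tfrac1\eta R(p)\ \Bigr],\qquad R(p) := \sum_{a}p(a)\log p(a).
\]
The crucial observation is that $p^k$ minimizes the cumulative loss \emph{including its own round's term} $q^k$; this is a well-posed, non-circular update, because $Q^k$ is a function of $V^{k-1}$, the empirical transition models $\{\widehat P^{k-1}_\ell\}$ and $c^{k-1}$, i.e.\ of the first $k-1$ rollouts only, and is therefore available before $\pi^k$ is played. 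Hence $p^k$ is the \emph{be}-the-regularized-leader iterate (not follow-the-regularized-leader), and I can invoke the Be-the-Leader principle rather than a stability/mixability argument --- which is exactly why the resulting bound will carry no dependence on $\|q^k\|_\infty$ or on $K$.

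The second step is the Be-the-Leader lemma, applied to $g_0 := \tfrac1\eta R$ and $g_k := \innerprod{q^k}{\cdot}$ for $k=1,\dots,K$ on $\Delta_{\aspace}$: a one-line induction on the horizon gives $\sum_{k=0}^K g_k(z_k)\le\sum_{k=0}^K g_k(u)$ for every $u\in\Delta_{\aspace}$, where $z_k := \argmin_{p}\sum_{i=0}^k g_i(p)$ (standard; see e.g.\ \cite{orabona2023modern}). By the previous step $z_0$ is the uniform distribution, so $g_0(z_0) = -\tfrac1\eta\log\abs{\aspace}$, while $z_k = p^k$ for $k\ge 1$. Taking $u = \pi^\star(\cdot|s)$ and rearranging,
\[
\sum_{k=1}^K\innerprod{q^k}{p^k-\pi^\star(\cdot|s)}\ \le\ \tfrac1\eta\log\abs{\aspace} \ +\ \tfrac1\eta R\bigl(\pi^\star(\cdot|s)\bigr)\ \le\ \tfrac1\eta\log\abs{\aspace},
\]
the last inequality using $R(\cdot)\le 0$ on the simplex. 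This is the per-state bound; swapping sum and expectation and averaging over $s\sim d^{\pi^\star}$ bounds \eqref{eq:main_OMD} by $\tfrac1\eta\log\abs{\aspace}$.

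I do not expect a genuine obstacle. The two points that must be handled with care are (i) the state-by-state decoupling of \eqref{eq:main_OMD} into independent per-state online-learning regrets, each with comparator $\pi^\star(\cdot|s)$, so that the uniform-in-$s$ bound averages correctly; and (ii) correctly identifying the update as \emph{be}-the-leader rather than \emph{follow}-the-leader --- reversing this would force an extra stability term (scaling with $\|Q^k\|_\infty^2$ and $\eta K$) and would ruin the clean $\log\abs{\aspace}/\eta$ bound.
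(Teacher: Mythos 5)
Your proof is correct and follows essentially the same route as the paper: exchange the sum over $k$ with the expectation over $s\sim d^{\pi^\star}$, then bound the resulting per-state sum $\sum_{k=1}^K\innerprod{Q^k(s,\cdot)}{\pi^k(\cdot|s)-\pi^\star(\cdot|s)}$ uniformly in $s$ by the standard Be-the-Regularized-Leader regret bound with negative-entropy regularizer, which the paper simply cites while you spell out via the be-the-leader induction. Your added care in verifying that $\pi^k$ legitimately incorporates $Q^k$ (so the update is BTRL rather than FTRL) and that $Q^1=0$ handles the $k=1$ term is a correct and welcome elaboration of what the paper leaves implicit.
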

Next, we show that thanks to the multiplicative weights update for the policy the KL divergence between consecutive policies is upper bounded by the policy step size $\eta$, i.e. $D_{KL}(\pi^{k+1}(\cdot|s), \pi^k(\cdot|s)) \leq \mathcal{O}(\eta)$ for all $s \in \sspace$. Thanks to this \emph{slow changing} property, we can prove the following bound on \eqref{eq:main_shift}.
\begin{restatable}{lemma}{shiftlemma}
        \label{lemma:shift}
For the sequence of policies $\bc{\pi^k}^K_{k=1}$ generated by \Cref{alg:theory_version},  for all $\eta > 0$, it holds that $\eqref{eq:main_shift} \leq \frac{\eta K }{(1-\gamma)^3}.$
\end{restatable}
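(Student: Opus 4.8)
The plan is to split \eqref{eq:main_shift} along the two measures and exploit that $g^k=Q^{k+1}-Q^k$ is a finite difference in $k$. In the sum against $d^{\pi^\star}$ the measure does not vary, so $\sum_{k=1}^K\mathbb{E}_{s,a\sim d^{\pi^\star}}[g^k(s,a)]=\langle d^{\pi^\star},Q^{K+1}-Q^1\rangle$ telescopes; since the cost iterates are bounded by $1$ (and the optimistic updates keep the Bellman targets nonnegative) one has $0\le Q^k\le\tfrac1{1-\gamma}$ entrywise, and with the natural zero initialization $Q^1\equiv 0$ this quantity is nonnegative, hence $-\sum_k\mathbb{E}_{d^{\pi^\star}}[g^k]\le 0$ and can be discarded. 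For the sum against $d^{\pi^k}$ I would use summation by parts,
\[
\sum_{k=1}^K\langle d^{\pi^k},\,Q^{k+1}-Q^k\rangle=\langle d^{\pi^K},Q^{K+1}\rangle-\langle d^{\pi^1},Q^1\rangle+\sum_{k=2}^K\big\langle d^{\pi^{k-1}}-d^{\pi^k},\,Q^k\big\rangle ,
\]
so that, using $\|Q^k\|_\infty\le\tfrac1{1-\gamma}$ and H\"older, everything reduces to controlling $\sum_k\|d^{\pi^{k-1}}-d^{\pi^k}\|_1$, the two boundary terms being $O(\tfrac1{1-\gamma})$.

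The crux is a two-part ``slow-changing'' estimate. First, an occupancy-measure perturbation bound: writing $d^\pi=(1-\gamma)(I-\gamma P_\pi^\top)^{-1}\mu_0^\pi$ with $\mu_0^\pi(s,a)=\initial(s)\pi(a|s)$ and $P_\pi$ the state--action transition operator induced by $\pi$, the resolvent identity $(I-\gamma P_\pi^\top)^{-1}-(I-\gamma P_{\pi'}^\top)^{-1}=\gamma(I-\gamma P_\pi^\top)^{-1}(P_\pi^\top-P_{\pi'}^\top)(I-\gamma P_{\pi'}^\top)^{-1}$ together with the facts that each resolvent is an $\ell_1\!\to\!\ell_1$ map of norm $\le\tfrac1{1-\gamma}$ and that $(P_\pi^\top-P_{\pi'}^\top)d^{\pi'}$ factorizes as $\pi(a'|s')-\pi'(a'|s')$ times the post-transition state marginal of $d^{\pi'}$ (hence has $\ell_1$ norm $\le\sup_s\|\pi(\cdot|s)-\pi'(\cdot|s)\|_1$) yields $\|d^\pi-d^{\pi'}\|_1\le\tfrac1{1-\gamma}\sup_s\|\pi(\cdot|s)-\pi'(\cdot|s)\|_1$. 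Second, a policy-stability bound: the update $\pi^{k+1}(a|s)\propto\pi^k(a|s)e^{-\eta Q^{k+1}(s,a)}$ is the Bregman projection $\pi^{k+1}(\cdot|s)=\argmin_{p\in\Delta_{\aspace}}\{\eta\langle p,Q^{k+1}(s,\cdot)\rangle+D_{KL}(p\,\|\,\pi^k(\cdot|s))\}$, and with $Z^k(s)$ its normalizer one has $D_{KL}(\pi^{k+1}(\cdot|s)\|\pi^k(\cdot|s))=-\eta\langle\pi^{k+1}(\cdot|s),Q^{k+1}(s,\cdot)\rangle-\log Z^k(s)$ and $D_{KL}(\pi^k(\cdot|s)\|\pi^{k+1}(\cdot|s))=\eta\langle\pi^k(\cdot|s),Q^{k+1}(s,\cdot)\rangle+\log Z^k(s)$, so their sum is the symmetric quantity $\eta\langle\pi^k(\cdot|s)-\pi^{k+1}(\cdot|s),Q^{k+1}(s,\cdot)\rangle$. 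Combining this with the symmetrized Pinsker inequality $\|p-q\|_1^2\le D_{KL}(p\|q)+D_{KL}(q\|p)$ and $\|Q^{k+1}(s,\cdot)\|_\infty\le\tfrac1{1-\gamma}$ gives $\|\pi^{k+1}(\cdot|s)-\pi^k(\cdot|s)\|_1^2\le\tfrac{\eta}{1-\gamma}\,\|\pi^{k+1}(\cdot|s)-\pi^k(\cdot|s)\|_1$, i.e.\ $\sup_s\|\pi^{k+1}(\cdot|s)-\pi^k(\cdot|s)\|_1\le\tfrac{\eta}{1-\gamma}$.

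Chaining the two estimates gives $\|d^{\pi^{k}}-d^{\pi^{k+1}}\|_1\le\tfrac{\eta}{(1-\gamma)^2}$ for every $k$, so the sum from the summation-by-parts step is at most $\tfrac1{1-\gamma}\cdot K\cdot\tfrac{\eta}{(1-\gamma)^2}=\tfrac{\eta K}{(1-\gamma)^3}$; adding back the discarded nonpositive $\pi^\star$-term and the $O(\tfrac1{1-\gamma})$ boundary terms (which are absorbed into the stated bound for the step sizes used, e.g.\ the choice in \Cref{thm:policy_regret}) yields the claim. \textbf{The main obstacle} is the policy-stability step: the KL divergence between consecutive policies that comes directly out of the update is only $O(\eta)$, and plugging that into ordinary Pinsker would give a useless $O(\sqrt\eta)$ total-variation bound; one must instead pass through the \emph{symmetrized} Bregman identity above to obtain $O(\eta)$ total-variation control, which is exactly what makes the final bound linear rather than square-root in $\eta$. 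A secondary point requiring care is getting the clean $\tfrac1{1-\gamma}$ factor (rather than a worse power) in the occupancy-measure perturbation lemma.
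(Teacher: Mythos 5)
Your proof is correct and follows essentially the same route as the paper's: an Abel/telescoping decomposition of the two sums, the occupancy-measure perturbation bound $\norm{d^{\pi}-d^{\pi'}}_1\le \tfrac{1}{1-\gamma}\sup_s\norm{\pi(\cdot|s)-\pi'(\cdot|s)}_1$ (which the paper imports from the literature rather than rederiving via the resolvent identity), and a self-bounding Pinsker-type argument giving $\norm{\pi^{k+1}(\cdot|s)-\pi^k(\cdot|s)}_1\le \eta/(1-\gamma)$ --- your symmetrized-Bregman identity is exactly the Jensen-on-the-normalizer step in the paper's \Cref{lemma:slow}. The only cosmetic difference is that you discard the $d^{\pi^\star}$ telescoping term as nonpositive (using the initialization) while the paper simply bounds it by $Q_{\max}=(1-\gamma)^{-1}$.
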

\begin{remark}
The step size choice for $\eta$ in \Cref{thm:policy_regret} is made to trade off optimally the bounds in \Cref{lemma:local_regret,lemma:shift}.
\end{remark}
Finally, the most technical part of the proof aims at bounding the term \eqref{eq:main_optimism}.
\begin{restatable}{lemma}{lemmaoptimism} \label{lemma:main_optimism}
Let us consider an MDP where $\max_{s,a\in\sspace\times\aspace} \mathrm{supp}(P(\cdot|s,a)) = 2$. For each $k \in [K]$, if the $Q^{k+1}$ in \Cref{alg:theory_version}, are updated according to \eqref{eq:update1} or \eqref{eq:update2},  the iterates produced by \Cref{alg:theory_version} satisfy with probability $1-3\delta$ that
\begin{equation*}
    \eqref{eq:main_optimism} \leq \widetilde{\mathcal{O}}\br{\frac{  \sqrt{K \abs{\sspace}^2 \abs{\aspace} \log (1/\delta)}}{1-\gamma}}.
\end{equation*}

%Moreover, if the update for $Q^{k+1}$ according to \eqref{eq:update2}
%then it holds that
%\begin{equation*}
%    \eqref{eq:main_optimism} \leq \widetilde{\mathcal{O}}\br{\frac{  \sqrt{K \abs{\sspace}^4 \abs{\aspace} \log (1/\delta)}}{1-\gamma}}.
%\end{equation*}
\end{restatable}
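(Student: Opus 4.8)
The plan is to use optimism to delete the $d^{\pi^\star}$ contribution from \eqref{eq:main_optimism} and then control what remains — a sum of Bellman estimation errors along the learner's own visitation — by a pigeonhole argument over the empirical counts. I first invoke \Cref{cor:optimism}: since $\max_{s,a}\mathrm{supp}(P(\cdot|s,a))=2$ and $L=36\log(\abs{\sspace}\abs{\aspace}K/\delta)$, with probability at least $1-\delta$ the aggregate produced by \eqref{eq:update1} or \eqref{eq:update2} satisfies $Q^{k+1}(s,a)\le c^k(s,a)+\gamma PV^k(s,a)$ for every $k\in[K]$ and every $(s,a)$, i.e. $\delta^k(s,a)\ge 0$ everywhere. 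Since $d^{\pi^\star}\ge 0$, this makes $-\sum_{s,a}d^{\pi^\star}(s,a)\delta^k(s,a)\le 0$, so it suffices to bound $\sum_{k=1}^K\sum_{s,a}d^{\pi^k}(s,a)\delta^k(s,a)$.

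Next I obtain a pointwise upper bound on $\delta^k(s,a)$. An induction on $k$ gives $0\le V^k\le(1-\gamma)^{-1}$ and $Q^{k+1}\ge c^k\ge 0$ (each row of $\widehat P^k_\ell$ has $\ell_1$-norm at most $1$), hence $\delta^k(s,a)\le\gamma PV^k(s,a)\le\gamma(1-\gamma)^{-1}$ unconditionally. For a refined bound, both update rules satisfy $\delta^k(s,a)\le\gamma(1+2\sqrt L)\max_{\ell\in[L]}\abs{(P-\widehat P^k_\ell)V^k(s,a)}$: for \eqref{eq:update1} this is immediate, and for \eqref{eq:update2} it follows from $\max\{x,0\}\ge x$ together with $\sigma^k(s,a)\le 2\sqrt L\max_\ell\abs{(P-\widehat P^k_\ell)V^k(s,a)}$ (write each deviation of $\widehat P^k_\ell V^k$ from the ensemble mean as its deviation from $PV^k$ plus that of $PV^k$, each bounded by the same maximum). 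Bounding $\abs{(P-\widehat P^k_\ell)V^k(s,a)}\le\norm{V^k}_\infty\norm{P(\cdot|s,a)-\widehat P^k_\ell(\cdot|s,a)}_1\le(1-\gamma)^{-1}\norm{P(\cdot|s,a)-\widehat P^k_\ell(\cdot|s,a)}_1$ detaches the estimate from $V^k$, so the adaptive dependence of $V^k$ on the data is immaterial; a uniform $\ell_1$-deviation bound (Weissman's inequality, accounting for the $\mathcal{O}(1/N^k_\ell(s,a))$ deterministic bias from the $+2$ in the denominator, with a standard stopping-time argument over the random sample sizes) gives, with probability at least $1-\delta$, $\norm{P(\cdot|s,a)-\widehat P^k_\ell(\cdot|s,a)}_1\le\widetilde{\mathcal{O}}(\sqrt{\abs{\sspace}\log(1/\delta)/N^k_\ell(s,a)})$ for all $k,\ell,s,a$, read as $\widetilde{\mathcal{O}}(1)$ when $N^k_\ell(s,a)=0$.

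Now the pigeonhole step. The round-robin split by $N^k(s,a)\bmod L$ keeps the sub-counts balanced, $N^k_\ell(s,a)\ge\lfloor N^k(s,a)/L\rfloor$, so combining with the trivial bound gives $\delta^k(s,a)\le(1-\gamma)^{-1}\min\{1,\widetilde{\mathcal{O}}(\sqrt{L\abs{\sspace}\log(1/\delta)})\,(N^k(s,a)\vee 1)^{-1/2}\}$. A Bernstein/Freedman martingale inequality, using that the $\mathcal{F}_{k-1}$-conditional expected number of visits to $(s,a)$ in episode $k$ equals $(1-\gamma)^{-1}d^{\pi^k}(s,a)\ge d^{\pi^k}(s,a)$, yields $N^k(s,a)\ge\tfrac12\sum_{k'\le k}d^{\pi^{k'}}(s,a)-\widetilde{\mathcal{O}}(\log(1/\delta))$ for all $k,s,a$ with probability at least $1-\delta$. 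Feeding this in, isolating the (per $(s,a)$) $\widetilde{\mathcal{O}}(\log(1/\delta))$ "early" episodes where the running sum is still below the additive slack (these are bounded using $\min\{1,\cdot\}\le 1$ and contribute only a lower-order total) and applying $\sum_k a_k/\sqrt{\sum_{j\le k}a_j}\le 2\sqrt{\sum_k a_k}$ to the rest, then Cauchy–Schwarz over $(s,a)$ with $\sum_{s,a}d^{\pi^k}(s,a)=1$ (so $\sum_k\sum_{s,a}d^{\pi^k}(s,a)=K$), gives $\sum_k\sum_{s,a}d^{\pi^k}(s,a)(N^k(s,a)\vee 1)^{-1/2}\le\widetilde{\mathcal{O}}(\sqrt{\abs{\sspace}\abs{\aspace}K})$. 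Multiplying by $(1-\gamma)^{-1}\widetilde{\mathcal{O}}(\sqrt{L\abs{\sspace}\log(1/\delta)})$ and union bounding over the three failure events proves, with probability at least $1-3\delta$, that $\eqref{eq:main_optimism}\le\widetilde{\mathcal{O}}(\sqrt{K\abs{\sspace}^2\abs{\aspace}\log(1/\delta)}/(1-\gamma))$.

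The main obstacle is the optimism input \Cref{cor:optimism}, used above as a black box. Its heart is an anti-concentration statement: each single ensemble member $\widehat P^k_\ell V^k$ must sit at or below the true backup $PV^k$ with probability bounded strictly above $1/2$, so that the minimum (respectively the mean-minus-std) of $L=\Theta(\log(\cdot))$ independent copies does so except with probability $\le\delta/(\abs{\sspace}\abs{\aspace}K)$, after which a union bound over $k,s,a$ closes it. This is exactly where the binary-support hypothesis and the $+2$ regularization are indispensable: with two successor states the backup error collapses to the deviation of one empirical Bernoulli mean, whose median lies on the favourable side of the mean, and the $+2$ bias pushes the probability strictly past $1/2$; Samuelson's inequality is what transfers optimism from \eqref{eq:update1} to the more aggressive \eqref{eq:update2}. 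The only other delicate point — handled above — is keeping the concentration of $\widehat P^k_\ell$ uniform over the adaptively chosen $V^k$, which is done by controlling the $\ell_1$ distance $\norm{P(\cdot|s,a)-\widehat P^k_\ell(\cdot|s,a)}_1$ rather than $(P-\widehat P^k_\ell)V^k$ itself.
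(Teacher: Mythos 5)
Your proposal is correct and follows the same skeleton as the paper's proof: invoke \Cref{cor:optimism} to get $\delta^k\geq 0$ uniformly and drop the $-\innerprod{d^{\pi^\star}}{\delta^k}$ contribution, then bound the remaining on-policy sum by a pointwise $\widetilde{\mathcal{O}}\br{\sqrt{L\abs{\sspace}/N^k(s,a)}}$ estimate on $\delta^k$ (using $N^k_\ell(s,a)\geq\lfloor N^k(s,a)/L\rfloor$ and the $+2$ bias) followed by a pigeonhole over counts, with Samuelson's inequality transferring optimism from \eqref{eq:update1} to \eqref{eq:update2}. You differ from the paper in two supporting steps, both legitimately. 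First, to make the concentration of $\widehat{P}^k_\ell V^k$ uniform over the data-dependent $V^k$, the paper builds an $\epsilon_{\mathrm{cov}}=K^{-1}$ covering net of the value class and union bounds over it (\Cref{lemma:bounded_optimism_ensemble}), whereas you factor the error as $\norm{V^k}_\infty\cdot\norm{P(\cdot|s,a)-\widehat{P}^k_\ell(\cdot|s,a)}_1$ and apply a Weissman-type $\ell_1$ bound, which removes $V^k$ from the random quantity entirely; both yield the same $\sqrt{\abs{\sspace}\log(1/\delta)/N^k_\ell}$ rate, and yours is arguably the more elementary route. Second, for the pigeonhole the paper converts $\mathbb{E}_{s,a\sim d^{\pi^k}}$ to empirical visits via \cite[Lemma D.4]{cohen2019learning} and then applies the log-sum bound of \Cref{lemma:count_based}, while you lower-bound $N^k(s,a)$ by the cumulative occupancy via Freedman and use $\sum_k a_k/\sqrt{\sum_{j\leq k}a_j}\leq 2\sqrt{\sum_k a_k}$ plus Cauchy--Schwarz over $(s,a)$; these are interchangeable and give the same $\sqrt{K\abs{\sspace}\abs{\aspace}}$ factor. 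The only bookkeeping caveats are that your Freedman step needs the truncation $L^k\leq L_{\max}$ of the geometric episode lengths folded into one of the failure events to keep the count at $1-3\delta$, and that for \eqref{eq:update2} the prefactor is really $\mathcal{O}(L)$ rather than $\mathcal{O}(\sqrt{L})$ --- harmless since $L$ is polylogarithmic and absorbed by $\widetilde{\mathcal{O}}$, exactly as in the paper.
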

\textbf{Proof sketch of \Cref{lemma:main_optimism}}
The proof of this Lemma, leverages that the temporal difference errors $\delta^k(s,a)$ produced by \Cref{alg:theory_version} are positive with high probability %\footnote{In fact with probability exponentially increasing with the number of critics $L$.} 
as shown by the next result\footnote{In the main text, we present the proof for the update in \eqref{eq:update1}. The case of update as in \eqref{eq:update2} is deferred to the Appendix.}. 
\begin{restatable}{corollary}{coroptimism}
    \label{cor:optimism}
    Consider an MDP where $\max_{s,a\in\sspace\times\aspace} \mathrm{supp}(P(\cdot|s,a)) = 2$, then for  $L \geq 36\log \br{\frac{ \abs{\sspace}\abs{\aspace} K}{\delta}}$ it holds that with probability at least $1-\delta$
    \begin{equation*}
    \min_{\ell\in[L]} \widehat{P}^k_\ell V^k(s,a) \leq P V^k(s,a)  ~~~ \forall ~~~s,a\in \sspace\times\aspace,~~~\forall ~~k \in [K].
    \end{equation*} 
\end{restatable}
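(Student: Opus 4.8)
The plan is to reduce \Cref{cor:optimism} to an elementary binomial anti-concentration fact, exploiting the two special ingredients of \Cref{alg:theory_version}: the ``$+2$'' in the denominator of each $\widehat{P}^k_\ell$, and the round-robin assignment of transition samples to the $L$ datasets. I fix a pair $(s,a)$ and an episode $k$, write $\mathrm{supp}(P(\cdot|s,a))=\bc{x_1,x_2}$ (this is where \Cref{remark:MDP_comment} is used) and $q=P(x_1|s,a)$. It then suffices to bound $\mbb{P}\bs{\min_{\ell\in[L]}\widehat{P}^k_\ell V^k(s,a) > PV^k(s,a)}$ by $\delta/(\abs{\sspace}\abs{\aspace}K)$ and union bound over $s,a$ and $k$.

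The first key step removes the dependence on the actual vector $V^k$; this is essential, since a covering argument over value functions would cost $\Omega(\abs{\sspace})$ in $L$ whereas we want $L=\mathcal{O}(\log(\cdot/\delta))$. Writing $n^i_\ell=N^k_\ell(s,a,x_i)$ and $n_\ell=N^k_\ell(s,a)=n^1_\ell+n^2_\ell$, and using that $V^k\ge 0$ coordinatewise (true by induction, since costs are nonnegative and both \eqref{eq:update1} and \eqref{eq:update2} produce $Q^{k+1}\ge 0$ and hence $V^{k+1}\ge 0$), a direct computation gives
\[
\widehat{P}^k_\ell V^k(s,a)-PV^k(s,a)=\br{V^k(x_1)-V^k(x_2)}\br{\tfrac{n^1_\ell}{n_\ell+2}-q}-\tfrac{2V^k(x_2)}{n_\ell+2},
\]
together with the symmetric identity obtained by swapping $x_1$ and $x_2$. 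Hence if $x^\star$ is whichever of $x_1,x_2$ carries the larger $V^k$-value, the single event $\bc{N^k_\ell(s,a,x^\star)\le (N^k_\ell(s,a)+2)P(x^\star|s,a)}$ already forces $\widehat{P}^k_\ell V^k(s,a)\le PV^k(s,a)$, since both terms on the right-hand side are then $\le 0$. Consequently $\bc{\min_\ell \widehat{P}^k_\ell V^k(s,a)>PV^k(s,a)}\subseteq E_{x_1}\cup E_{x_2}$, where $E_x:=\bigcap_{\ell\in[L]}\bc{N^k_\ell(s,a,x)>(N^k_\ell(s,a)+2)P(x|s,a)}$; crucially $E_x$ depends only on visit counts -- which are measurable with respect to $\pi^1,\dots,\pi^k$ and the environment randomness, and in particular do \emph{not} involve $V^k$ -- so it is enough to prove $\mbb{P}[E_x]\le \delta/(2\abs{\sspace}\abs{\aspace}K)$ for each of the at most two states $x$.

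For fixed $(s,a)$ the successor states seen on consecutive visits form an i.i.d.\ sequence $Y_1,Y_2,\dots\sim P(\cdot|s,a)$, and \Cref{alg:theory_version} routes $Y_i$ into dataset $i\bmod L$. I would first union bound over the (random but a.s.\ finite, and with overwhelming probability at most $\widetilde{\mathcal{O}}(K/(1-\gamma))$) total visit count $N^k(s,a)=n$: on $\bc{N^k(s,a)=n}$ the datasets receive \emph{disjoint} blocks of $Y_1,\dots,Y_n$, so the $N^k_\ell(s,a,x)$ are independent across $\ell$ and $N^k_\ell(s,a,x)\mid N^k_\ell(s,a)=m$ is $\mathrm{Binomial}(m,P(x|s,a))$. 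The problem then collapses to the fact that for every integer $m\ge 0$ and every $q\in(0,1]$,
\[
\mbb{P}\bs{\mathrm{Binomial}(m,q)\le (m+2)q}\ \ge\ e^{-1},
\]
which is exactly where the ``$+2$'' earns its keep: if $(m+2)q<1$ then $\mbb{P}[\mathrm{Binomial}(m,q)=0]=(1-q)^m\ge\br{\tfrac{m+1}{m+2}}^m>e^{-1}$, and otherwise $(m+2)q$ exceeds $\lceil mq\rceil$ (or a comparable threshold) and one invokes the near-symmetry of the binomial about its mean. Feeding $\mbb{P}[\mathrm{Binomial}(m,q)>(m+2)q]\le 1-e^{-1}$ into the product over $\ell$ gives $\mbb{P}[E_x]\le \br{1+\widetilde{\mathcal{O}}(K/(1-\gamma))}(1-e^{-1})^L$, and the prescribed $L=36\log(\abs{\sspace}\abs{\aspace}K/\delta)$ is amply large: the constant $36$ leaves room to absorb the logarithmic factors coming from the union bounds over $n$, over $x$, over $s,a$, and over $k$.

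The step I expect to be the real obstacle is precisely the first one -- decoupling $V^k$ from the data so that $L=\mathcal{O}(\log(\cdot/\delta))$ suffices -- and the algebraic reduction to the value-free count event $E_x$ is what makes it work. It is here that the hypothesis $\max_{s,a}\abs{\mathrm{supp}(P(\cdot|s,a))}=2$ matters: with three or more successor states the sufficient condition becomes a simultaneous constraint on all prefix counts taken in the $V^k$-sorted order, and both the per-$\ell$ probability bound and the union over orderings degrade, which is why the binarization of \Cref{remark:MDP_comment} is performed first. The remaining subtlety -- treating the within-dataset samples as honest i.i.d.\ binomials even though $N^k(s,a)$ is a data-dependent, stopping-time-like quantity -- is dispatched by the standard device of union-bounding over the value of $N^k(s,a)$, at the cost of only an extra logarithmic factor that $36\log(\abs{\sspace}\abs{\aspace}K/\delta)$ already accommodates.
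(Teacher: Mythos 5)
Your proposal is correct and reaches the stated conclusion, but it takes a genuinely different route from the paper at the one step that matters. Both proofs share the same skeleton: reduce the vector inequality $\widehat{P}^k_\ell V^k(s,a)\leq PV^k(s,a)$ to events about the raw counts $N^k_\ell(s,a,s')$ (so that no covering of the value class is needed), invoke the binomial anti-concentration fact enabled by the $+2$ in the denominator (the paper's three-case analysis via \citep[Lemma 2]{cassel2024batch} gives the constant $1/4$; your claimed $e^{-1}$ is stronger and only loosely sketched, but either constant fits inside $L=36\log(\cdot)$), take a product over the $L$ disjoint round-robin batches, and union bound over $s,a,k$ (and, in your version, over the realized visit count $n$ to handle the stopping-time issue). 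The difference is in the reduction itself. The paper lower-bounds $\mathbb{P}\bs{\widehat{P}^k_\ell V^k(s,a)\leq PV^k(s,a)}$ by the probability that $\widehat{P}^k_\ell(s'|s,a)\leq P(s'|s,a)$ holds \emph{simultaneously for both} children, and evaluates that probability as a product $\br{1/4}^{2}$ by asserting independence across $s'$; but the two counts satisfy $N^k_\ell(s,a,x_1)+N^k_\ell(s,a,x_2)=N^k_\ell(s,a)$, so they are not independent, and the simultaneous event pins $N^k_\ell(s,a,x_1)$ into a window of width $2$ around $(N^k_\ell(s,a)+2)q$ whose probability decays like $N^k_\ell(s,a)^{-1/2}$ rather than staying above a constant. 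Your algebraic identity $\widehat{P}^k_\ell V^k(s,a)-PV^k(s,a)=\br{V^k(x_1)-V^k(x_2)}\br{\tfrac{n^1_\ell}{n_\ell+2}-q}-\tfrac{2V^k(x_2)}{n_\ell+2}$ shows that only the count event at the child carrying the \emph{larger} value of $V^k$ is needed (using $V^k\geq 0$), which is a single binomial comparison with success probability bounded below by a universal constant; the subsequent union over $\bc{x_1,x_2}$ costs only a factor $2$. This is what makes the per-batch success probability an honest constant and the product over $\ell\in[L]$ legitimate, and it is essentially the normalized-Bernoulli reduction that appears in an earlier draft of the argument. Your route is the one I would recommend adopting; the only points to tighten are the second case of your binomial inequality (it is safer to settle for the constant $1/4$ from the cited lemma than to chase $e^{-1}$) and an explicit statement that the successor states observed at $(s,a)$, in order of visit, are i.i.d.\ from $P(\cdot|s,a)$, which is what licenses the conditioning on $N^k(s,a)=n$.
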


\Cref{cor:optimism} implies that $-\innerprod{d^{\pi^\star}}{\delta^k} \leq 0$ for all $k \in [K]$ and therefore that $\eqref{eq:main_optimism} \leq \sum^K_{k=1}\innerprod{d^{\pi^k}}{\delta^k}$.
\begin{remark}
The above inequality, it is crucial for obtaining the result. Indeed, it upper bounds $\eqref{eq:main_optimism}$ with the \emph{on-policy} temporal difference errors \footnote{That is the temporal difference errors $\delta^k$ averaged by the learner occupancy measures $d^{\pi^k}$} which are small enough to ensure sublinear regret. To see this (informally) consider two cases. First, let us assume that $d^{\pi^k}$ is relatively large for some action pair. Then, that action pair is expected to be visited often in the rollouts and therefore $\delta^k$ is expected to be small. Vice versa, if $\delta^k$ for a certain state-action pair is large, this means that for that state-action pair $d^{\pi^k}$ is relatively small. Overall, we always expect the product $\innerprod{d^{\pi^k}}{\delta^k}$ to be a small quantity.
Notice that the same arguments could not have been carried out replacing $d^{\pi^k}$ with $d^{\pi^\star}$ because the rollouts used in \Cref{alg:theory_version} are not sampled with $\pi^\star$.
\end{remark}
To formalize the above intuition, we upper bound the temporal difference errors with the inverse of the number of times each state-action pair is visited.
\begin{lemma} \label{lemma:Lsmall}(\textbf{Simplified Version of \Cref{lemma:bounded_optimism_ensemble}} )
Let us consider a binarized MDP with $\abs{\sspace}$ states and discount factor $\gamma$. With probability $1-\delta$, it holds that for all $s,a\in \sspace\times \aspace$ and for all $k \in [K]$,
\begin{align*}
\delta^k(s,a) &\leq \widetilde{\mathcal{O}}\br{\sqrt{\frac{L\abs{\sspace} \log (1/\delta)}{(N^k(s,a) + 1) (1-\gamma)^2}}}.
\end{align*}
\end{lemma}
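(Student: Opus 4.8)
The plan is to establish the bound for the aggregation rule \eqref{eq:update1}; the rule \eqref{eq:update2} is treated analogously and deferred, as indicated by the footnote. The first step is to turn the temporal difference error into a deviation of the transition estimates. Since $Q^{k+1}=c^k+\gamma\min_{\ell\in[L]}\widehat{P}^k_\ell V^k$ under \eqref{eq:update1}, the cost term cancels and $\delta^k(s,a)=\gamma\br{PV^k(s,a)-\min_{\ell\in[L]}\widehat{P}^k_\ell V^k(s,a)}=\gamma\max_{\ell\in[L]}\br{PV^k(s,a)-\widehat{P}^k_\ell V^k(s,a)}$, so it suffices to bound $PV^k(s,a)-\widehat{P}^k_\ell V^k(s,a)$ uniformly over all $(s,a)\in\sspace\times\aspace$, $k\in[K]$ and $\ell\in[L]$. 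Writing $m:=N^k_\ell(s,a)$ and letting $\widehat{p}^k_\ell(\cdot|s,a)=N^k_\ell(s,a,\cdot)/m$ be the plain empirical next-state distribution from the $m$ transitions routed to bucket $\ell$, the ``$+2$'' in the denominator means $\widehat{P}^k_\ell(\cdot|s,a)=\tfrac{m}{m+2}\widehat{p}^k_\ell(\cdot|s,a)$, hence $PV^k(s,a)-\widehat{P}^k_\ell V^k(s,a)=\br{PV^k(s,a)-\widehat{p}^k_\ell V^k(s,a)}+\tfrac{2}{m+2}\widehat{p}^k_\ell V^k(s,a)$, and the second term is at most $\tfrac{2}{(m+1)(1-\gamma)}$ because $V^k$ takes values in $[0,\tfrac{1}{1-\gamma}]$ by an easy induction.

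The core is a concentration bound on $PV^k(s,a)-\widehat{p}^k_\ell V^k(s,a)$. For a fixed $(s,a)$, the next states observed on successive visits to $(s,a)$ form an i.i.d.\ sequence with law $P(\cdot|s,a)$ by the Markov property, and bucket $\ell$ receives a deterministic subsequence of them, so its recorded transitions are i.i.d.\ draws from $P(\cdot|s,a)$; the subtle point is that $V^k$ is itself a function of the whole replay buffer, so $\widehat{p}^k_\ell V^k$ is not a clean average of independent bounded variables. I would decouple this with a covering argument: take an $\epsilon$-net $\mathcal{N}$ of $[0,\tfrac{1}{1-\gamma}]^{\abs{\sspace}}$ in $\norm{\cdot}_\infty$, so $\log\abs{\mathcal{N}}=\mathcal{O}\br{\abs{\sspace}\log\tfrac{1}{\epsilon(1-\gamma)}}$, and choose $\epsilon$ inverse-polynomial in $K$ and $(1-\gamma)^{-1}$. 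For each fixed $V\in\mathcal{N}$, each $(s,a)$, each $\ell\in[L]$ and each possible value $t$ of a sample count, Hoeffding's inequality applied to the first $t$ bucket-$\ell$ draws of $V(s')$ (range $\tfrac{1}{1-\gamma}$) together with a union bound over this family gives, with probability $1-\delta$, that $\abs{\widehat{p}_{t}V-PV(s,a)}\le\widetilde{\mathcal{O}}\br{\tfrac{1}{1-\gamma}\sqrt{\tfrac{\abs{\sspace}\log(1/\delta)}{t}}}$ simultaneously, and in particular at the realized (random) count $t=N^k_\ell(s,a)$, which is legitimate since the event holds for every value of the count at once.

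It remains to assemble the bound. Choosing $V\in\mathcal{N}$ with $\norm{V-V^k}_\infty\le\epsilon$ and using that $\widehat{p}^k_\ell(\cdot|s,a)$ and $P(\cdot|s,a)$ are probability vectors, we get $\abs{\widehat{p}^k_\ell V^k-\widehat{p}^k_\ell V}\le\epsilon$ and $\abs{PV^k-PV}\le\epsilon$, so $PV^k(s,a)-\widehat{p}^k_\ell V^k(s,a)\le\widetilde{\mathcal{O}}\br{\tfrac{1}{1-\gamma}\sqrt{\tfrac{\abs{\sspace}\log(1/\delta)}{N^k_\ell(s,a)}}}+2\epsilon$. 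Every nonempty bucket has $N^k_\ell(s,a)\ge\lceil N^k(s,a)/L\rceil-1$, hence $1/N^k_\ell(s,a)=\mathcal{O}\br{L/(N^k(s,a)+1)}$ whenever $N^k(s,a)\ge L$; combining this with the maximum over $\ell$, the factor $\gamma\le1$, and absorbing the dominated terms $2\epsilon$ and $\tfrac{2}{(m+1)(1-\gamma)}$ gives the claimed bound in that regime. When $N^k(s,a)<L$ one instead uses the trivial estimate $\delta^k(s,a)\le\tfrac{\gamma}{1-\gamma}$, which is already dominated by the right-hand side of the Lemma there because $\tfrac{L\abs{\sspace}\log(1/\delta)}{(N^k(s,a)+1)(1-\gamma)^2}\ge\tfrac{1}{(1-\gamma)^2}$. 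For \eqref{eq:update2} the only additional ingredient is that the empirical standard deviation $\sigma^k(s,a)$ is controlled by the same per-bucket deviations via the triangle inequality.

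The main obstacle is the statistical coupling between the value iterate $V^k$ and the bucketed transition estimates $\widehat{P}^k_\ell$: because $V^k$ is computed recursively from the same trajectories used to build $\widehat{P}^k_\ell$, one cannot treat $\widehat{P}^k_\ell V^k$ as an average of independent bounded random variables, and it is precisely the union bound over an $\epsilon$-net of all value vectors that breaks this dependence — at the unavoidable price of the $\abs{\sspace}$ factor, which enters through $\log\abs{\mathcal{N}}=\mathcal{O}(\abs{\sspace}\log(\cdot))$. A secondary subtlety is that the per-bucket count $N^k_\ell(s,a)$ is itself random and data-dependent; this is handled by including all possible count values in the union bound, a standard device. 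Translating the rate from $N^k_\ell(s,a)$ to $N^k(s,a)$ and checking that the small-count regime is absorbed into the trivial bound are then routine bookkeeping.
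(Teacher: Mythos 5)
Your proposal is correct and follows essentially the same route as the paper's proof of \Cref{lemma:bounded_optimism_ensemble}: reduce $\delta^k$ to the deviation $PV^k-\widehat{P}^k_\ell V^k$, break the dependence between $V^k$ and the bucketed transition data via an $\epsilon$-net over $[0,(1-\gamma)^{-1}]^{\abs{\sspace}}$ (the source of the $\abs{\sspace}$ factor), union bound over all possible realized counts to handle the random stopping issue, peel off the $+2$ shift as an $\mathcal{O}\br{\tfrac{1}{(N^k_\ell(s,a)+2)(1-\gamma)}}$ term, and convert per-bucket counts to $N^k(s,a)/L$ via the round-robin assignment. The only cosmetic differences are that the paper phrases the concentration step as Azuma--Hoeffding on a martingale rather than Hoeffding on i.i.d.\ draws, and absorbs the small-count regime through the $N^k(s,a)/L+1$ denominator rather than a separate trivial case.
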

Therefore, by concentration inequalities and noticing that $s^k_{L^k}, a^k_{L^k} \sim d^{\pi^k}$, it holds that with high probability
\begin{align*}
\sum^K_{k=1}&\innerprod{d^{\pi^k}}{\delta^k} = \widetilde{\mathcal{O}}\br{\sum^K_{k=1} \delta^k(s^k_{L^k}, a^k_{L^k})} \\
&\leq \widetilde{\mathcal{O}}\br{\sum^K_{k=1}\sqrt{\frac{L\abs{\sspace} \log (1/\delta)}{(N^k(s^k_{L^k},a^k_{L^k}) + 1) (1-\gamma)^2}}} \\
&\leq \widetilde{\mathcal{O}}\br{\sqrt{K\sum^K_{k=1}\frac{L\abs{\sspace} \log (1/\delta)}{(N^k(s^k_{L^k},a^k_{L^k}) + 1) (1-\gamma)^2}}}.
\end{align*}
At this point, the proof is concluded by bounding the last sum over $K$ with a standard numerical sequences argument (see \Cref{lemma:count_based}).

\paragraph{Optimal choice of the number of critics network $L$} It is important to notice that \Cref{cor:optimism} and \Cref{lemma:Lsmall} creates a tradeoff for what concerns the optimal choice of the number of critics. In particular, from \Cref{cor:optimism}, $L$ should be chosen large enough to ensure that optimism holds with high enough probability. On the other hand, one can notice that \Cref{lemma:Lsmall} upper bounds the expected on policy temporal difference error as $\mathcal{O}(L)$ therefore a smaller number of critics ensures a tighter bound. All in all, the best choice is the smallest $L$ that ensures optimism with probability at least $1-\delta$, that is $L = 36\log \br{\frac{ \abs{\sspace}\abs{\aspace} K}{\delta}}$.
The tradeoff with respect to the number of critics is also observed in a practical ablation study (see Figures~\ref{fig:best_clip_nn} and~\ref{fig:exploration}) .
\subsection{Proof Sketch of \Cref{thm:reward_regret_bound}}
The proof of this term is considerably easier than the bound of the regret for the policy player because we have exact knowledge of the decision variables domain \footnote{$\mathcal{C}$ is taken to be the $\ell_{\infty}$-ball of radius $1$}.
The first step in the proof is to decompose $(1-\gamma)\mathrm{Regret}_c$ as follows
\begin{align*}
&\sum^K_{k=1} \innerprod{c_{\mathrm{true}} - c^k}{\widehat{d^{\pi^k}} - \widehat{d^\expert}}
%\\&\phantom{=}
+\sum^K_{k=1} \innerprod{c_{\mathrm{true}} - c^k}{ d^{\pi^k}- \widehat{d^{\pi^k}}} \\
&\phantom{=}+\sum^K_{k=1} \innerprod{c_{\mathrm{true}} - c^k}{ \widehat{d^\expert} - d^\expert}.
\end{align*}
The first term in the decomposition is upper bounded by $\mathcal{O}(\sqrt{K})$ via a standard online gradient descent analysis \cite{Zinkevich2003}.
Since $\widehat{d^{\pi^k}}$ is an unbiased estimate of the learner occupancy measure, the second term in the decomposition is the sum of a martingale difference sequence. Therefore, an application of the Azuma-Hoeffding inequality ensures that this term grows as $\widetilde{\mathcal{O}}\br{\log(1/\delta)\sqrt{K}}$ with probability at least $1-\delta$.

Finally, the last term is bounded as $\widetilde{\mathcal{O}}\br{K \log(1/\delta)\abs{\mathcal{D}_\expert}^{-1/2}}$ with probability at least $1-\delta$. This is done, proving that for the empirical average estimators for the expert occupancy measure it holds that $\norm{d^\expert-\widehat{d^\expert}}_1 \leq \abs{\mathcal{D}_\expert}^{-1/2} \log(1/\delta)$ with probability at least $1-\delta$. A union bound concludes the proof of \Cref{thm:reward_regret_bound}. The formal proof is deferred to the Appendix.
\section{SOAR for continuous state and actions problems.}
In this section, we explain how \Cref{alg:meta} is instantiated in imitation learning problems with continuous states and action spaces, which therefore requires neural networks to approximate the value function and policy updates.
Since in our analysis for the tabular case, we need to use multiplicative weights/softmax updates, we decided to use SAC, which is an approximation of such updates in the continuous state-action setting.

However, the standard SAC keeps only one network, often called the critic network, to estimate the $Q$ values.
On the other hand, we use a pair of them to avoid the excessive overestimation noticed in Double DQN \cite{vanhasselt2015deepreinforcementlearningdouble}.
Since it uses only one pair of critics, SAC cannot achieve optimism reliably with high probability. 

To fix this issue, we consider multiple critics and we used as an optimistic estimate the mean minus the standard deviation of the ensemble as explained in \Cref{alg:optQnn}. In addition, the standard deviation needs to be truncated at a threshold, as was done in the tabular analysis, to avoid the value function estimators growing out of the attainable range.
For any state $s$, the estimated value functions are truncated in the interval $\bs{0, (1-\gamma)^{-1} }$.
\begin{algorithm}
\caption{\textsc{OptimisticQ-NN} \label{alg:optQnn}}
\begin{algorithmic}[1]
\REQUIRE Replay buffer $\mathcal{D}$, 
Estimators $\bc{Q_{\ell}}^L_{\ell=1}$, %current policy $\pi$, 
 maximum standard deviation $\sigma$.
    \State $\bc{s_i}^N_{i=1} \gets$ sample observations from $\mathcal{D}$
    \State $a_i \gets \pi(s_i)$
    % \State $q_{\min} \gets$ empty list
    %\For{$i = 1$ to $L$}
    %    \State $q_{\min}[i] \gets \min \left( Q^{(1)}[i](o, a), Q^{(2)}[i](o, a) \right)$
    %\EndFor
    \State $\bar{Q}(s_i,a_i) = \frac{1}{L} \sum_{\ell=1}^L Q_\ell (s_i,a_i)  $
    \State $\text{std-Q}(s_\ell,a_\ell) = \sqrt{\frac{1}{L} \sum_{\ell=1}^L \left( Q_\ell (s_i,a_i) - \bar{Q}(s_i,a_i) \right)^2}$
    \State $\overline{\text{std-Q}}(s_i,a_i) \gets \text{Clip}(\text{std-Q}(s_i,a_i), 0, \sigma)$.
    \State $Q(s_i,a_i) = \bar{Q}(s_i,a_i) - \overline{\text{std-Q}}(s_i,a_i)$
    \State \textbf{Return:}  $Q(s_i,a_i)$ for all $i=1, \dots, N$.
    %\State Update policy weights using Adam \cite{Kingma:2015} on the loss $\mathcal{L}_\pi$.
\end{algorithmic}
\end{algorithm}
Each of the estimators (critics) $\bc{Q_\ell}^L_{\ell=1}$ is trained in the same way (minimizing the squared Bellman error as in standard SAC ) on a different dataset collected by the same actor. That is, on independent identically distributed datasets. For completeness, the SAC critic training is included in \Cref{alg:updateCritics} in \Cref{app:pseudo}.
In the continuous setting, it is clearly not possible to compute the optimistic state-action value at every state-action pair. Thankfully, it suffices to compute the optimistic state action  value function $Q$, invoking the routine $\textsc{OptimisticQ-NN}$, only for the state-actions in a minibatch $\mathcal{D} = \bc{s_i,a_i}^N_{i=1}$.
Indeed, the policy network weights does not require perfect knowledge of $Q$ over $\sspace\times\aspace$ but only an Adam \cite{Kingma:2015} update step on the loss
$\mathcal{L}_\pi = \frac{1}{N} \sum_{i=1}^N \left( -\eta \log \pi(a_i|s_i) + Q(s_i,a_i)  \right).
$

%Finally, to complete the description of an imitation learning algorithm we need to specify an instance for $\textsc{UpdateCost}$ in \Cref{alg:meta}.
%For that part, we do not introduce any novelty. Instead, we will show in the experiment section 
In the next section, we show that for multiple choices of $\textsc{UpdateCost}$ (ML-IRL, CSIL and RKL) replacing the standard SAC critic update routine with $\textsc{OptimisticQ-NN}$ leads to improved performance.

\begin{figure*}[t]
    \centering
\includegraphics[width=\textwidth]{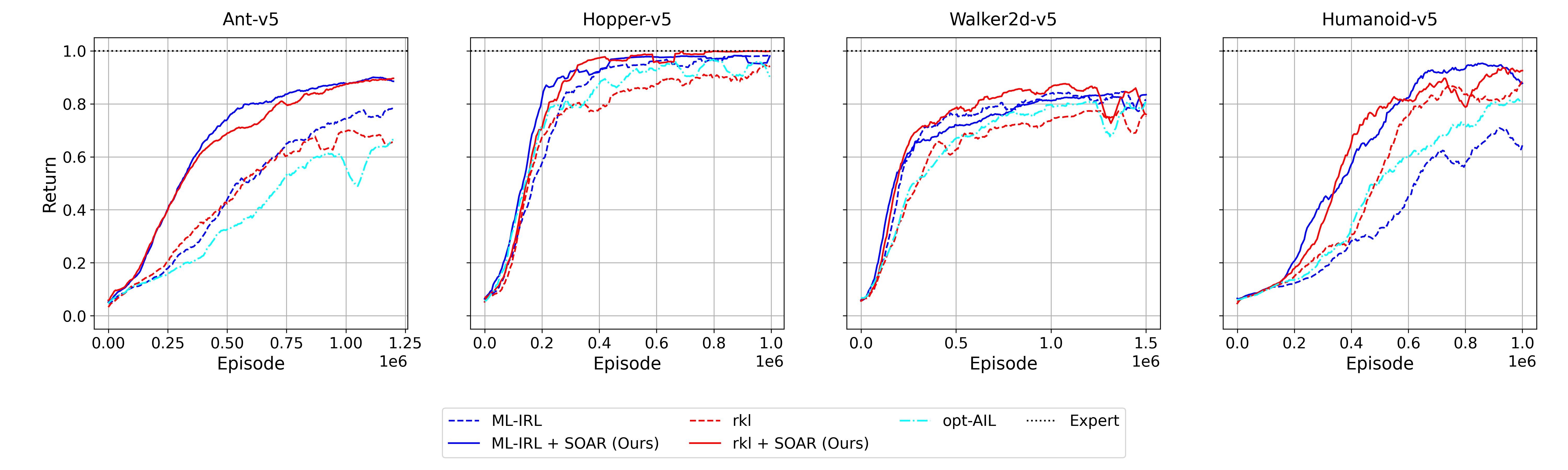}
    \caption{%Comparison of state-only imitation learning methods across OpenAI Gym environments, with dashed lines showing baseline algorithms and solid lines showing SOAR-enhanced versions. 
    \small{\textbf{Experiments from State-Only Expert Trajectories}. 16 expert trajectories, average over 5 seeds, $L=4$ 
    Clipping values $\sigma$ - ML-IRL: [Ant: 10.0, Hopper: 50.0, Walker2d: 0.5, Humanoid: 5.0], rkl: [Ant: 0.8, Hopper: 50.0, Walker2d: 30.0, Humanoid: 100.0]}}
    \label{fig:state_only}
\end{figure*}

\begin{figure*}[t]
    \centering
\includegraphics[width=\textwidth]{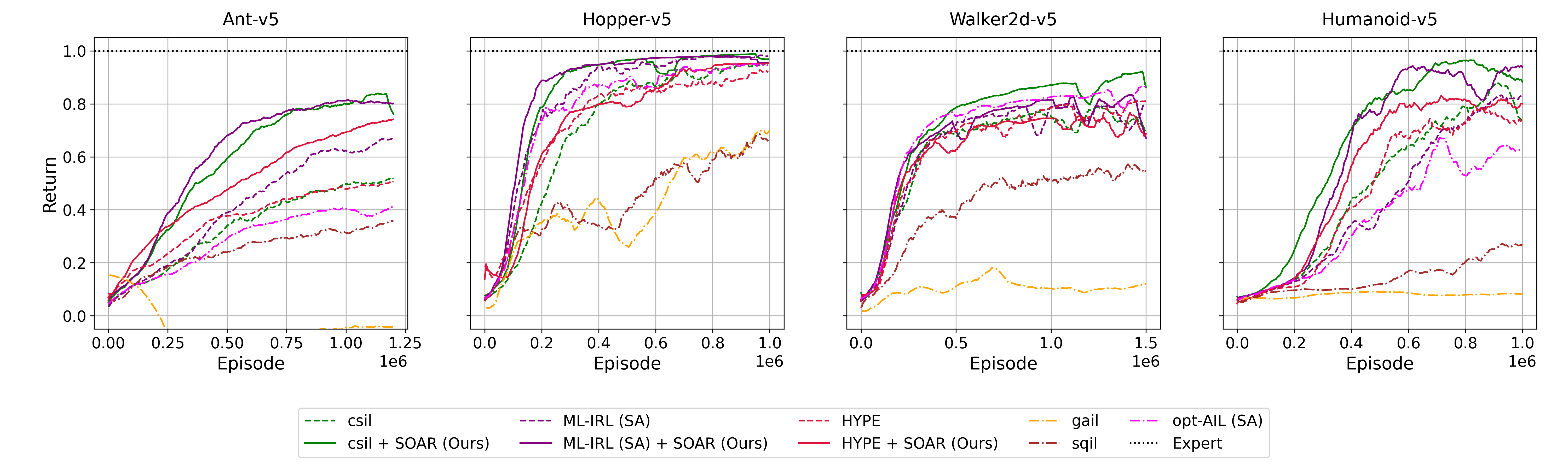}
    \caption{%Performance of state-action imitation learning methods across environments, with dashed lines showing baseline algorithms and solid lines showing SOAR-enhanced versions. 
    \small{\textbf{Experiments from State-Action Expert Trajectories}. 16 expert trajectories, average over 5 seeds, $L=4$.
    Clipping values $\sigma$ - CSIL: [Ant: 10.0, Hopper: 5.0, Walker2d: 0.5, Humanoid: 0.1], ML-IRL(SA): [Ant: 5.0, Hopper: 10.0, Walker2d: 0.5, Humanoid: 50.0]}}
\label{fig:state_actions}
\end{figure*}
\section{Experiments}
%We show how the improvements in the exploration mechanism of Soft Actor Critic, that is using multiple critic and an optimistic aggregation mechanism improves significantly the performance of all the aforementioned IL algorithms building on SAC for the policy update. 
We perform experiments for both state only and state action IL on the following MuJoCo \cite{Todorov:2012} environments: Ant, Hopper, Walker2d, and Humanoid.

For the state-only IL setting, we showcase the improvement on RKL \cite{ni2021f} and ML-IRL (State-Only) \cite{zeng2022maximum}. In both cases, we found that using $L=4$ critic networks and an appropriately chosen value for the standard deviation clipping threshold $\sigma$ consistently improves upon the baseline.
In the Appendix \ref{sec:training_proc}, we conduct an ablation study for $L$ and $\sigma$.

We denote our derived algorithms as RKL+SOAR and ML-IRL+SOAR. In addition to observing an improvement over standard RKL and ML-IRL, we outperform the state-only version of the recently introduced OPT-AIL algorithm \cite{xu2024provably} (see Figure~\ref{fig:state_only}) which incorporates an alternative, more complicated, deep exploration technique. 

For the state-action experiments, we plug in the SOAR template on CSIL and the state-action version of ML-IRL. We coined the derived versions CSIL+SOAR and ML-IRL+SOAR (see Appendix~\ref{app:pseudo} for detailed pseudocodes of these algorithms). We also compare with GAIL \cite{Ho:2016b}, SQIL \cite{sqil}, and OPT-AIL. We observe that the exploration mechanism injected by the SOAR principle allows us to achieve reliably superior results (see Figure~\ref{fig:state_actions}).

Further details about the hyperparameters are provided in the Appendix \ref{sec:training_proc}. Moreover, we notice that for all the algorithms in the higher-dimensional and thus more challenging environments (Ant-v5 and Humanoid-v5), the advantage of the SOAR exploration technique becomes more evident. 

The experts trajectory are obtained from policy networks trained via SAC. The expert returns are reported in Table \ref{tab:performance}.

We highlight that the SOAR algorithmic idea can be used also for other imitation learning algorithms based on SAC such as AdRIL \cite{swamy2021moments} and SMILING \cite{wu2024diffusing}.
\begin{table}
    \centering
    \setlength{\tabcolsep}{4pt}
    \caption{\small{Expert returns}}\resizebox{0.45\textwidth}{!}{\begin{tabular}{c|c|c|c|c}
        Method & Ant-v5 & Hopper-v5 & Humanoid-v5 & Walker2d-v5 \\
        \hline
        Expert & 4061.41 & 3500.87 & 5237.48 & 5580.39 \\
        return & $\pm$ 730.58 & $\pm$ 4.33 & $\pm$ 414.69 & $\pm$ 20.30 \\
    \end{tabular}}
    \label{tab:performance}
\end{table}
\subsection{Experiment on a hard exploration task}
An anonymous reviewer pointed out that the MuJoCo benchmark is not the hardest for what concern exploration. This is a very valid suggestion that we address here. This will also allow to understand better the role of the number of critics $L$.
Therefore, to highlight even more the importance of exploration especially in imitation learning from states only we run SOAR-IL in the worst case construction used in the lower bound for the number of environment interaction in \cite[Theorem 19]{moulin2025optimistically}.
This is a simple two states MDP ( a low reward state and a high reward state ) with 20 actions per state. From the high reward state all actions are identical. From the low reward state, all actions are identical but the one chosen by the deterministic expert which has just a slightly higher probability to lead to the high reward state from the low  reward state. 
Even observing the expert state occupancy measure perfectly, it is difficult for the learner to find out which is the action which the expert took. That is because all actions are almost identical but one. 
\begin{figure*}[h]
    \centering
\includegraphics[width=0.5\textwidth]{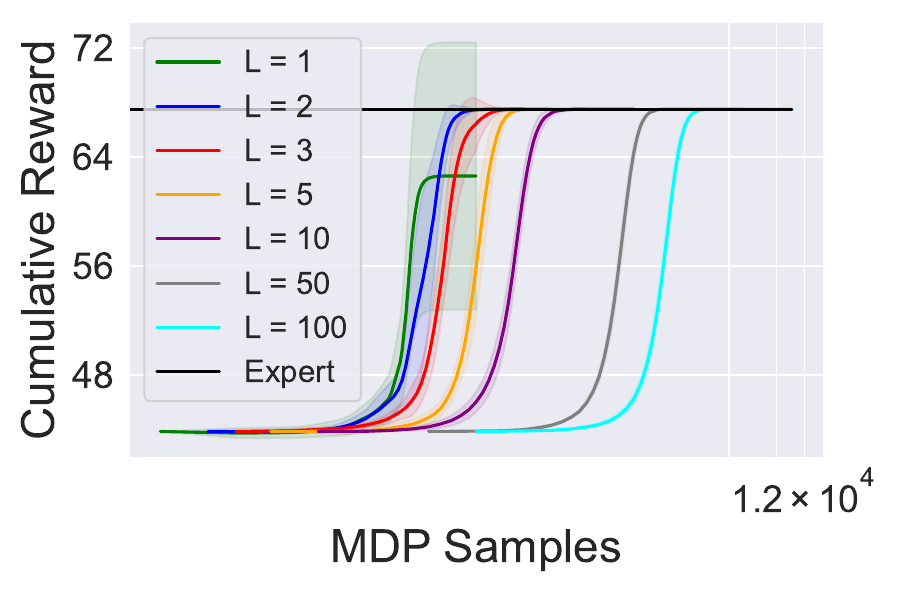}
    \caption{%Comparison of state-only imitation learning methods across OpenAI Gym environments, with dashed lines showing baseline algorithms and solid lines showing SOAR-enhanced versions. 
    \small{\textbf{Ablation for $L$ on hard exploration task}. State only imitation experiment in a hard exploration environment (used in the lower bound from \cite[Theorem 19]{moulin2025optimistically}) }. Results averaged over $5$ seeds, for a dataset of $100$ states sampled from the expert occupancy measure.}
    \label{fig:exploration}
\end{figure*}
We can see that with only 1 network, the mean of the learner does not reach the expert performance and the variance is very high meaning that some seeds are successful and others fail. This is in perfect agreement with \Cref{cor:optimism} which predicts that for low values of the number of critics $L$, the optimistic properties of the critic estimators can not be guaranteed with high probability.
For $L=2$, the environment is solved successfully albeit with a higher variance than the case $L=3$.
Increasing $L$ further leads to worst results in terms of MDP samples needed to solve the task. This is because according to \Cref{lemma:Lsmall} the upper bound on the expected  on policy temporal difference error scales with $L$ so an excessively large $L$ should be avoided. We used $\alpha = 0.5$, $\eta = 4$, and we scaled the standard deviation bonus by $0.001$. 
\section{Conclusions and Open Questions}
While there has been interest in developing heuristically effective exploration techniques in deep RL, the same is not true for deep IL.
For example, even in the detailed study \emph{What matters in Adversarial Imitation Learning ?} \cite{orsini2021matters} the effectiveness of deep exploration techniques is not investigated.
Prior to our work, only few studied the benefits of exploration in imitation learning, mostly in the state-only regime \cite{kidambi2021mobile}. However, their theoretical algorithm uses bonuses that cannot be implemented with neural networks.
Similarly, the recent work \cite{xu2024provably} uses exploration technique in Deep IL but requires solving a complicated non-concave maximization problem.
Our approach is remarkably easier to implement. It achieves convincing empirical results results and enjoys theoretical guarantees.
%In addition, notice that it is likely that many future Deep IL algorithms will be based on SAC for the policy updates. It is reasonable to expect an improvement if for those algorithms SAC is modified using multiple critics as proposed in this paper
Moreover, our framework can be expected to be beneficial for any existing or future deep IL algorithm using SAC for policy updates.

\paragraph{Open Questions}
%There are several open questions to investigate in future work.
On the theoretical side, we plan to analyze the ensemble exploration technique in the linear MDP case.
From the practical one, we will investigate if the exploration enhanced versions of DQN \cite{osband2016deep,osband2018randomized} can speed up imitation learning from visual input.
Finally, the same idea might find application in the LLM finetuning given the recently highlighted potential of IL for this task \cite{wulfmeier2024imitating,foster2024behavior}.
\newpage
%\section*{Impact Statement}
%The current submission is expected to have an impact in the imitation learning community particularly because it highlights the benefits of exploration, not only theoretically but also in simulated robotics experiments.
%Impact can be expected also in the broader machine learning community and in related disciplines such as robotics and control theory.
%Beyond that, we do not expect direct impact on the society (i.e. outside the machine learning community and IT industry ).
\section*{Acknowledgments}
This work is funded (in part) through a PhD fellowship of the Swiss Data Science Center, a joint venture between EPFL and ETH Zurich.
This work was supported by Hasler Foundation Program: Hasler Responsible AI (project number 21043). Research was sponsored by the Army Research Office and was accomplished under Grant Number W911NF-24-1-0048. This work was supported by the Swiss National Science Foundation (SNSF) under grant number 200021\_205011.
\bibliography{sample, example_paper}
\bibliographystyle{plainnat}

%%%%%%%%%%%%%%%%%%%%%%%%%%%%%%%%%%%%%%%%%%%%%%%%%%%%%%%%%%%%%%%%%%%%%%%%%%%%%%%
%%%%%%%%%%%%%%%%%%%%%%%%%%%%%%%%%%%%%%%%%%%%%%%%%%%%%%%%%%%%%%%%%%%%%%%%%%%%%%%
% APPENDIX
%%%%%%%%%%%%%%%%%%%%%%%%%%%%%%%%%%%%%%%%%%%%%%%%%%%%%%%%%%%%%%%%%%%%%%%%%%%%%%%
%%%%%%%%%%%%%%%%%%%%%%%%%%%%%%%%%%%%%%%%%%%%%%%%%%%%%%%%%%%%%%%%%%%%%%%%%%%%%%%
\newpage
\appendix
\onecolumn
\section{Related Works}
\paragraph{IL Theory} 
The first theoretical guarantees obtained for the imitation learning problem dates back to the work of \cite{Abbeel:2004} and \cite{Syed:2007} which notably used the idea of no-regret learning. However, their work requires either knowledge of the environment transitions or they require a suboptimal in the precision parameter $\varepsilon$ amount of expert trajectories to estimate those.
Our theoretical guarantees are in the same setting of previous works like \cite{Shani:2021} and \cite{xu2023provably} which do not require knowledge of the environment transitions a priori but assumes online trajectory access to the environment.
 The main difference that their work focuses on the easier finite horizon setting. Additionally, their exploration techniques only applies to the tabular setting. Indeed, in the MuJoCo experiments in  \cite{Shani:2021}, the authors do not attempt to implement the exploration mechanism required for their theoretical guarantees. 
In a similar way, also \cite{xu2023provably} can not be implemented beyond the tabular setting because it relies on a reward free procedure requiring bonuses proportional to the number of visits to each state action pair.
\cite{ren2024hybrid} suggest an algorithm which does not require exploration but it can not improve upon behavioural cloning in terms of expert trajectories. 
\cite{rajaraman2020toward, rajaraman2021provablybreaking,foster2024behavior} analyze instead offline imitation learning (behavioural cloning) where no additional interaction with the environment is allowed. This setting is more general but it comes at the cost of additional assumptions such as policy realizability or worst depedence on the horizon on the required number of trajectories.
\cite{foster2024behavior} presents an analysis for general policy classes but they require a maximum likelihood oracle which can not be implemented exactly when using neural network function approximation.

There has been also a variety of studies tackling the problem of computationally efficient algorithm with linear function approximation such as \cite{Kamoutsi:2021, viano2022proximal,viano2024imitation,rajaraman2021value,swamy2022minimax}.
However, their proof techniques are strictly depending on the linearity of the dynamics therefore the experiments in continuous control tasks require changes in the algorithmic design. Albeit our guarantees are restricted to the tabular setting, the algorithm can be implemented with no modifications with neural networks.

Several works focus on the setting where expert queries are allowed at any state visited during the MDP interaction \cite{Ross:2010, Ross:2011,swamy2021moments} or that require a generative model for the algorithm updates  \cite{swamy2022minimax}.
Another recent work requires a generative model to sample the initial state of the trajectory from the expert occupancy measure \cite{swamy2023inverse}.
Our algorithm requires sampling only trajectories in the MDP therefore it does not leverage the aforementioned generative model
assumption.
In contrast, the setting of this work matches the most practical one adopted for example in \cite{Ho:2016, Ho:2016b, Fu:2018, Reddy:2020, Dadashi:2021, watson2023coherent, Garg:2021, ni2021f}. In this case, the expert policy can  not be queried and the learner access only a precollected dataset of expert demonstrations.
\paragraph{Theory for IL from States Only}This setting has been firstly studied in \cite{sun2019provably} in the finite horizon setting and with general function approximation their work does not use exploration mechanism. However their work requires an additional realizability assumption of the expert value function, it can only learn a difficult to store and deploy non stationary policy and provides suboptimal guarantees on $\tau_E$ in terms of the horizon dependence.

The follow up from \cite{arora2020provable}, still requires the realizability of the state value function which is not needed in our work.
The work of \citet{kidambi2021mobile}  uses the idea of exploration in state only finite horizon imitation learning. Their analysis for tabular MDP gives a bound on $K$ which has a worst horizon dependence and it requires the design of exploration bonuses tight to the structural properties of the MDP. Therefore, their NN experiments requires an empirical approximation of such bonuses while the SOAR framework applies naturally.

\citet{wu2024diffusing} imposes expert score function realizability and that the expected state norm remains bounded during learning. The algorithm has provable guarantees but it requires an expensive \emph{RL in the loop} routine that we avoid in our work.
\paragraph{Exploration Techniques in Deep RL}
Ensemble of $Q$ networks has also been used for training stabilization \cite{anschel2017averaged}. \cite{zhang2025beta} introduces exploration technique based on multiple actors.
\citet{ciosek2019better} does not have theoretical guarantees but it uses the idea of constructing an optimistic critic using mean plus standard deviation but only to define an exploratory policy with which collecting data. Our approach instead maintains only one actor policy which is updated with the optimistic $Q$ estimate. \cite{parker2020effective,lyu2022efficient} exploration with ensemble of actors rather than critics. \cite{kurutach2018model, chua2018deep} uses an ensemble of networks trained to learn the transition model to improve the sample complexity in model based RL. 
\cite{henaff2022exploration} learns instead an inverse dynamics model and via an encoder and decoder model and uses the features output by the encoder to compute elliptical potential bonuses which are standard in linear bandits \cite{Abbasi-Yadkori:2011}.
%\cite{depeweg2018decomposition} used an ensemble to estimate the standard deviation of the cost function. These found application in safe RL.
\citet{moskovitz2021tactical} improved TD-3 \cite{fujimoto2018addressing} using an ensemble of critics and a bandit algorithm to find an aggregation rule balancing well the amount of optimism required by online exploration and pessimism required by off policy algorithms such as TD-3.  

In addition, there are several deep RL work that takes a bayesian point of view to the problem, these algorithms often achieve remarkable performance but the algorithm implemented with deep networks requires usually adjustments creating a mismatch compared to the provable algorithms in the tabular case. 
Among those \cite{luis2023model,zhou2020deep,o2018uncertainty} use the Bellman equation for the state value function variance to train a network (dubbed $U$ network) that models the uncertainty of the network predicting the $Q$ values. They respectively prove that this trick improves the performances of SAC, PPO \cite{Schulman:2017} and DQN\cite{Mnih:2015}. \cite{curi2020efficient} uses the model uncertainty estimate in the update of the actor.  

Moreover, building on the theoretical analysis of PSRL \cite{osband2014near} and RLSVI \cite{osband2016generalization} that show sublinear bayesian regret bound. At any step, these algorithms sample from a posterior distribution either an MDP where to plan or a value function to follow greedly at each step. Between one step and the other the posterior is updated given the new data.
While the theorical analysis in the above works prescribe a randomization at the value function parameters level, in the deep RL version, dubbed Boostrapped DQN \cite{osband2016deep},  the perturbation is performed implicitly maintaining a set of $Q$ networks and sampling uniformly at each round according to which network the agent chooses the greedy action. 
\cite{chen2017ucb}
improved upon Bootstrapped DQN using an aggregation rule. That is acting greedy with respect to the mean plus standard deviation of the $q$ ensemble.
\citet{osband2018randomized} further builds on this idea adding a differ prior to each network in the ensemble to increase diversity. Finally, \citet{osband2023approximate} replaces the uniform sampling in \cite{osband2016deep} with a learned distribution with an epistemic network \cite{osband2023epistemic}.

Furthermore, motivated by the bayesian regret bound proven in \cite{o2021variational} in the tabular case and the one in  \cite{o2023efficient}, \cite{tarbouriech2024probabilistic} proves a regret bound in the function approximation setting and showcased convincing performance in the Atari benchmark.
Their algorithm requires to know the variance of the cost posterior distribution which is not available in the neural network experiments. Therefore, it is estimated using the standard deviation of an ensemble of cost network. In our work, we use an ensemble of $Q$ networks and not cost networks.

Additionally,
\cite{ishfaq2021randomized} analyzed ensemble exploration techniques in the general function approximation setting. Their ensemble consists of different critics trained on the same state actions dataset but with rewards perturbed with a gaussian random vector. \cite{ishfaq2023provable,ishfaq2024more} looked at efficient implementation of Thompson sampling in Deep RL and obtained convincing results in Atari and providing guarantees for linear MDPs and general function approximation respectively.
Moreover, \cite{ishfaq2025langevin} extended the above results for continuous action spaces.
Unfortunately, these methods do not apply directly to imitation learning because they require a fixed reward function.

\paragraph{Exploration techniques in Deep IL}
As mentioned only few works investigated exploration techniques in Deep IL. Apart from the 
previously mentioned works, \cite{yu2020intrinsic} adopts a model based approach and used exploration bonuses based on prediction error of the next observed state (a.k.a. curiosity driven exploration \cite{pathak2017curiosity,burda2018large}). Finally we notice that ensembles have been used in IL theory IL also for goals different to exploration. In particular, \cite{swamy2022minimax} partitioned the expert dataset in two subdataset and show that these technique allows for improved expert sample complexity bounds when the expert is deterministic.

\paragraph{State-only imitation learning}
\citet{torabi2018generative} tackled the problem of imitation learning from states only modifying the discriminator of GAIL \cite{Ho:2016b} to take as input state next state pairs instead of state action pairs. Further practical improvements have been proposed in \cite{zhu2020off} that allows for the use of off-policy data. The works \cite{yang2019imitation,nair2017combining,pathak2018zero,radosavovic2021state} use the idea of an inverse dynamic model while \cite{edwards2019imitating,ganai2023learning} develops a practical algorithm aiming at estimating the forward dynamic model. Furthermore, \cite{torabi2018behavioral} introduces a twist in behavioral cloning using inverse dynamic modelling to make it applicable to state only expert datasets.  A comprehensive literature review can be found in \cite{torabi2019recent}. 
More recently, features/state only imitation learning has found application in non markovian decision making problems \cite{qin2024learning}. \citet{sikchi2022ranking} introduce an algorithm that takes advantage of an offline ranker between trajectories  to get strong empirical results in LfO setting.
 Another line of works \cite{gupta2017learning,sermanet2018time,liu2019state,viano2021robust,viano2022robust,gangwani2020state,cao2021learning,gangwani2022imitation} motivate imitation learning from observation alone arguing that the expert providing the demonstrations and the learner acts in slightly different environments. In \cite{kim2022lobsdice,sikchi2024dual}, the authors proposed convex programming based methods to imitate an expert policy from  expert state only demonstration and auxiliary arbitrary state action pairs. Several works \cite{ni2021f,kim2022demodice,ma2022versatile,yu2023offline} introduce empirical methods to minimize an $f$-divergence between expert and learner state occupancy measure. Complementary, \cite{chang2023imitation} minimizes the Wasserstein distance between expert and learner state occupancy measure. Their numerical results are convincing but no sample complexity bounds are provided. Convincing results have been obtained also in \cite{chang2024adversarial} that uses the idea of boosting and in \cite{wu2024diffusing} which uses a diffusion models inspired loss to update the cost.

\if 0
\section{Background and Notation}

In imitation learning, the environment is abstracted as a Markov Decision Process (MDP) which consists of a tuple $(S, \mathcal{A}, P, c, \nu_0)$ where $S$ is the state space, $\mathcal{A}$ is the action space, $P : S \times \mathcal{A} \to \Delta_S$ is the transition kernel, that is, $P(s'|s, a)$ denotes the probability of landing in state $s'$ after choosing action $a$ in state $s$. Moreover, $\nu_0$ is a distribution over states from which the initial state is sampled. Finally, $c : S \times \mathcal{A} \to [-1, 1]$ is the cost function. In the infinite horizon setting, we endow the MDP tuple with an additional element called the discount factor $\gamma \in [0, 1]$. 
The agent plays action in the environment sampled from a policy $\pi : \mathcal{S} \to \Delta_\mathcal{A}$. The learner is allowed to adopt an algorithm to update the policy across episodes given the previously observed history.

\section*{Value functions and occupancy measures}
We define the state value function at state $s \in \mathcal{S}$ for the policy $\pi$ under the reward function $r$ as $V^\pi(s;c) \triangleq \mathbb{E} \left[\sum_{h=0}^{\infty} \gamma^h c(s_h,a_h)|s_1 = s\right]$.  The expectation over both the randomness of the transition dynamics and the one of the learner's policy.

Another convenient quantity is the occupancy measure of a policy $\pi$ denoted as $d^\pi \in \Delta_{\mathcal{S}\times\mathcal{A}}$ and defined as follows 

$$d^\pi(s,a) \triangleq (1-\gamma)\sum_{h=0}^{\infty} \gamma^h\mathbb{P}[s,a \text{ is visited after } h \text{ steps acting with } \pi]$$

We can also define the state occupancy measure as 

$$d^\pi(s) \triangleq (1-\gamma)\sum_{h=0}^{\infty} \gamma^h\mathbb{P}[s \text{ is visited after } h \text{ steps acting with } \pi]$$. 

\section*{Imitation Learning}
In imitation learning, the learner is given a dataset $\mathcal{D}_{\tau_E} \triangleq \{\tau^k\}_{k=1}^{\tau_E}$ containing $\tau_E$ trajectories collected in the MDP by an expert policy $\pi_E$ according to Algorithm 1. By trajectory $\tau^k$, we mean the sequence of states and actions sampled at the $k^{th}$ iteration of Algorithm 1, that is $\tau^k = \{(s_h^k,a_h^k)\}_{h=1}^H$ for finite horizon case. For the infinite horizon case, the trajectories have random lenght sampled from the distribution Geometric$(1-\gamma)$. Given $\mathcal{D}_{\tau_E}$, the learner adopts an algorithm $\mathcal{A}$ to learn a policy $\pi^{\text{out}}$ such that is $\epsilon$-suboptimal according to the next definition.

\begin{definition}
An algorithm $\mathcal{A}$ is said $\epsilon$-suboptimal if it outputs a policy $\pi$ whose value function with respect to the unknown true cost $c_{\text{true}}$ satisfies $\mathbb{E}_\mathcal{A}\mathbb{E}_{s_1\sim\nu_0}[V^\pi(s_1;c_{\text{true}}) - V^{\pi_E}(s_1;c_{\text{true}})] \leq \epsilon$ where the first expectation is on the randomness of the algorithm $\mathcal{A}$.
\end{definition}

The justification for the update of Q in Algorithm \ref{alg:estimateQ} is that we want our $Q^k(s,a)$ to be a lower bound of $Q(s,a)$. Thus we need to find a function $b(s,a)$ such that:
\[
- b(s,a) \leq PV^{k}(s,a) - z^k(s,a) \leq b(s,a)
\]
Where $z(s,a)$ is our estimate of $PV^{k}(s,a)$. So, $b(s,a)$ bounds the error of our estimate of $PV^{k}(s,a)$. If we subtract it from our estimate of $Q(s,a)$ we get the following:

\[
Q^{k}(s,a) = r(s,a) + z^k(s,a) + b(s,a)
\]
And by considering the definition of $Q(s,a)$:
\[
Q(s,a) = r(s,a) + PV^{k}(s,a)
\]
We can show that $Q^{k}(s,a)$ is a upper bound for $Q(s,a)$.
\[
Q(s,a) - Q^{k}(s,a) = PV^{k}(s,a) - z^k(s,a) - b(s,a) \leq b(s,a) - b(s,a) = 0
\]

In order to implement it with Neural network we need a way to estimate $b(s,a)$. In the Linear implementation of the algorithm $b(s, a)$ was estimated using the feature vector which we can't use in this case. The idea was to have multiple neural network and estimate z by taking the average of this NNs and b as their std. 
\fi

\if 0
\subsection{Plots of baseline and best clipping values}

It works, plots from different Mujoco envs performing grid search on the clipping value. The following plots only have the baseline (one neural network without adding the standard deviation) and the clipping value which performs the best. For all the following plots we only use 4 neural networks. 

\section{f-irl}

\begin{figure}[H]
    \centering
    \includegraphics[width=\linewidth]{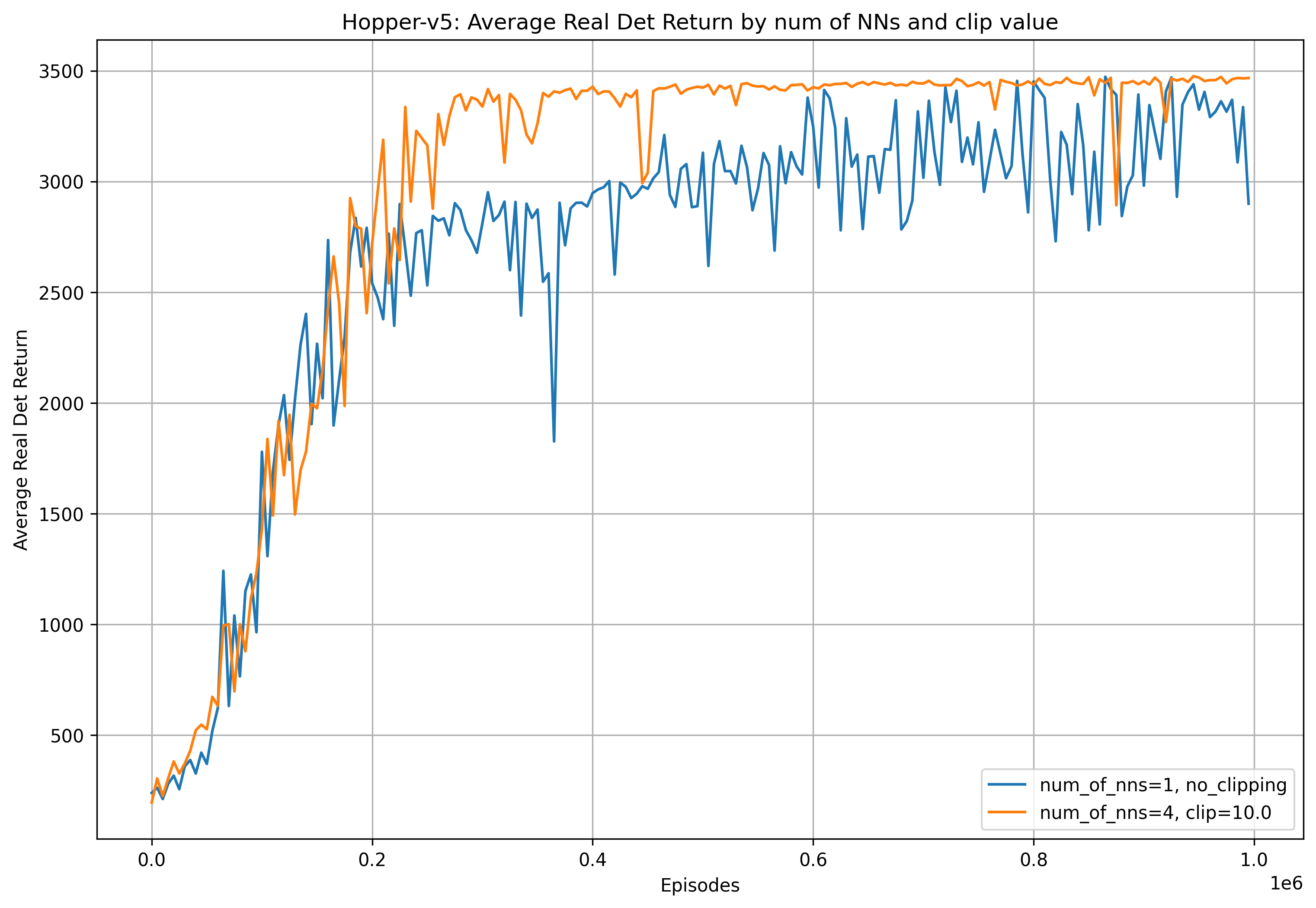}
    \caption{}
    \label{fig:enter-label}
\end{figure}

\begin{figure}[H]
    \centering
    \includegraphics[width=\linewidth]{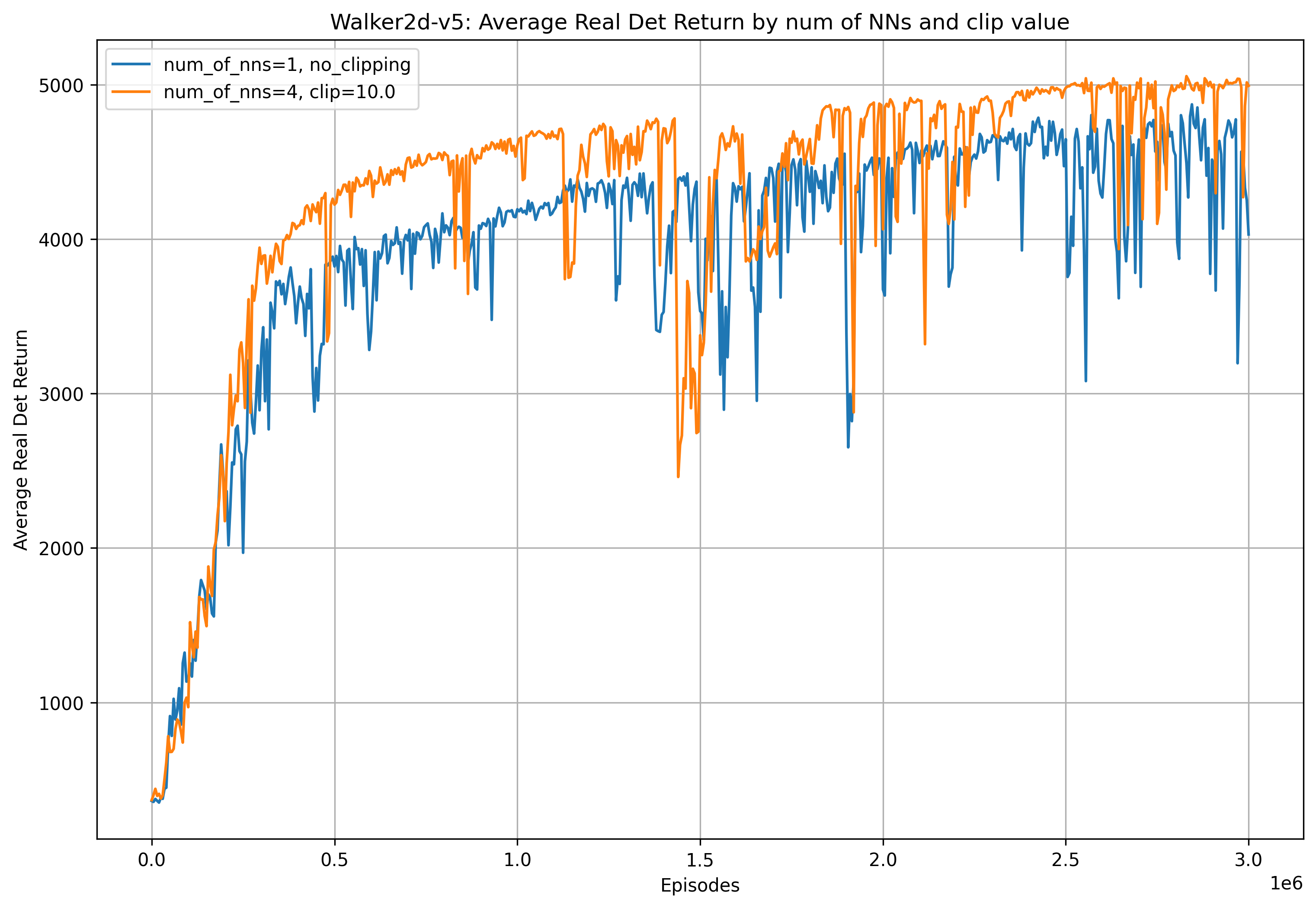}
    \caption{}
    \label{fig:enter-label}
\end{figure}

\begin{figure}[H]
    \centering
    \includegraphics[width=\linewidth]{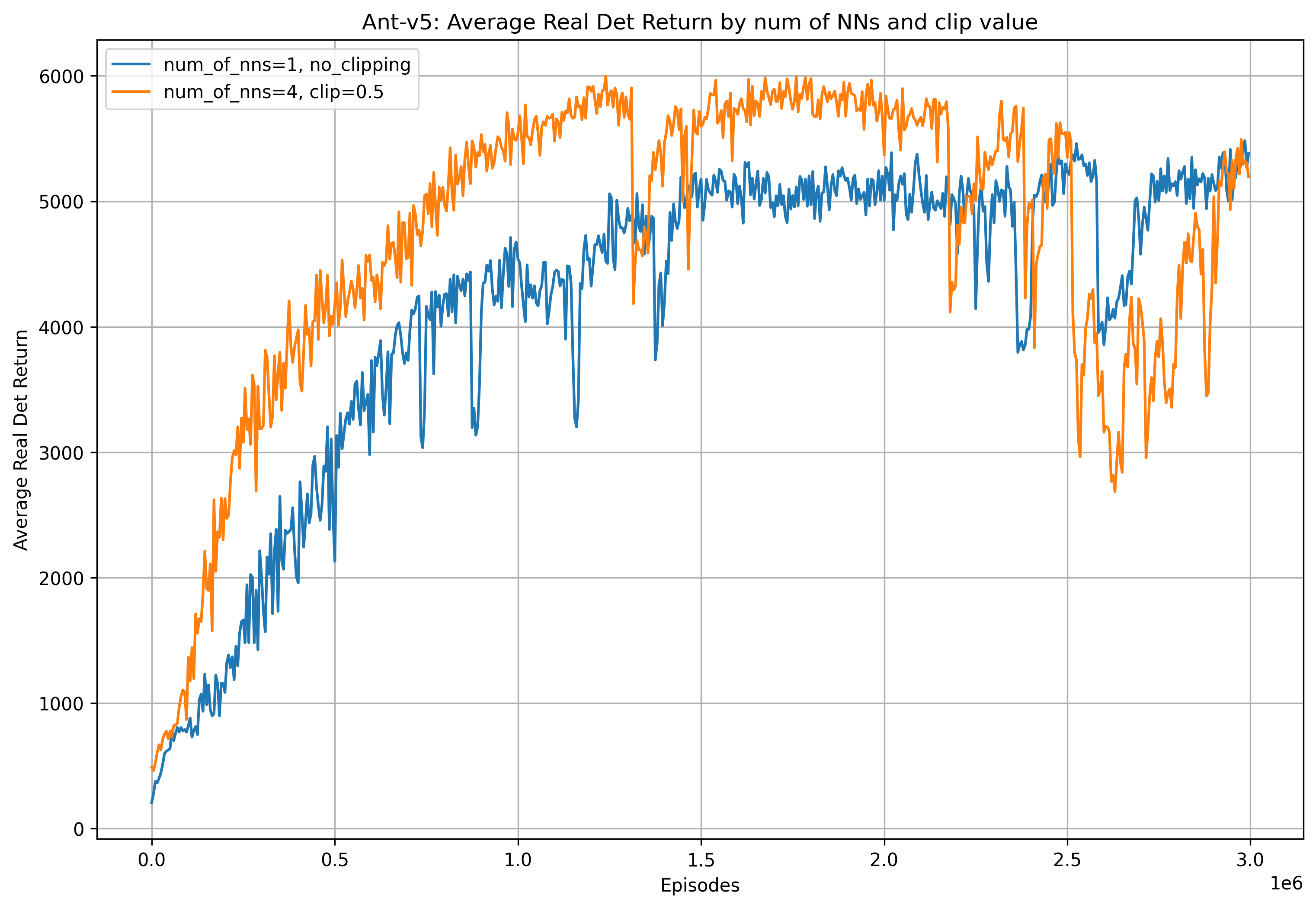}
    \caption{In other you can see the difference much more}
    \label{fig:enter-label}
\end{figure}

\begin{figure}[H]
    \centering
    \includegraphics[width=\linewidth]{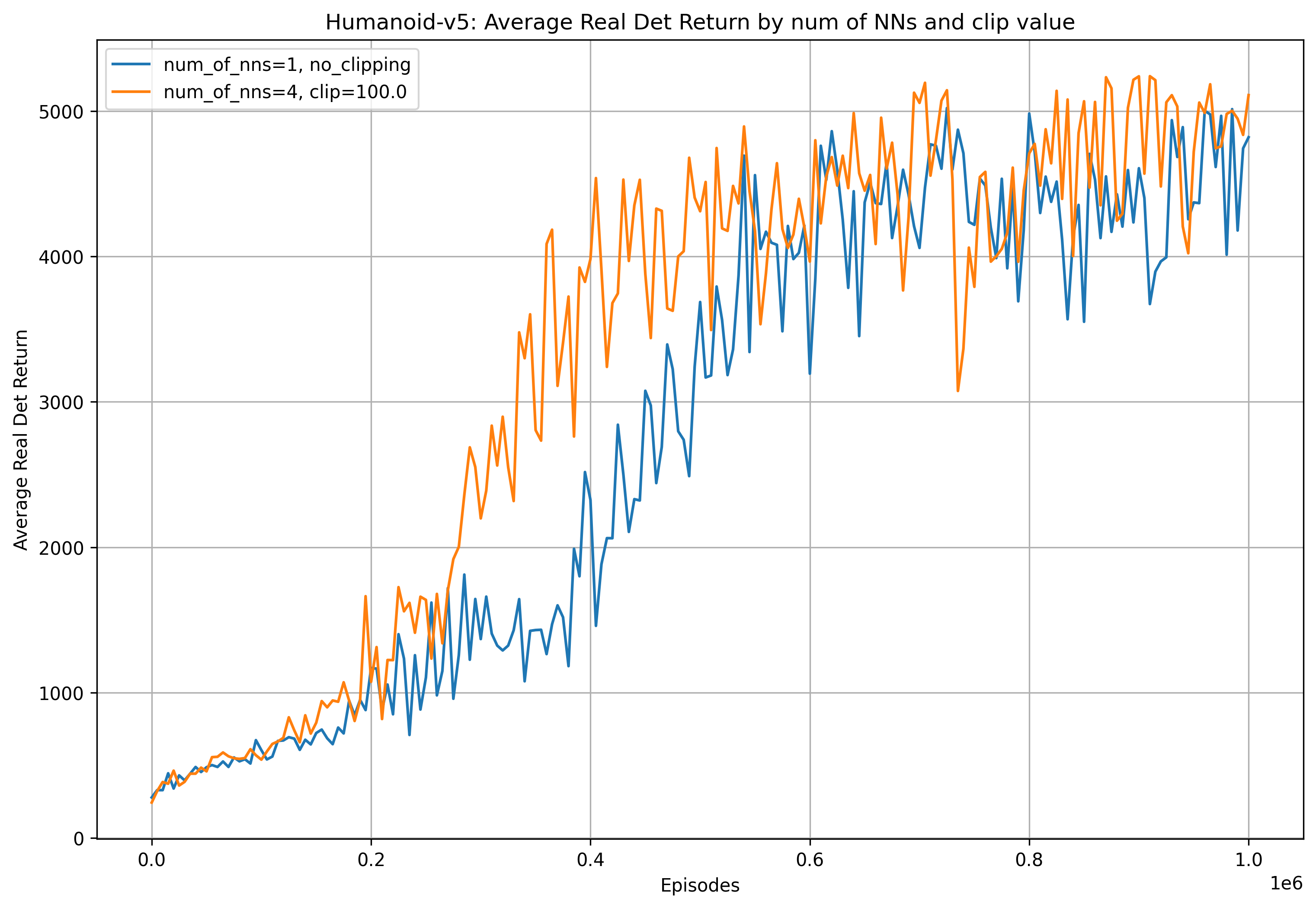}
    \caption{In other you can see the difference much more}
    \label{fig:enter-label}
\end{figure}

\begin{figure}[H]
    \centering
    \includegraphics[width=\linewidth]{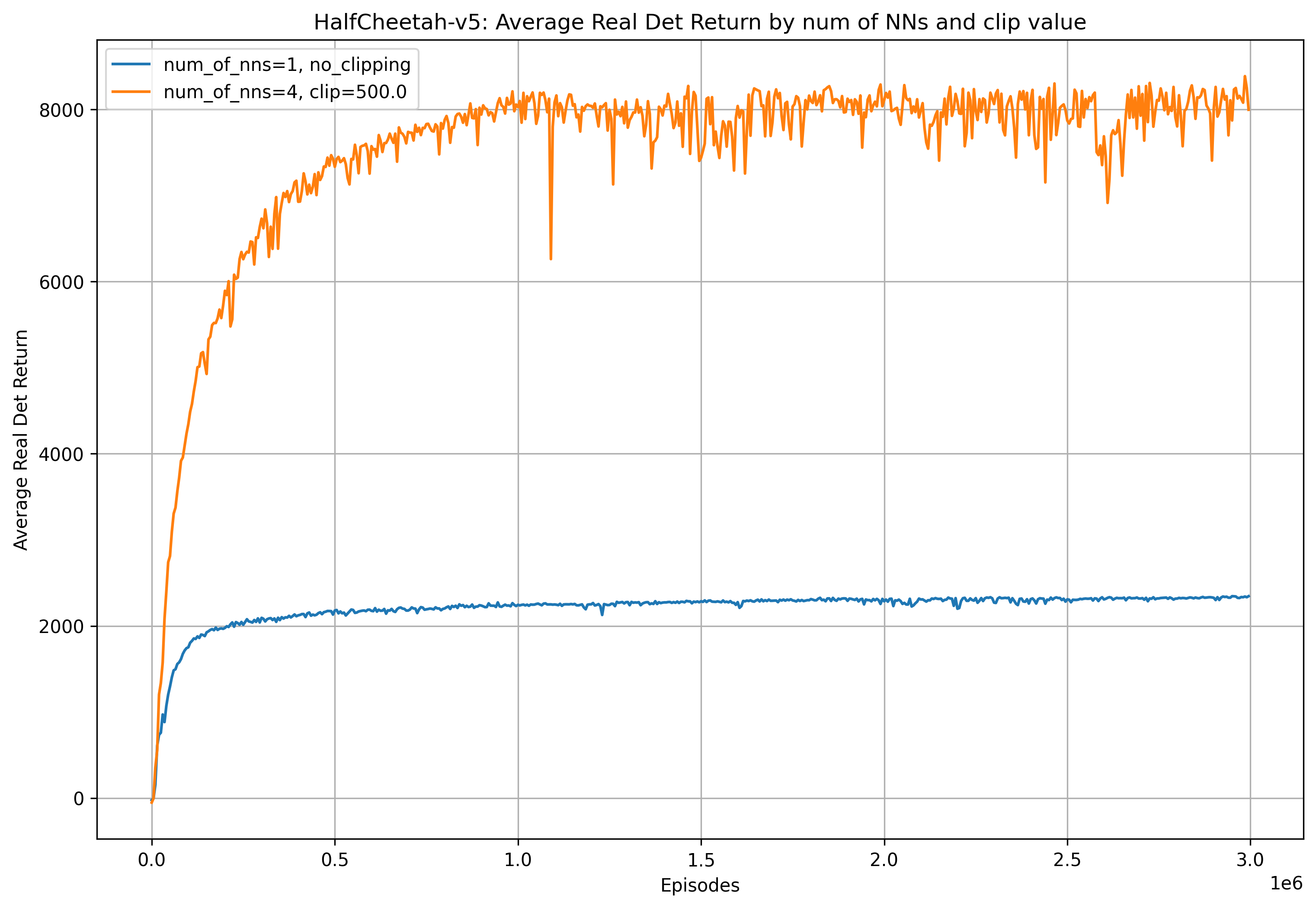}
    \caption{In this case we aren't able to replicate the results of the paper. Probably the right hyperparameter wasn't the one used in the final experiments. So we can see that the improvement by adding the right clipping value is much larger.}
    \label{fig:enter-label}
\end{figure}

\section{Maximum-Likelihood Inverse Reinforcement Learning}

\begin{figure}[H]
    \centering
    \includegraphics[width=\linewidth]{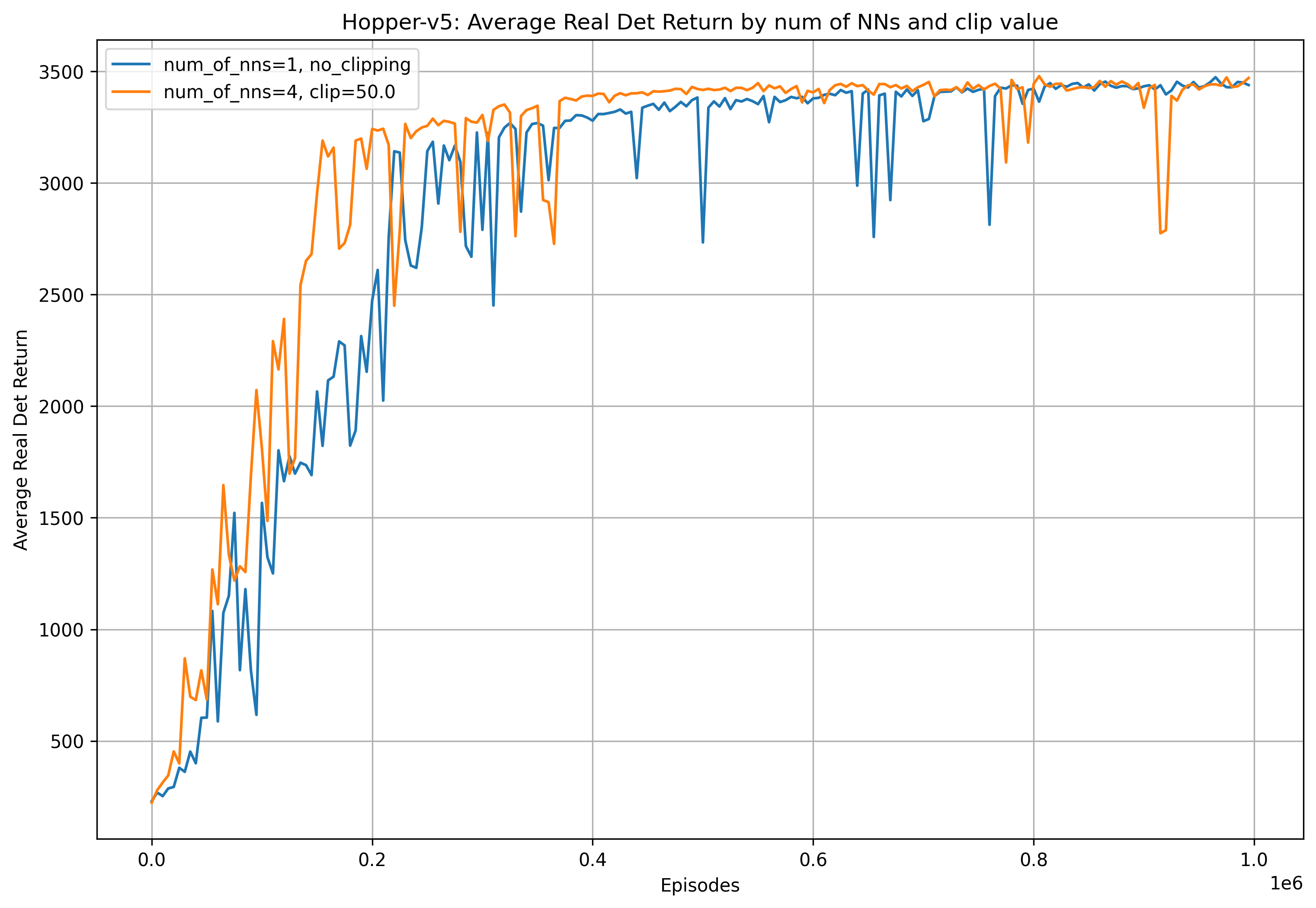}
    \caption{}
    \label{fig:enter-label}
\end{figure}

\begin{figure}[H]
    \centering
    \includegraphics[width=\linewidth]{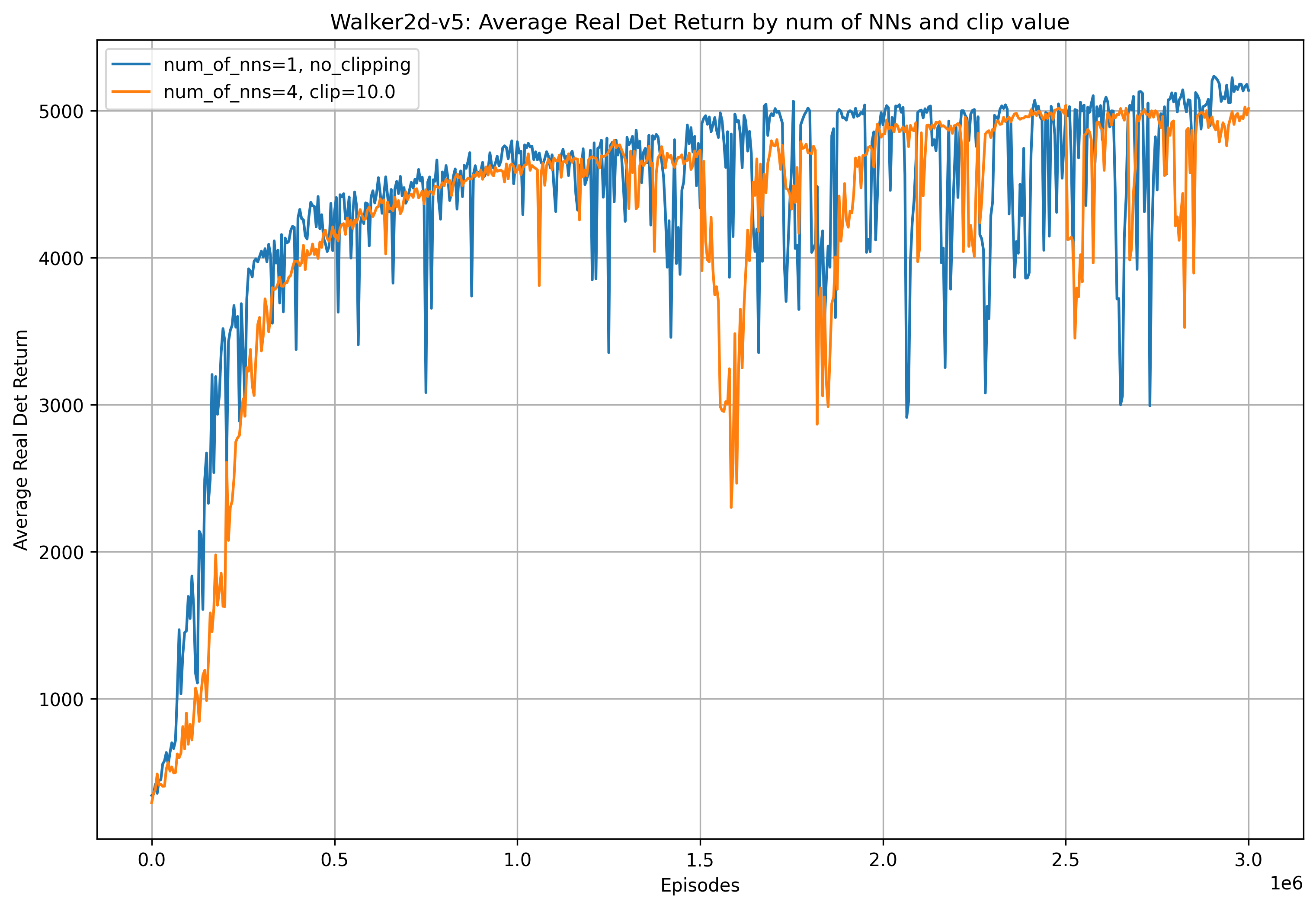}
    \caption{}
    \label{fig:enter-label}
\end{figure}

\begin{figure}[H]
    \centering
    \includegraphics[width=\linewidth]{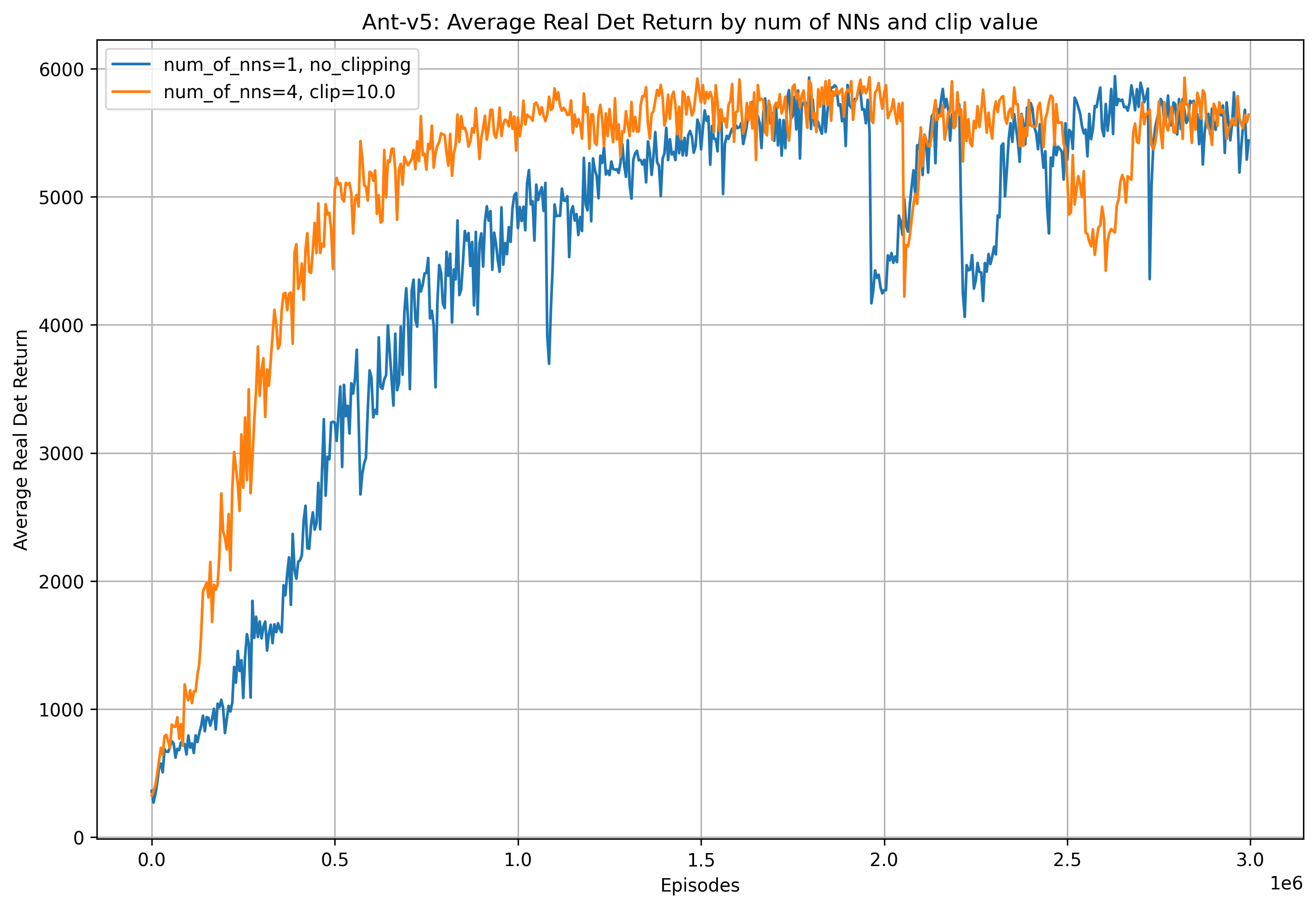}
    \caption{}
    \label{fig:enter-label}
\end{figure}

\begin{figure}[H]
    \centering
    \includegraphics[width=\linewidth]{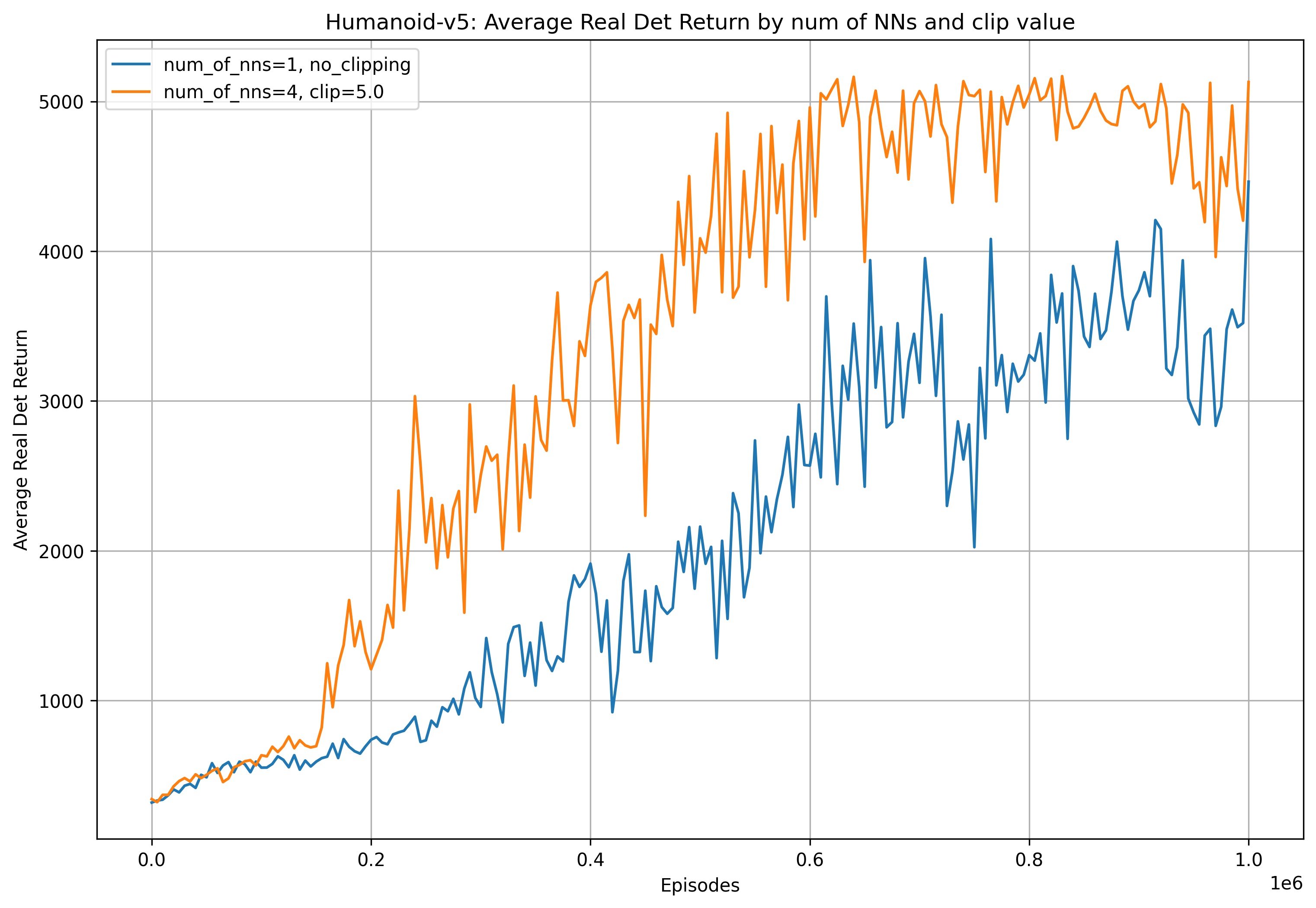}
    \caption{}
    \label{fig:enter-label}
\end{figure}

\begin{figure}[H]
    \centering
    \includegraphics[width=\linewidth]{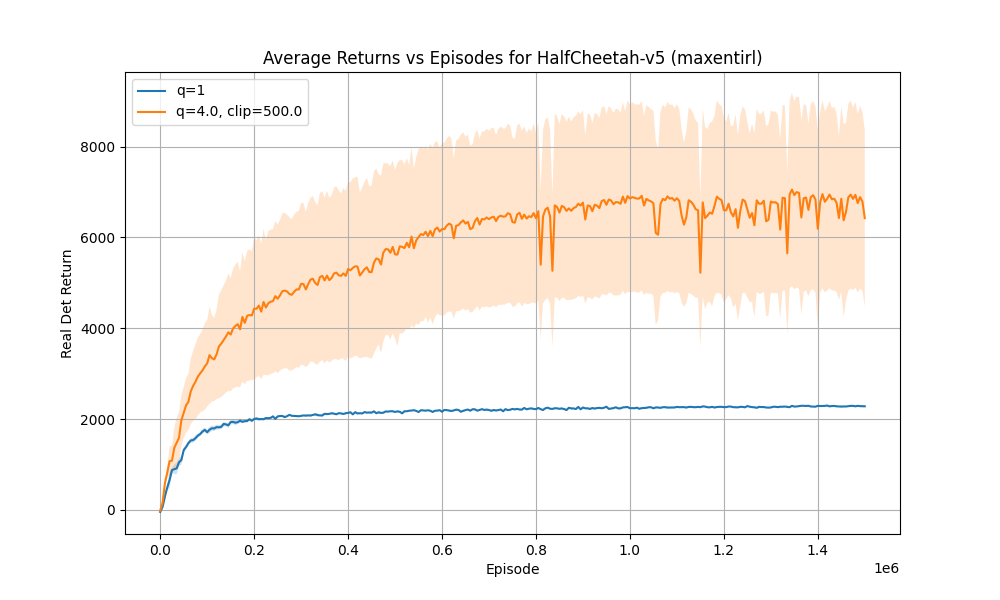}
    \caption{}
    \label{fig:enter-label}
\end{figure}

\section{CSIL}

\begin{figure}[H]
    \centering
    \includegraphics[width=\linewidth]{figs/grid_search_CSIL/ant_average_real_det_return_4.0_without_conf_int_10.0.png}
    \caption{}
    \label{fig:enter-label}
\end{figure}

\begin{figure}[H]
    \centering
    \includegraphics[width=\linewidth]{figs/grid_search_CSIL/hopper_average_real_det_return_4.0_without_conf_int_10.0.png}
    \caption{}
    \label{fig:enter-label}
\end{figure}

\begin{figure}[H]
    \centering
    \includegraphics[width=\linewidth]{figs/grid_search_CSIL/humanoid_ average_real_det_return_4.0_0.1.png}
    \caption{}
    \label{fig:enter-label}
\end{figure}

\begin{figure}[H]
    \centering
    \includegraphics[width=\linewidth]{figs/grid_search_CSIL/walker_average_real_det_return_4.0_0.5.png}
    \caption{}
    \label{fig:enter-label}
\end{figure}

\subsection{maxentirl SA}

\begin{figure}[H]
    \centering
    \includegraphics[width=\linewidth]{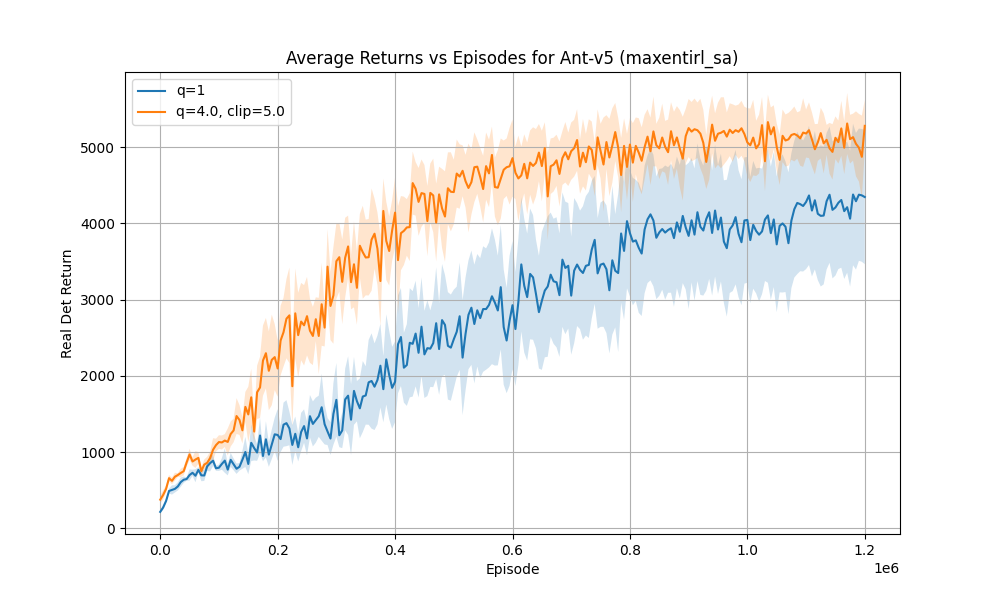}
    \caption{}
    \label{fig:enter-label}
\end{figure}

\begin{figure}[H]
    \centering
    \includegraphics[width=\linewidth]{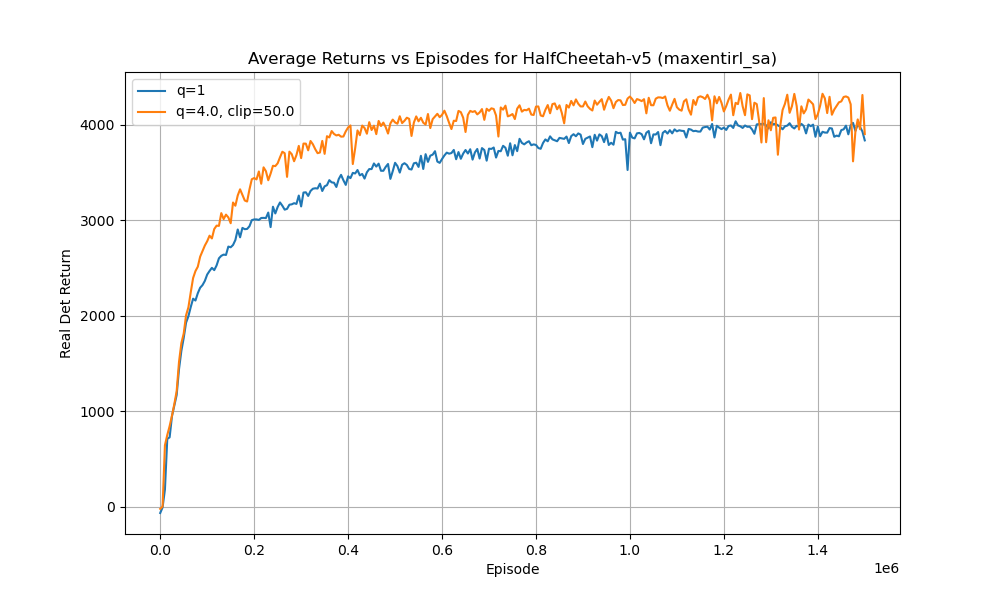}
    \caption{}
    \label{fig:enter-label}
\end{figure}

\begin{figure}[H]
    \centering
    \includegraphics[width=\linewidth]{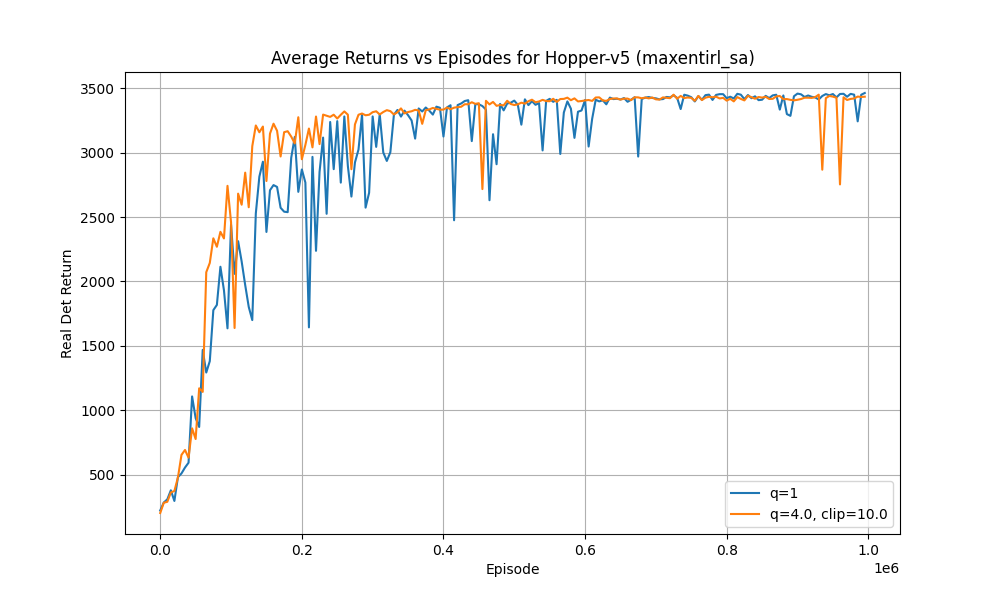}
    \caption{}
    \label{fig:enter-label}
\end{figure}

\begin{figure}[H]
    \centering
    \includegraphics[width=\linewidth]{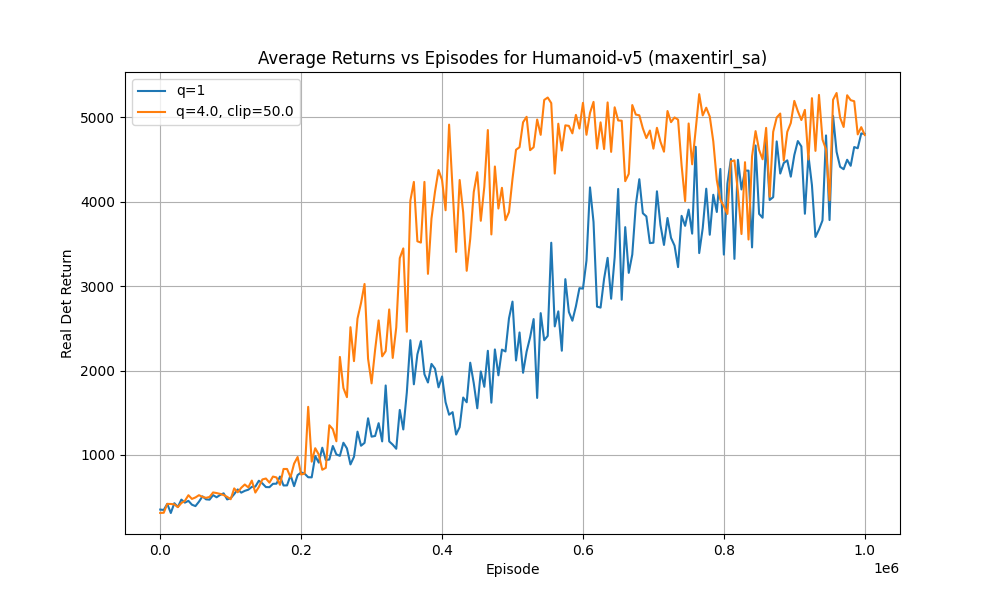}
    \caption{}
    \label{fig:enter-label}
\end{figure}

\begin{figure}[H]
    \centering
    \includegraphics[width=\linewidth]{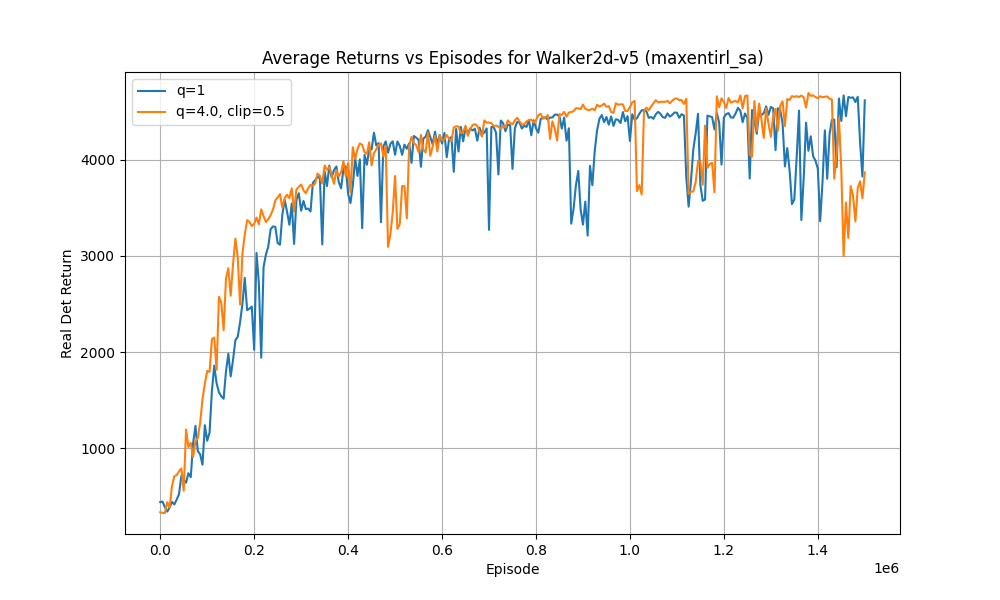}
    \caption{}
    \label{fig:enter-label}
\end{figure}

\subsection{maxentirl with dynamic clipping (first attempt)}

\begin{figure}[H]
    \centering
    \includegraphics[width=\linewidth]{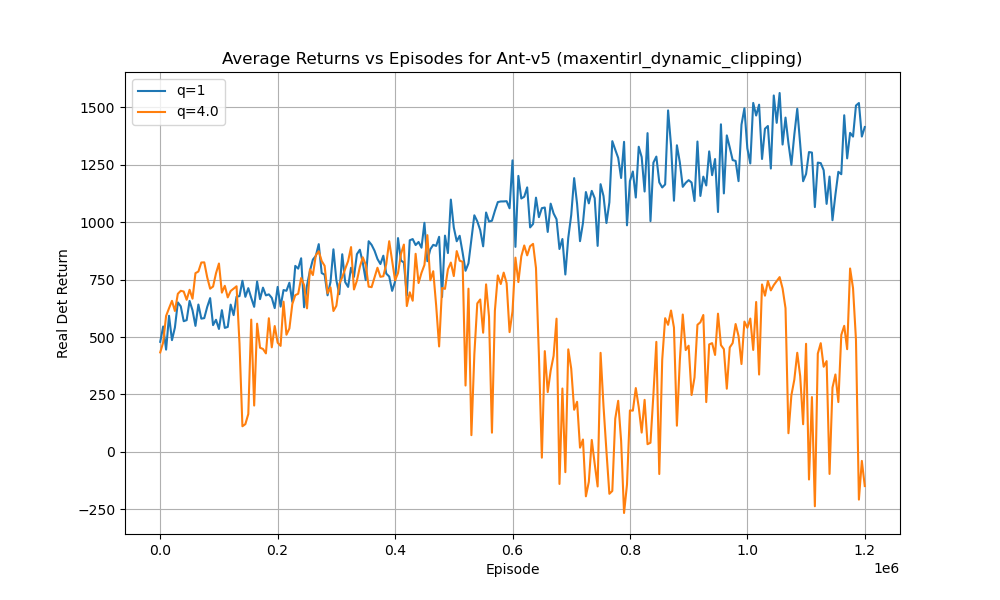}
    \caption{}
    \label{fig:enter-label}
\end{figure}

\begin{figure}[H]
    \centering
    \includegraphics[width=\linewidth]{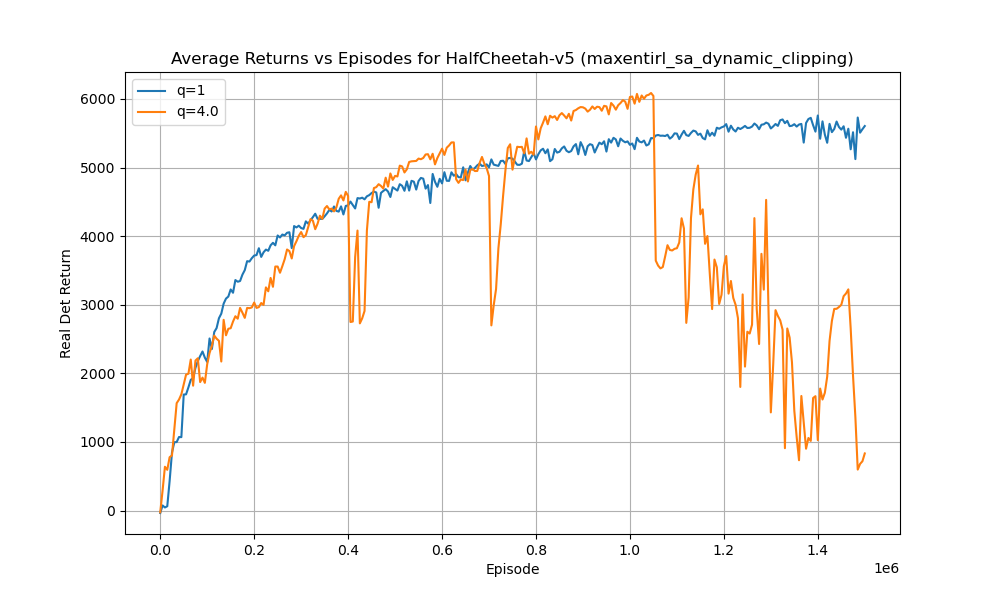}
    \caption{}
    \label{fig:enter-label}
\end{figure}

\begin{figure}[H]
    \centering
    \includegraphics[width=\linewidth]{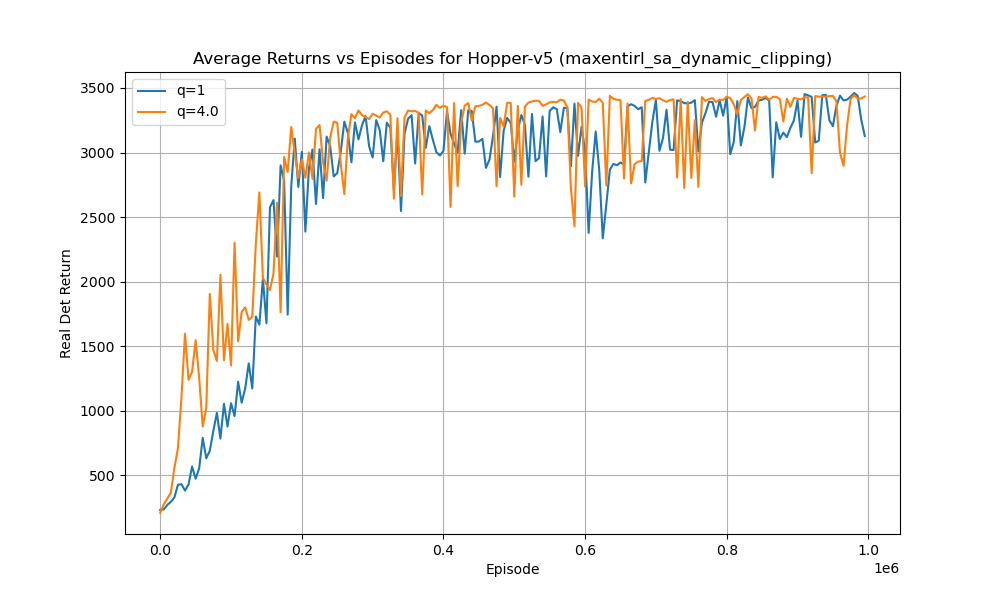}
    \caption{}
    \label{fig:enter-label}
\end{figure}

\begin{figure}[H]
    \centering
    \includegraphics[width=\linewidth]{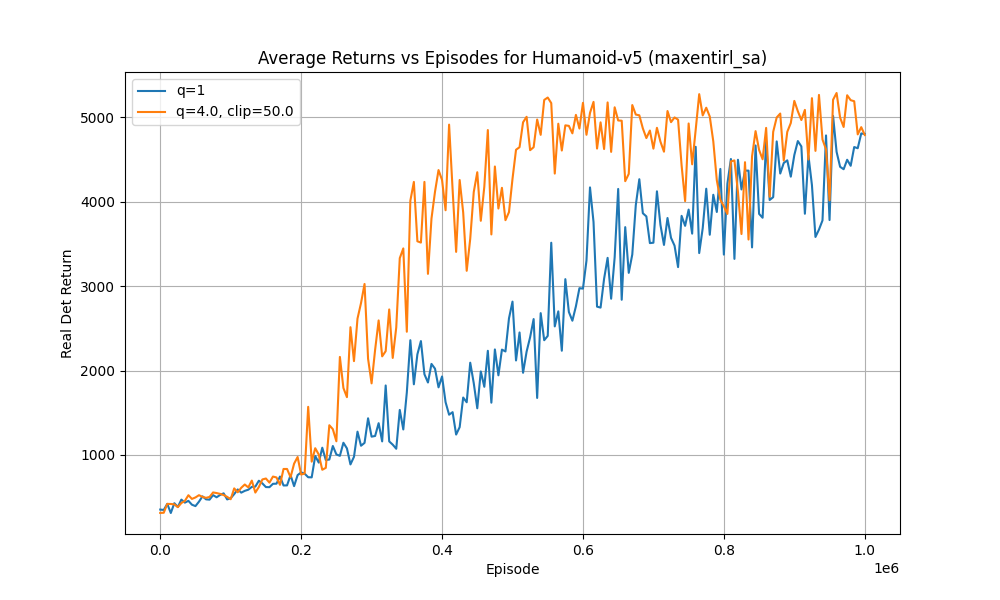}
    \caption{}
    \label{fig:enter-label}
\end{figure}

\begin{figure}[H]
    \centering
    \includegraphics[width=\linewidth]{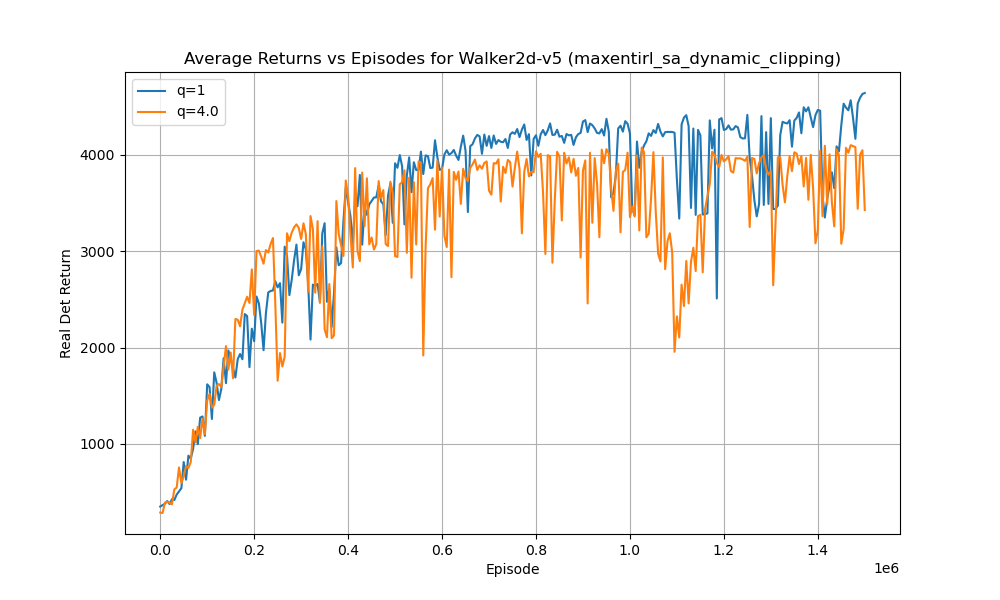}
    \caption{}
    \label{fig:enter-label}
\end{figure}
\fi
\if 0
\subsection{Why is clipping necessary?}

So we tested it on Mujoco, comparing different number of NNs to see if having a higher number would lead to better results. 

\begin{figure}[H]
    \centering
    \includegraphics[width=\linewidth]{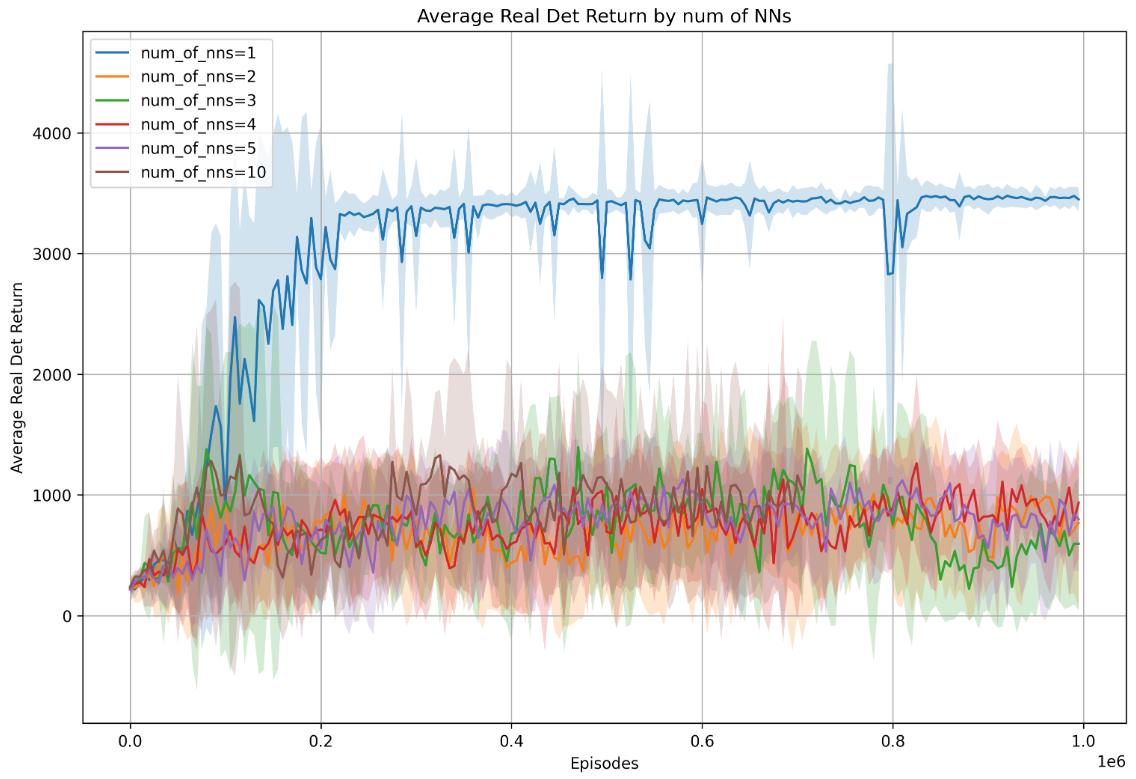}
    \caption{Mujoco Hopper with added std as described in Algorithm \ref{alg:estimateQ_clipped}, but without the clipping, with one neural network it does well but adding more doesn't give good performance. For the agent we're using the SAC agent. }
    \label{fig:enter-label}
\end{figure}

\noindent The above formulation doesn't work; in the following proof, we justify why the clipping is needed.

\noindent As we showed above, we have that:
\[
Q(s, a) \leq Q^k(s, a)
\]

We need the error of our estimation to be bounded:
\[
\| P V^k(s, a) - z_n(s, a) \|_\infty \leq b_n(s, a)
\]
which implies:
\[
-b(s, a) \leq P V^k(s, a) - z_n(s, a)
\]
\[
\implies z_n(s, a) \leq P V^k(s, a) + b_n(s, a).
\]

\noindent The update of the $q$-value is:
\[
Q^{k+1} = V^k + \gamma z_n + \gamma b_n
\]
\[
\leq V^k + \gamma P V^k + 2 \gamma b_n.
\]

\noindent We know that:
\[
\| Q^{k+1} \|_\infty \leq \frac{\| r \|_\infty}{1 - \gamma}  \tag{1}
\]
which is acceptable as long as $\gamma < 1$. If that is not the case, the $Q$ values might diverge or be bounded by a negative value, which doesn't make sense. 

\noindent In our case:
\begin{align*}
\| Q^{n+1} \|_\infty &\leq \| r_n \|_\infty + \gamma \| P V_n \|_\infty + 2 \gamma \| b_n \|_\infty \\
&\leq \| r_n \|_\infty + \gamma \| V_n \|_\infty + 2 \gamma \| V_n \|_\infty \\
&\leq \| r_n \|_\infty + 3 \gamma \| Q_n \|_\infty
\end{align*}

\noindent This implies:
\[
\| Q^n \|_\infty \leq \frac{\| r_n \|_\infty}{1 - 3 \gamma}
\]

\noindent Since:
\[
V_m(s) = \sum_a Q(s, a) \pi(s, a),
\]
we have:
\[
V_m(s) \leq \max_a Q(s, a).
\]

\noindent In this case, $\gamma < \frac{1}{3}$ in order for the above inequality to hold, but if that's the case the $Q$ would only consider a few steps into the future. If, on the other hand, $\gamma > \frac{1}{3}$ Q could diverge. 

\noindent To overcome the problem, we clip $b_n$:
\[
\| b_n \|_\infty \leq B_{\text{max}}.
\]

\noindent Thus:
\[
\| Q^{n+1} \|_\infty \leq \| r_n \|_\infty + 2 \delta B_{\text{max}} + \delta \| Q^{k} \|_\infty.
\]

\noindent We can write:
\[
\| r_n \|_\infty + 2 \delta B_{\text{max}} = \| r_n' \|_\infty,
\]
thus returning to (1) where $\gamma < 1$.

This shows why the clipping of $b$ is necessary. 
\fi

\section{Theoretical Analysis}

\subsection{Upper bounding the policy regret}
\if 0
\begin{lemma}
Let $\delta \in [0,1)$ and $L \geq 1$, then for any $k \geq 0$ and for any fixed $V \in \mathcal{V}$, it holds for a fixed state-action pair $s,a \in \sspace\times \aspace$
\begin{equation*}
\mathbb{P}\bs{ \min_{\ell\in[L]}\widehat{P}_\ell \bar{V}(s,a) \leq P \bar{V}(s,a)} \geq 1 - e^{-2 L/7}.
\end{equation*}
\end{lemma}
\begin{proof}
Without loss of generality we assume that for each state-action pair $s,a$ the distribution $P(\cdot|s,a)$ has support on exactly two states. This is without loss of generality because any MDP can be recast in an MDP satisfying the required property at the cost of incresing the number of states from $\abs{\sspace}$ to $\abs{\sspace} \log \abs{\aspace}$.
Moreover, let us consider a fixed element of the state value function class $V \in \mathcal{V}:=\bc{f \in \mathbb{R}^{\abs{S}}~~|~~ \norm{f}_{\infty} \leq \frac{1}{1-\gamma}, f(s) \geq 0 ~~\forall s \in \sspace}$. We can now denote by $V_{\min}(s,a) = \min_{s' \in \mathrm{supp}(P(\cdot|s,a))}  V(s')$, $V_{\max}(s,a) = \max_{s' \in \mathrm{supp}(P(\cdot|s,a))}  V(s')$ and similarly we define the states $s_{\min}$ and $s_{\max}$ as follows
$s_{\min}(s,a) = \argmin_{s' \in \mathrm{supp}(P(\cdot|s,a))}  V(s')$ and $s_{\max}(s,a) = \argmax_{s' \in \mathrm{supp}(P(\cdot|s,a))}  V(s')$.
Given this definitions, we can see that for $s' \sim P(\cdot|s,a)$
\begin{equation*}
     \bar{V}(s') = \frac{V(s') - V_{\min}(s,a)}{V_{\max}(s,a) - V_{\min}(s,a)}
\end{equation*}
is a Bernoulli random variable and for any $k\in[K]$
$$
\widehat{P}^k_\ell \bar{V}(s,a) = (N^k_\ell(s,a)+2)^{-1}\sum_{\bar{s},\bar{a},s' \in \mathcal{R}^k_\ell} \bar{V}(s') \mathds{1}_{\bc{\bar{s},\bar{a}=s,a}}
$$
is a multinomial distribution. 
Therefore, applying \cite[Lemma 3]{cassel2024batch}, we conclude the  proof.
\end{proof}
\begin{corollary}
Choosing $L = \frac{7}{2}\log \br{\frac{\abs{\sspace}\abs{\aspace}}{\delta}}$ it holds that for any fixed $V \in \mathcal{V}$
\begin{equation*}
\mathbb{P}\bs{ \min_{\ell\in[L]} \widehat{P}^k_\ell \bar{V}(s,a) \leq P \bar{V}(s,a) ~~~ \forall ~~~s,a\in \sspace\times\aspace} \geq 1 - \delta.
\end{equation*}
\end{corollary}

\begin{corollary}
\label{cor:optimism}
   Choosing $L = \frac{7 \abs{\sspace}}{2}\log \br{\frac{\abs{\sspace}\abs{\aspace}}{\delta (1-\gamma) \epsilon_{\mathrm{cov}}}}$ it holds that
\begin{equation*}
\mathbb{P}\bs{ \min_{\ell\in[L]} \widehat{P}^k_\ell V^k(s,a) \leq P V^k(s,a) + 2 \epsilon_{\mathrm{cov}} ~~~ \forall ~~~s,a\in \sspace\times\aspace} \geq 1 - \delta.
\end{equation*} 
\end{corollary}
\begin{proof}
    
Let us consider the state value function $V^k \in \mathcal{V}$ and let us introduce a $\epsilon_{\mathrm{cov}}$-covering set $\mathcal{C}_{\epsilon_{\mathrm{cov}}}(\mathcal{V})$ such that for any $V \in \mathcal{V}$ there exists $\tilde{V}\in \mathcal{C}_{\epsilon_{\mathrm{cov}}}(\mathcal{V})$ such that $\norm{\tilde{V} - V}_{\infty} \leq \epsilon_{\mathrm{cov}}$.
Therefore let us denote by $\tilde{V}^k$ the element of $\mathcal{C}_{\epsilon_{\mathrm{cov}}}(\mathcal{V})$ such that $\norm{V^k - \tilde{V}^k }_{\infty} \leq \epsilon_{\mathrm{cov}} $.
    Let us also define 
    $$
    \bar{V}^k(s') = \frac{\tilde{V}^k(s') - \tilde{V}^k_{\min}(s,a)}{\tilde{V}^k_{\max}(s,a) - \tilde{V}^k_{\min}(s,a)}
    $$
    which implies
    $$
    P \bar{V}^k(s,a) = \frac{P \tilde{V}^k(s,a) - \tilde{V}^k_{\min}(s,a)}{\tilde{V}^k_{\max}(s,a) - \tilde{V}^k_{\min}(s,a)}
    $$
    Moreover, these definitions imply that
\begin{align*}
\widehat{P}^k_\ell \bar{V}^k(s,a) &= (N^k_\ell(s,a)+2)^{-1}\sum_{\bar{s},\bar{a},s' \in \mathcal{R}_\ell^k} \bar{V}^k(s') \mathds{1}_{\bc{\bar{s},\bar{a}=s,a}} \\
&= (N^k_\ell(s,a)+2)^{-1}\sum_{\bar{s},\bar{a},s' \in \mathcal{R}_\ell^k}  \frac{\tilde{V}^k(s') - \tilde{V}^k_{\min}(s,a)}{\tilde{V}^k_{\max}(s,a) - \tilde{V}^k_{\min}(s,a)} \mathds{1}_{\bc{\bar{s},\bar{a}=s,a}} \\
&=  \frac{\widehat{P}^k_\ell \tilde{V}^k(s,a) - (N^k_\ell(s,a)+2)^{-1}N^k_\ell \tilde{V}^k_{\min}(s,a)}{\tilde{V}^k_{\max}(s,a) - \tilde{V}^k_{\min}(s,a)} \\
&> \frac{\widehat{P}^k_\ell \tilde{V}^k(s,a) - \tilde{V}^k_{\min}(s,a)}{\tilde{V}^k_{\max}(s,a) - \tilde{V}^k_{\min}(s,a)}.
\end{align*}
    Therefore for any $\ell \in [L]$
    \begin{align*}
\widehat{P}^k_\ell \bar{V}^k(s,a) \leq P \bar{V}^k(s,a) &\implies \widehat{P}^k_\ell \bar{V}^k(s,a) \leq \frac{P \tilde{V}^k(s,a) - \tilde{V}^k_{\min}(s,a)}{\tilde{V}^k_{\max}(s,a) - \tilde{V}^k_{\min}(s,a)} \\
& \implies \frac{\widehat{P}^k_\ell \tilde{V}^k(s,a) - \tilde{V}^k_{\min}(s,a)}{\tilde{V}^k_{\max}(s,a) - \tilde{V}^k_{\min}(s,a)}  \leq \frac{P \tilde{V}^k(s,a) - \tilde{V}^k_{\min}(s,a)}{\tilde{V}^k_{\max}(s,a) - \tilde{V}^k_{\min}(s,a)}  \\
& \implies \widehat{P}^k_\ell \tilde{V}^k(s,a)  \leq P \tilde{V}^k(s,a) .
    \end{align*}
Therefore, by the above chain of implications and by Corollary~\ref{cor:uniform_bound}, choosing $L =\frac{7}{2}\log \br{\frac{\abs{\sspace}\abs{\aspace} \abs{\mathcal{C}_{\epsilon_{\mathrm{cov}}}(\mathcal{V})}}{\delta}}$, we have that with probability $1-\delta$
\[\widehat{P}^k_\ell \tilde{V}^k(s,a)  \leq P \tilde{V}^k(s,a)\]

Then by the properties of the covering set, we have that
\begin{align*}
\min_{\ell \in [L]} \widehat{P}^k_{\ell} V^k(s,a) \leq \min_{\ell \in [L]} \widehat{P}^k_{\ell} \tilde{V}^k(s,a) + \epsilon_{\mathrm{cov}},
\end{align*}
and that
\begin{align*}
   P V^k(s,a) \geq P\tilde{V}^k(s,a) - \epsilon_{\mathrm{cov}}. 
\end{align*}
Therefore, we have that
$$
\min_{\ell \in [L]} \widehat{P}^k_{\ell} \tilde{V}^k(s,a) \leq P\tilde{V}^k(s,a) \implies \min_{\ell \in [L]} \widehat{P}^k_{\ell} V^k(s,a) \leq PV^k(s,a) + 2 \epsilon_{\mathrm{cov}}.
$$
Finally, we conclude computing the size of the covering set. Since it is a infinite norm $\abs{\sspace}$ dimensional ball of radius $\frac{1}{1-\gamma}$. We have that $\abs{\mathcal{C}_{\epsilon_{\mathrm{cov}}}(\mathcal{V})} \leq \br{\frac{1}{(1-\gamma)\epsilon_{\mathrm{cov}}}}^{\abs{\sspace}}$. Plugging in this fact in the choice of $L$ concludes the proof.
\end{proof}
\textcolor{red}{Alternative proof (maybe)}
\fi
\coroptimism*
\begin{proof}
Let us fix a state-action-next state triplet $s,a,s'$, a batch index $\ell \in [L]$ and an iteration index $k \in [K]$.
Then, we consider the following stochastic estimator for the probability of transitioning to $s'$ from state $s$ taking action $a$.
\begin{align*}
\widehat{P}^k_\ell(s'|s,a) &=\frac{N^k_\ell(s,a,s')}{N^k_\ell(s,a) + 2} = \frac{\sum_{\bar{s},\bar{a},\bar{s}'\in \mathcal{R}^k_\ell} \mathds{1}_{\bc{\bar{s},\bar{a},\bar{s}' = s,a,s'}}}{N^k_\ell(s,a) + 2} \\
&= \frac{\sum_{\bar{s},\bar{a},\bar{s}'\in \mathcal{R}^k_\ell : \bar{s},\bar{a} = s,a } \mathds{1}_{\bc{\bar{s}' = s'}}}{N^k_\ell(s,a) + 2}.
\end{align*}
Notice that in above estimators the denominator corresponds to the number of visits of the pair $s,a$ up to the time $k \in [K]$ within the batch $\ell\in[L]$, i.e. $N^k_\ell(s,a)$,  increased by $2$ for technical reasons. At the numerator instead we have the sum of $N^k_\ell(s,a)$ indicators functions which equals one when the state following the state action pair $s,a$ is equal to $s'$. Each of this indicator is a random variable distributed 
according to a Bernoulli random variable with mean $P(s'|s,a)$.
At this point, we can use the technique introduced in \cite{cassel2024batch}. In particular, we will show that
\begin{equation*}
\mathbb{P}\bs{ \widehat{P}^k_\ell(s'|s,a) \leq P(s'|s,a)} \geq \frac{1}{4} ~~~ \forall s,a,s' \in \sspace\times\aspace\times\mathrm{children}(s), \forall k \in [K], \forall \ell\in[L].
\end{equation*}
We distinguish $3$ cases: $N^k_\ell(s,a) = 0$, $N^k_\ell(s,a) = 1$, $N^k_\ell(s,a) \geq 2$.
If $N^k_\ell(s,a) = 0$, then, we have that
\begin{align*}
    \widehat{P}^k_\ell(s'|s,a) = 0 \leq P(s'|s,a).
\end{align*}
If $N^k_\ell(s,a) = 1$, we distinguish two cases. If $P(s'|s,a) \geq \frac{1}{3}$, then
\begin{align*}
\widehat{P}^k_\ell(s'|s,a) \leq \frac{1}{3} \leq P(s'|s,a).
\end{align*}
Otherwise, if $N^k_\ell(s,a) = 1$, and $P(s'|s,a) \leq \frac{1}{3}$ then
\begin{align*}
\mathbb{P}\bs{
\frac{\mathds{1}_{\bc{\bar{s}' = s'}}}{3} \leq P(s'|s,a) 
} = \mathbb{P}\bs{
\mathds{1}_{\bc{\bar{s}' = s'}} = 0} = 1 - P(s'|s,a) \geq \frac{2}{3} \geq \frac{1}{4}.
\end{align*}
Finally, for $N^k_\ell(s,a) \geq 2$,we have that for $P(s'|s,a) \geq 1 - \frac{1}{N^k_\ell(s,a)}$ it holds that
\[
\widehat{P}^k_\ell(s'|s,a) \leq \frac{N^k_\ell(s,a)}{N^k_\ell(s,a)+2} = 1 - \frac{2}{N^k_\ell(s,a) + 2} \leq 1 - \frac{1}{N^k_\ell(s,a)} \leq P(s'|s,a),
\]
where the last inequality holds for $N^k_\ell \geq 2$.
Otherwise, for $P(s'|s,a) \leq 1 - \frac{1}{N^k_\ell(s,a)}$ we can apply \citep[Lemma 2]{cassel2024batch} (adapted from \citep[Corollary 1]{wiklund2023another} ) to obtain 
\begin{equation*}
\mathbb{P}\bs{\widehat{P}^k_\ell(s'|s,a) \leq P(s'|s,a)} \geq 
\mathbb{P}\bs{ \sum_{\bar{s},\bar{a},\bar{s}'\in \mathcal{R}^k_\ell : \bar{s},\bar{a} = s,a } \mathds{1}_{\bc{\bar{s}' = s'}} \leq N^k_\ell(s,a) P(s'|s,a)} \geq \frac{1}{4}.
\end{equation*}
At this point notice that for any positive vector $V \in [0, (1-\gamma)^{-1}]^{\abs{\sspace}}$, it holds that
\begin{align*}
\mathbb{P}\bs{\widehat{P}^k_\ell(s'|s,a) \leq P(s'|s,a)~~\forall s'\in \mathrm{children}(s)} &= \mathbb{P}\bs{\widehat{P}^k_\ell(s'|s,a)  V(s') \leq P(s'|s,a) V(s')~~\forall s'\in \mathrm{children}(s)} \\ & \leq \mathbb{P}\bs{\sum_{s'\in\sspace}\widehat{P}^k_\ell(s'|s,a)  V(s') \leq \sum_{s'\in\sspace} P(s'|s,a) V(s')}
\end{align*}
where the inequality holds because of the following implication between events
\begin{equation*}
\widehat{P}^k_\ell(s'|s,a)  V(s') \leq  P(s'|s,a) V(s') ~~\forall s'\in \mathrm{children}(s)
 \implies \sum_{s'\in \mathrm{children}(s)}\widehat{P}^k_\ell(s'|s,a)  V(s') \leq \sum_{s'\in \mathrm{children}(s)} P(s'|s,a) V(s').
\end{equation*}
Moreover, since the estimation at each state $s'$ is independent we have that
\begin{align*}
\mathbb{P}\bs{\widehat{P}^k_\ell(s'|s,a) \leq P(s'|s,a)~~\forall s'\in \mathrm{children}(s)} = \prod_{s'\in \mathrm{children}(s)} \mathbb{P}\bs{\widehat{P}^k_\ell(s'|s,a) \leq P(s'|s,a)} \geq \br{\frac{1}{4}}^{\abs{\mathrm{children}(s)}}
\end{align*}
Then, for any $s,a \in \sspace \times \aspace$ is concluded by the following chain of inequalities
\begin{align*}
\mathbb{P}&\bs{\min_{\ell\in[L]}\sum_{s'\in \mathrm{children}(s)}\widehat{P}^k_\ell(s'|s,a)  V(s') \geq \sum_{s'\in \mathrm{children}(s)} P(s'|s,a) V(s')} \\&= \mathbb{P}\bs{\sum_{s'\in\mathrm{children}(s)}\widehat{P}^k_\ell(s'|s,a)  V(s') \geq \sum_{s'\in \mathrm{children}(s)} P(s'|s,a) V(s')~~ \forall \ell \in [L]} \\
&= \prod_{\ell \in [L]}\br{1 - \mathbb{P}\bs{\sum_{s'\in \mathrm{children}(s)}\widehat{P}^k_\ell(s'|s,a)  V(s') \leq \sum_{s'\in \mathrm{children}(s)} P(s'|s,a) V(s')}} 
\\
&\leq \prod_{\ell\in[L]}\br{ 1-\mathbb{P}\bs{\widehat{P}^k_\ell(s'|s,a)  \leq P(s'|s,a)~~\forall s'\in \mathrm{children}(s)} } \\
&\leq \prod_{\ell\in[L]}\br{ 1- \frac{1}{4}^{\abs{ \mathrm{children}(s)}} } \leq e^{-L \log \br{\frac{4^{\abs{ \mathrm{children}(s)}}}{4^{\abs{\mathrm{children}(s)}} - 1}}}.
\end{align*}
Therefore,
choosing $L \geq \frac{\log \br{1/\delta}}{\log \br{\frac{4^{\abs{ \mathrm{children}(s)}}}{4^{\abs{\mathrm{children}(s)}} - 1}}}$, we ensure that
\begin{align*}
\mathbb{P}\bs{\min_{\ell\in[L]} \widehat{P}^k_\ell V^k(s,a) \geq P V^k(s,a)} \leq \delta.
\end{align*}
For $\abs{\mathrm{children(s)}} = 2$, we have that $\br{\log \br{\frac{4^{\abs{ \mathrm{children}(s)}}}{4^{\abs{\mathrm{children}(s)}} - 1}}}^{-1}\leq 36$, therefore with a union bound over the sets $\sspace,\aspace$ and $[K]$ we conclude that
for $L \geq 36 \log \br{\frac{\abs{\sspace}\abs{\aspace} K}{\delta}}$, it holds that
\begin{align*}
\mathbb{P}\bs{\min_{\ell\in[L]} \widehat{P}^k_\ell V^k(s,a) \leq P V^k(s,a) ~~\forall ~~s,a,k \in \sspace\times\aspace\times [K]} \geq 1-\delta.
\end{align*}
\end{proof}

\subsection{Policy Regret Decomposition}

\thmpolicyregret*
\begin{proof}
The theorem is proven  with the following regret decomposition
in virtue of \Cref{lemma:infinite_extendend_pdl} already presented in the main text. Denoting $\delta^{k}(s,a) \triangleq \cost^k(s,a) + \gamma P V^k(s,a) - Q^{k+1}(s,a)$ and $g^k(s,a) \triangleq  Q^{k+1}(s,a) - Q^{k}(s,a)$
\begin{align}
(1 - \gamma) & \mathrm{Regret}_\pi(K;\pi^\star) = \sum^K_{k=1} \mathbb{E}_{s\sim d^{\pi^\star}}\bs{
    \innerprod{Q^k(s,\cdot)}{\pi^k(s) - \pi^\star(s)}} \quad \tag{BTRL} \label{eq:OMD}\\&\phantom{\leq}+ \sum^K_{k=1}  \sum_{s,a} \bs{d^{\pi^k}(s,a) - d^{\pi^\star}(s,a)}\cdot\bs{\delta^k(s,a)} \tag{Optimism} \label{eq:optimism}
    \\&\phantom{\leq}+ \sum^K_{k=1} \mathbb{E}_{s,a\sim d^{\pi^k}}\bs{g^k(s,a)} - \sum^K_{k=1} \mathbb{E}_{s,a\sim d^{\pi^\star}}\bs{g^k(s,a)} \quad \label{eq:shift} \tag{Shift}
\end{align}
At this point, we bound each term with the following Lemmas, we obtain
\begin{align*}
&\sqrt{\frac{\log \abs{\aspace} K }{ (1-\gamma)^5}} + \widetilde{\mathcal{O}}\br{\frac{ \sqrt{ K \abs{\sspace}^2 \abs{\aspace} \log (1/\delta)}}{(1-\gamma)^2}} \\&\leq \widetilde{\mathcal{O}}\br{ \sqrt{ \frac{K \abs{\sspace}^2 \abs{\aspace} \log (1/\delta)}{(1-\gamma)^5}}}.
\end{align*}
\end{proof}
\textbf{Bound on \eqref{eq:main_OMD}}
Then, we continue bounding the first term invoking the following lemma.
\localregret*
\begin{proof}

    \begin{align}
\sum^K_{k=1} \mathbb{E}_{s\sim d^{\pi^\star}_{\rho_{e_k}}}\bs{
    \innerprod{Q^k(s,\cdot)}{\pi^k(s) - \pi^\star(s)}} &= \sum_{k \in K} \mathbb{E}_{s\sim d^{\pi^\star}}\bs{
    \innerprod{Q^k(s,\cdot)}{\pi^k(s) - \pi^\star(s)}} \nonumber \\
    &=\mathbb{E}_{s\sim d^{\pi^\star}}\bs{ \sum_{k \in K} 
    \innerprod{Q^k(s,\cdot)}{\pi^k(s) - \pi^\star(s)}} \label{eq:first}
    \end{align}
Then,  applying a standard regret bound for Be the Regularized Leader (BTRL) it holds that for all $s \in \sspace$
\begin{align*}
    \sum_{k \in K} 
    \innerprod{Q^k(s,\cdot)}{\pi^k(s) - \pi^\star(s)} &= \frac{\log \abs{\aspace}}{\eta}
\end{align*}
Then, plugging into \eqref{eq:first} we conclude that
\begin{equation*}
\eqref{eq:main_OMD} \leq \frac{\log \abs{\aspace}}{\eta}.
\end{equation*}
\end{proof}
\noindent\textbf{Bound on \eqref{eq:main_shift}}
We follow the idea from \cite{moulin2023optimistic} of controlling this term proving that two consecutive policies will have occupancy measures within a $\mathcal{O}(\eta)$ total variation distance. 
\shiftlemma*
\begin{proof}
Let us denote $Q_{\max} = (1-\gamma)^{-1}$
\begin{align*}
    \sum^K_{k=1} \innerprod{d^{\pi^k}}{Q^{k} - Q^{k+1}} &=  \br{ \innerprod{d^{\pi^1}}{Q^1} + \sum^{K-1}_2 \innerprod{Q^k}{d^{\pi^k} - d^{\pi^{k-1}} } - \innerprod{d^{\pi^K}}{Q^K}} \\
&\leq  Q_{\max} + \frac{\eta Q^2_{\max} (K-2) }{(1-\gamma)} \leq \frac{\eta Q^2_{\max} (K-1) }{(1-\gamma)} .
\end{align*}
In the first inequality, we used Lemma~\ref{lemma:slow}.
Finally, noticing that
\begin{equation*}
- \sum^K_{k=1} \innerprod{d^{\pi^\star}}{Q^{k} - Q^{k+1}} = \innerprod{d^{\pi^\star}}{Q^K - Q^1} \leq Q_{\max}
\end{equation*}
and that
\begin{equation*}
    \eqref{eq:main_shift} = \sum^K_{k=1} \innerprod{d^{\pi^k}}{Q^{k} - Q^{k+1}} - \sum^K_{k=1} \innerprod{d^{\pi^\star}}{Q^{k} - Q^{k+1}}
\end{equation*}
allows us to conclude the proof summing the two bounds.
    \end{proof}
\textbf{Bound on \eqref{eq:main_optimism}}
\lemmaoptimism*
\begin{proof}
For the optimism term we can observe that using the update of the $Q$ values, we have that
$$
\delta^k(s,a) = \gamma (P V^k(s,a) - \min_{\ell \in [L]} \widehat{P}^k_{\ell} V^k(s,a))
$$
Therefore, in virtue of \Cref{cor:optimism} with probability $1-\delta$ it holds that
\begin{equation*}
\delta^k(s,a) \geq 0 ~~~\forall ~~~s,a \in \sspace\times\aspace.
\end{equation*}
Therefore, with probability $1-\delta$, we have that
\begin{align*}
\sum^K_{k=1}  \sum_{s,a} \bs{d^{\pi^k}(s,a) - d^{\pi^\star}(s,a)}\cdot\bs{\delta^k(s,a)} &\leq \sum^K_{k=1}  \sum_{s,a} d^{\pi^k}(s,a) \delta_k(s,a) \\
&=\gamma\sum^K_{k=1} \mathbb{E}_{s,a\sim d^{\pi^k}} \bs{P V^k(s,a) - \min_{\ell \in [L]} \widehat{P}^k_{\ell} V^k(s,a)}
\end{align*}
At this point, using \Cref{lemma:bounded_optimism_ensemble} and  a union bound we have that with probability $1-2\delta$, it holds that% and $\epsilon_{\mathrm{cov}}=K^{-1}$, it holds that
\begin{align*}
\sum^K_{k=1}  \sum_{s,a} \bs{d^{\pi^k}(s,a) - d^{\pi^\star}(s,a)}\cdot\bs{\delta^k(s,a)} &\leq 
\gamma\sum^K_{k=1} \mathbb{E}_{s,a\sim d^{\pi^k}} \bs{\sqrt{\frac{\abs{\sspace}}{(N^k(s,a)/L + 1) (1-\gamma)^2}\log \br{\frac{\abs{\sspace}\abs{\aspace}  L K^2(K+1) }{(1-\gamma)\delta}}}} \\
&\phantom{=}+  \gamma\sum^K_{k=1} \mathbb{E}_{s,a\sim d^{\pi^k}} \bs{\frac{2}{(N^k(s,a)/L + 1)(1-\gamma)}} + 4 \\
&\leq \sqrt{K \sum^K_{k=1} \mathbb{E}_{s,a\sim d^{\pi^k}} \bs{\frac{\abs{\sspace}}{(N^k(s,a)/L + 1) (1-\gamma)^2}\log \br{\frac{\abs{\sspace}\abs{\aspace}  L K^2(K+1) }{(1-\gamma)\delta}}}} \\
&\phantom{=}+  \gamma\sum^K_{k=1} \mathbb{E}_{s,a\sim d^{\pi^k}} \bs{\frac{2}{(N^k(s,a)/L + 1)(1-\gamma)}} + 4 \\
\end{align*}
At this point, since $L^k$ is geometrically distributed for every $k \in [K]$ it holds that $L^k \leq L_{\max} := \frac{\log (K/\delta)}{(1-\gamma)}$ for all $k \in [K]$ with probability $1-\delta$.
Therefore, we can invoke \Cref{lemma:count_based}  to bound $\sum^K_{k=1} \mathbb{E}_{s,a\sim d^{\pi^k}} \bs{\frac{2}{(N^k(s,a)/L + 1)}}$. Another union bound ensures that with probability $1 - 3\delta$, we have that
\begin{align*}
    \sum^K_{k=1}  \sum_{s,a} & \bs{d^{\pi^k}(s,a) - d^{\pi^\star}(s,a)}\cdot  \bs{\delta^k(s,a)} \\&\leq \frac{1}{1-\gamma}\sqrt{K \abs{\sspace} \log \br{\frac{\abs{\sspace} \abs{\aspace} L K^2(K+1)}{(1-\gamma)\delta}} (2 L \abs{\sspace} \abs{\aspace} \log \br{K L_{\max}} + 4 \log \br{2K/\delta} ) } \\
    &\phantom{=}+\frac{2}{1-\gamma} (2 L \abs{\sspace} \abs{\aspace} \log \br{K L_{\max}} + 4 \log \br{2K/\delta}) \\
&=\widetilde{\mathcal{O}}\br{\frac{  \sqrt{K \abs{\sspace}^2 \abs{\aspace} \log (1/\delta)}}{1-\gamma}}.
\end{align*}
where the $\widetilde{\mathcal{O}}$ notation hides logarithmic factors in $K, 1-\gamma, \abs{\sspace}$ and $\abs{\aspace}$.

Now, we prove the part of the Theorem that considers the update for $Q^{k+1}$ given in \eqref{eq:update2}. Under this update, we have that
$$ \delta^k(s,a) = \gamma \br{PV^k(s,a)  -  \max\bs{ \frac{1}{L} \sum^L_{\ell=1} \widehat{P}^k_\ell V^k(s,a) - \sqrt{\sum^L_{\ell=1} \br{
    \widehat{P}^k_\ell V^k(s,a)
    - \frac{1}{L} \sum^L_{\ell=1} \widehat{P}^k_\ell V^k(s,a)}^2},0}}$$
Then, applying Samuelson's inequality \Cref{lemma:samuelson}, we have that
$$
\min_{\ell\in[L]} \widehat{P}^k_\ell V^k(s,a) \geq \frac{1}{L} \sum^L_{\ell=1} \widehat{P}^k_\ell V^k(s,a) - \sqrt{\sum^L_{\ell=1} \br{
    \widehat{P}^k_\ell V^k(s,a)
    - \frac{1}{L} \sum^L_{\ell=1} \widehat{P}^k_\ell V^k(s,a)}^2}
$$
moreover, it holds that
$$
\min_{\ell\in[L]} \widehat{P}^k_\ell V^k(s,a) \geq 0
$$
Therefore,
\[
\min_{\ell\in[L]} \widehat{P}^k_\ell V^k(s,a) \geq \max \bs{\frac{1}{L} \sum^L_{\ell=1} \widehat{P}^k_\ell V^k(s,a) - \sqrt{\sum^L_{\ell=1} \br{
    \widehat{P}^k_\ell V^k(s,a)
    - \frac{1}{L} \sum^L_{\ell=1} \widehat{P}^k_\ell V^k(s,a)}^2}, 0}
\]
which implies 
$$ \delta^k(s,a) \geq \gamma \br{PV^k(s,a)  -  \min_{\ell\in[L]} \widehat{P}^k_\ell V^k(s,a)} \geq 0
$$
where the last inequality holds with probability $1-\delta$ thanks to \Cref{cor:optimism}.
Therefore, with probability $1-\delta$, we have that
\begin{align*}
\sum^K_{k=1}  &\sum_{s,a} \bs{d^{\pi^k}(s,a) - d^{\pi^\star}(s,a)}\cdot\bs{\delta^k(s,a)} \leq \sum^K_{k=1}  \sum_{s,a} d^{\pi^k}(s,a) \delta_k(s,a) \\
&\leq\gamma\sum^K_{k=1} \mathbb{E}_{s,a\sim d^{\pi^k}} \bs{\frac{1}{L}\sum^L_{\ell=1} \br{P V^k(s,a) -  \widehat{P}^k_{\ell} V^k(s,a)} + \sqrt{\sum^L_{\ell=1} \br{\widehat{P}^k_\ell V^k (s,a) - \frac{1}{L}\sum^L_{\ell=1} \widehat{P}^k_{\ell} V^k(s,a)}^2}} ,
\end{align*}
where the last inequality holds removing the maximum.
For the first term, we can use \Cref{lemma:bounded_optimism_ensemble} and continue as in the previous proof to show that with probability $1-\delta$
\begin{equation*}
    \gamma\sum^K_{k=1} \mathbb{E}_{s,a\sim d^{\pi^k}} \bs{\frac{1}{L}\sum^L_{\ell=1} \br{P V^k(s,a) -  \widehat{P}^k_{\ell} V^k(s,a)}} \leq \widetilde{\mathcal{O}}\br{\frac{  \sqrt{K \abs{\sspace}^2 \abs{\aspace} \log (1/\delta)}}{1-\gamma}}.
\end{equation*}
So, we are left with bounding the term 
\begin{equation*}
\gamma\sum^K_{k=1} \mathbb{E}_{s,a\sim d^{\pi^k}} \bs{\sqrt{\sum^L_{\ell=1} \br{\widehat{P}^k_\ell V^k (s,a) - \frac{1}{L}\sum^L_{\ell'=1} \widehat{P}^k_{\ell'} V^k(s,a)}^2}}
\end{equation*}
To this end, by Jensen's inequality we have that
\begin{align*}
\gamma\sum^K_{k=1} & \mathbb{E}_{s,a\sim d^{\pi^k}} \bs{\sqrt{\sum^L_{\ell=1} \br{\widehat{P}^k_\ell V^k (s,a) - \frac{1}{L}\sum^L_{\ell'=1} \widehat{P}^k_{\ell'} V^k(s,a)}^2}} \\&\leq \gamma\sum^K_{k=1} \mathbb{E}_{s,a\sim d^{\pi^k}} \bs{\sqrt{\frac{1}{L}\sum^L_{\ell'=1}\sum^L_{\ell=1} \br{\widehat{P}^k_\ell V^k (s,a) -  \widehat{P}^k_{\ell'} V^k(s,a)}^2}} \\
\\&\leq \gamma\sum^K_{k=1} \mathbb{E}_{s,a\sim d^{\pi^k}} \bs{\sqrt{\frac{2}{L}\sum^L_{\ell'=1}\sum^L_{\ell=1} \br{\widehat{P}^k_\ell V^k (s,a) -  P V^k(s,a)}^2}} \\
&\leq \gamma\sum^K_{k=1} \mathbb{E}_{s,a\sim d^{\pi^k}} \bs{\sqrt{2\sum^L_{\ell=1} \br{\widehat{P}^k_\ell V^k (s,a) -  P V^k(s,a)}^2}} \\
\end{align*}
At this point, notice that invoking \Cref{lemma:bounded_optimism_ensemble}, we have that for all $\ell \in [L]$ with probability $1-\delta$
\begin{align*}
&\br{\widehat{P}^k_\ell V^k (s,a)  -  P V^k(s,a)}^2 \\ &\leq \br{\sqrt{\frac{\abs{\sspace}}{(N^k(s,a)/L + 1) (1-\gamma)^2}\log \br{\frac{\abs{\sspace}\abs{\aspace}  L K^2(K+1) }{(1-\gamma)\delta}}} + \frac{2}{K} +  \frac{2}{(N^k(s,a)/L + 1)(1-\gamma)}}^2 \\
&\leq \frac{3\abs{\sspace}}{(N^k(s,a)/L + 1) (1-\gamma)^2}\log \br{\frac{\abs{\sspace}\abs{\aspace}  L K^2(K+1) }{(1-\gamma)\delta}} + \frac{12}{K^2} + \frac{12}{(N^k(s,a)/L + 1)^2(1-\gamma)^2} \\
&= \widetilde{\mathcal{O}}\br{\frac{3\abs{\sspace}\log \br{\frac{1}{\delta}}}{(N^k(s,a)/L + 1) (1-\gamma)^2}}.
\end{align*}
Therefore, plugging into the previous display we obtain
\begin{align*}
\gamma\sum^K_{k=1} & \mathbb{E}_{s,a\sim d^{\pi^k}} \bs{\sqrt{\sum^L_{\ell=1} \br{\widehat{P}^k_\ell V^k (s,a) - \frac{1}{L}\sum^L_{\ell'=1} \widehat{P}^k_{\ell'} V^k(s,a)}^2}} \\&\leq  \sum^K_{k=1} \mathbb{E}_{s,a\sim d^{\pi^k}} \bs{\sqrt{\sum^L_{\ell=1} \widetilde{\mathcal{O}}\br{\frac{3\abs{\sspace}\log \br{\frac{1}{\delta}}}{(N^k(s,a)/L + 1) (1-\gamma)^2}} }} \\
&\leq  \sqrt{ K \sum^K_{k=1} \mathbb{E}_{s,a\sim d^{\pi^k}}\bs{\sum^L_{\ell=1} \widetilde{\mathcal{O}}\br{\frac{3\abs{\sspace}\log \br{\frac{1}{\delta}}}{(N^k(s,a)/L + 1) (1-\gamma)^2}} }} \\
&\leq  \widetilde{\mathcal{O}}\br{\sqrt{ K \sum^K_{k=1} \mathbb{E}_{s,a\sim d^{\pi^k}}\bs{\frac{3\abs{\sspace} L^2 \log \br{\frac{1}{\delta}}}{(N^k(s,a) + 1) (1-\gamma)^2}} }} 
\\
&\leq  \widetilde{\mathcal{O}}\br{\sqrt{ \frac{\abs{\sspace}^2 \log \br{\frac{1}{\delta}} K}{(1-\gamma)^2} \sum^K_{k=1} \mathbb{E}_{s,a\sim d^{\pi^k}}\bs{\frac{1}{N^k(s,a) + 1}} }} 
\end{align*}
Finally, we bound $\mathbb{E}_{s,a\sim d^{\pi^k}}\bs{\frac{1}{N^k(s,a) + 1}}$ using \Cref{lemma:count_based} under the event $L^k \leq L_{\max}:=\frac{\log(K/\delta)}{(1-\gamma)}$ which holds with probability $1-\delta$. Thanks to an union bound we have that with probability $1-2\delta$,
\begin{align*}
\gamma\sum^K_{k=1} \mathbb{E}_{s,a\sim d^{\pi^k}} &\bs{\sqrt{\sum^L_{\ell=1} \br{\widehat{P}^k_\ell V^k (s,a) - \frac{1}{L}\sum^L_{\ell'=1} \widehat{P}^k_{\ell'} V^k(s,a)}^2}} \\&\leq 
\widetilde{\mathcal{O}}\br{\sqrt{ \frac{\abs{\sspace}^2 \log \br{\frac{1}{\delta}} K}{(1-\gamma)^2} \abs{\sspace} \abs{\aspace} \log \br{K L_{\max}} + 4 \log \br{2KL_{\max}/\delta}  }} \\
&\leq \widetilde{\mathcal{O}}\br{\frac{  \sqrt{K \abs{\sspace}^2 \abs{\aspace} \log (1/\delta)}}{1-\gamma}}.
\end{align*}
Therefore, the proof is concluded also for the case of $Q$ value being updated as in \eqref{eq:update2}.
\end{proof}
\begin{lemma} \label{lemma:bounded_optimism_ensemble}
With probability $1-\delta$, it holds that for all $s,a\in \sspace\times \aspace$,
\begin{align*}
P V^k(s,a) - \widehat{P}^k_{\ell} V^k(s,a) &\leq \sqrt{\frac{\abs{\sspace}}{(N^k(s,a)/L + 1) (1-\gamma)^2}\log \br{\frac{\abs{\sspace}\abs{\aspace}  L K^2(K+1) }{(1-\gamma)\delta}}} + \frac{2}{K} \\&\phantom{=}+  \frac{2}{(N^k(s,a)/L + 1)(1-\gamma)} ~~~~~ \forall \ell,k \in [L]\times[K]
\end{align*}
In particular, the above statement implies that for all $k \in [K]$
\begin{align*}
P V^k(s,a) - \min_{\ell\in[L]}\widehat{P}^k_{\ell} V^k(s,a) &\leq \sqrt{\frac{\abs{\sspace}}{(N^k(s,a)/L + 1) (1-\gamma)^2}\log \br{\frac{\abs{\sspace}\abs{\aspace}  L K^2(K+1) }{(1-\gamma)\delta}}} + \frac{2}{K} \\&\phantom{=}+  \frac{2}{(N^k(s,a)/L + 1)(1-\gamma)} 
\end{align*}
\end{lemma}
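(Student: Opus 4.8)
Bounding $PV^k(s,a)-\widehat P^k_\ell V^k(s,a)$ is delicate because $V^k$ is a function of the very next-state samples that define $\widehat P^k_\ell$, so $\widehat P^k_\ell V^k$ cannot be concentrated around $PV^k$ on the fly. The plan is: (i) replace $V^k$ by a fixed element of a small $\norm{\cdot}_\infty$-net of the value class, paying a discretization error $2\epsilon_{\mathrm{cov}}$; (ii) for a fixed value function, split off the bias introduced by the shift by $2$ in the denominator of $\widehat P^k_\ell$ from a genuine empirical-mean fluctuation; and (iii) control the fluctuation with Hoeffding's inequality together with union bounds over the net, over $\sspace\times\aspace$, over $\ell\in[L]$, over $k\in[K]$, and over the random number of samples in a batch, using the batch construction of \cite{cassel2024batch} for the needed independence.

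For step (i), let $\mathcal V=\bc{f\in\mathbb R^{\abs{\sspace}}:\ 0\le f(s)\le (1-\gamma)^{-1}\ \forall s}$, which contains every $V^k$, and let $\mathcal N$ be an $\epsilon_{\mathrm{cov}}$-net of $\mathcal V$ in $\norm{\cdot}_\infty$, so $\log\abs{\mathcal N}\le \abs{\sspace}\log\br{\tfrac{1}{(1-\gamma)\epsilon_{\mathrm{cov}}}}$; I would take $\epsilon_{\mathrm{cov}}=1/K$ at the end. For any $s,a,\ell,k$, pick $\widetilde V\in\mathcal N$ with $\norm{V^k-\widetilde V}_\infty\le\epsilon_{\mathrm{cov}}$; since $\norm{P(\cdot|s,a)}_1=1$ and $\norm{\widehat P^k_\ell(\cdot|s,a)}_1\le 1$,
\[
PV^k(s,a)-\widehat P^k_\ell V^k(s,a)\ \le\ P\widetilde V(s,a)-\widehat P^k_\ell\widetilde V(s,a)+2\epsilon_{\mathrm{cov}},
\]
which produces the $\tfrac 2K$ term. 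For step (ii), writing $\bar P^k_\ell(\cdot|s,a)=N^k_\ell(s,a,\cdot)/N^k_\ell(s,a)$ for the plain empirical transition, so that $\widehat P^k_\ell(\cdot|s,a)=\tfrac{N^k_\ell(s,a)}{N^k_\ell(s,a)+2}\,\bar P^k_\ell(\cdot|s,a)$,
\[
P\widetilde V(s,a)-\widehat P^k_\ell\widetilde V(s,a)=\br{P\widetilde V(s,a)-\bar P^k_\ell\widetilde V(s,a)}+\frac{2}{N^k_\ell(s,a)+2}\,\bar P^k_\ell\widetilde V(s,a).
\]
The last term is at most $\tfrac{2}{(N^k_\ell(s,a)+2)(1-\gamma)}$ since $\widetilde V\le(1-\gamma)^{-1}$, and since the round-robin assignment $\ell=N^k(s,a)\bmod L$ gives $N^k_\ell(s,a)\ge\floor{N^k(s,a)/L}$, this is of order $\tfrac{1}{(N^k(s,a)/L+1)(1-\gamma)}$; the degenerate case $N^k_\ell(s,a)=0$ (where $\widehat P^k_\ell=0$, $N^k(s,a)<L$, and $PV^k(s,a)\le(1-\gamma)^{-1}$) is already covered by this term.

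For step (iii), fix $V\in\mathcal N$, $(s,a)$, $\ell$, $k$, and a possible value $n$ of $N^k_\ell(s,a)$. By the Markov property and the way batches are formed in \cite{cassel2024batch}, conditionally on $N^k_\ell(s,a)=n$ the $n$ next-state samples entering $\bar P^k_\ell(\cdot|s,a)$ behave as i.i.d.\ draws $\sim P(\cdot|s,a)$, so $\bar P^k_\ell V(s,a)$ is an average of $n$ i.i.d.\ $[0,(1-\gamma)^{-1}]$-bounded variables with mean $PV(s,a)$, and Hoeffding gives $\mathbb P\bs{PV(s,a)-\bar P^k_\ell V(s,a)>t\mid N^k_\ell(s,a)=n}\le\exp\br{-2n t^2(1-\gamma)^2}$. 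A union bound over $\mathcal N$, over $\sspace\times\aspace$, over $\ell\in[L]$, over $k\in[K]$, and over the $\mathcal O(KL_{\max})$ possible values of $n$ (using $L^k\le L_{\max}:=\tfrac{\log(K/\delta)}{1-\gamma}$ for all $k$ with probability $1-\delta$), combined with $\log\abs{\mathcal N}\le\abs{\sspace}\log\br{\tfrac{K}{1-\gamma}}$, then gives simultaneously for all $s,a,\ell,k$
\[
P\widetilde V(s,a)-\bar P^k_\ell\widetilde V(s,a)\ \le\ \sqrt{\frac{\abs{\sspace}}{\br{N^k(s,a)/L+1}(1-\gamma)^2}\,\log\br{\frac{\abs{\sspace}\abs{\aspace}LK^2(K+1)}{(1-\gamma)\delta}}},
\]
the $\abs{\sspace}$ inside the root originating from $\log\abs{\mathcal N}$.

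Adding the three contributions — the covering error $\tfrac2K$, the bias $\tfrac{2}{(N^k(s,a)/L+1)(1-\gamma)}$, and the fluctuation bound — and using that each bound is uniform in $\ell$ yields the displayed inequality, on an event of probability $1-\delta$ after rescaling $\delta$ by an absolute constant. The "in particular" claim is then immediate, since $PV^k(s,a)-\min_{\ell\in[L]}\widehat P^k_\ell V^k(s,a)=\max_{\ell\in[L]}\br{PV^k(s,a)-\widehat P^k_\ell V^k(s,a)}$. The main obstacle is step (i): the dependence of $V^k$ on the data forces the passage to a fixed net, and it is the resulting $\abs{\sspace}$-dimensional net size, together with the union bound over the random per-batch counts, that dictates the $\sqrt{\abs{\sspace}\log(\cdots)}$ scaling — everything else is routine Hoeffding bookkeeping.
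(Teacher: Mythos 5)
Your proposal is correct and follows essentially the same route as the paper's proof: an $\epsilon_{\mathrm{cov}}=1/K$ covering of the value class (yielding the $2/K$ term and the $\abs{\sspace}$ factor in the log), a split of $\widehat P^k_\ell=\tfrac{N^k_\ell}{N^k_\ell+2}\bar P^k_\ell$ into a Hoeffding-type fluctuation of the plain empirical mean plus the $\tfrac{2}{(N^k_\ell+2)(1-\gamma)}$ bias, a union bound over the net, $\sspace\times\aspace$, $[L]$, $[K]$ and the possible values of the random count to sidestep the dependence of the summand count on the samples, and finally $N^k_\ell(s,a)\ge\floor{N^k(s,a)/L}$ from the round-robin assignment. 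The only differences are cosmetic bookkeeping (you union over $\mathcal O(KL_{\max})$ count values where the paper uses $\{0,\dots,K\}$, and you also make the $N^k_\ell=0$ case explicit), all absorbed in the logarithmic factors.
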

\begin{proof}
    Let us introduce the value class of the possible value functions generated by \Cref{alg:theory_version}.
    i.e. $\mathcal{V} = \bc{f \in \mathbb{R}^{\abs{S}}~~|~~ \norm{f}_{\infty} \leq \frac{1}{1-\gamma}, f(s) \geq 0 ~~\forall s \in \sspace}$.
    Let us introduce a $\epsilon_{\mathrm{cov}}$-covering set $\mathcal{C}_{\epsilon_{\mathrm{cov}}}(\mathcal{V})$ such that for any $V \in \mathcal{V}$ there exists $\tilde{V}\in \mathcal{C}_{\epsilon_{\mathrm{cov}}}(\mathcal{V})$ such that $\norm{\tilde{V} - V}_{\infty} \leq \epsilon_{\mathrm{cov}}$.
Therefore let us denote by $\tilde{V}^k$ the element of $\mathcal{C}_{\epsilon_{\mathrm{cov}}}(\mathcal{V})$ such that $\norm{V^k - \tilde{V}^k }_{\infty} \leq \epsilon_{\mathrm{cov}} $.
Then, let us consider a generic $\tilde{V} \in \mathcal{C}_{\epsilon_{\mathrm{cov}}}(\mathcal{V})$,
\begin{align*}
P \tilde{V}(s,a) - \frac{1}{N^k_{\ell}(s,a)}\sum_{\bar{s},\bar{a},s' \in \mathcal{R}^k_\ell}\tilde{V}(s') \mathds{1}_{\bc{s,a=\bar{s},\bar{a}}} &= \frac{1}{N^k_{\ell}(s,a)} \sum_{\bar{s},\bar{a},s' \in \mathcal{R}^k_\ell} P \tilde{V} (s,a) \mathds{1}_{\bc{s,a=\bar{s},\bar{a}}} \\&\phantom{=}- \frac{1}{N^k_{\ell}(s,a)}\sum_{\bar{s},\bar{a},s' \in \mathcal{R}^k_\ell}\tilde{V}(s') \mathds{1}_{\bc{s,a=\bar{s},\bar{a}}} \\
&= \frac{1}{N^k_{\ell}(s,a)} \sum_{\bar{s},\bar{a},s' \in \mathcal{R}^k_\ell} \br{P \tilde{V} (s,a) - \tilde{V}(s')}\mathds{1}_{\bc{s,a=\bar{s},\bar{a}}} \\
%&\phantom{=}- \frac{1}{N^k_{\ell}(s,a)(N^k_{\ell}(s,a) + 2)}\sum_{\bar{s},\bar{a},s' \in \mathcal{R}^k_\ell}\tilde{V}(s') \mathds{1}_{\bc{s,a=\bar{s},\bar{a}}} 
\end{align*}
Then, notice that denoting $s'_n(s,a)$ the state sample after $s,a$ the $n^{th}$ time the state action pair was visited we have that
\[
\sum_{\bar{s},\bar{a},s' \in \mathcal{R}^k_\ell} \br{P \tilde{V} (s,a) - \tilde{V}(s')}\mathds{1}_{\bc{s,a=\bar{s},\bar{a}}} = \sum^{N^k_\ell(s,a)}_{n=1} \br{P \tilde{V} (s,a) - \tilde{V}(s'_n(s,a))}
\]
Applying directly the Azuma Hoeffding inequality is not possible because the number of elements in the sum, i.e. the number of visits $N^k_\ell(s,a)$ is not a random variable independent on the random variables $\bc{s'_n(s,a)}^{N^k_\ell(s,a)}_{n=1}$
(see \cite[Exercise 7.1]{lattimore2020bandit} ).

Therefore, we first apply the Azuma Hoeffding inequality for a specific $k$ and for a specific value of the visits $N^k_\ell(s,a)$. That is, it holds that with probability $1-\delta$
\begin{align*}
\sum_{\bar{s},\bar{a},s' \in \mathcal{R}^k_\ell} \br{P \tilde{V} (s,a) - \tilde{V}(s')}\mathds{1}_{\bc{s,a=\bar{s},\bar{a}}} &\leq \sqrt{\frac{N^k_{\ell}(s,a) \log (1/\delta)}{2 (1-\gamma)^2 }}
\end{align*}
Therefore via a union bound for $k \in [K]$ and $N^k_\ell(s,a) \in \bc{0,1, \dots, K}$ we have that with probability $1-\delta$ it holds that for all $k \in [K]$
\begin{align*}
\sum_{\bar{s},\bar{a},s' \in \mathcal{R}^k_\ell} \br{P \tilde{V} (s,a) - \tilde{V}(s')}\mathds{1}_{\bc{s,a=\bar{s},\bar{a}}} 
&\leq \sqrt{\frac{N^k_{\ell}(s,a) \log (K(K+1)/\delta)}{2(1-\gamma)^2 }} .
\end{align*}
%Moreover the second term is negative since for all $\tilde{V} \in \mathcal{V}$, it holds that $\tilde{V} \geq 0$ elementwise.
Therefore, we can conclude that with probability at least $1-\delta$ for all $k\in[K]$
\begin{equation*}
P \tilde{V}(s,a) - \frac{1}{N^k_{\ell}(s,a)}\sum_{\bar{s},\bar{a},s' \in \mathcal{R}^k_\ell}\tilde{V}(s') \mathds{1}_{\bc{s,a=\bar{s},\bar{a}}} \leq \sqrt{\frac{\log (K(K+1)/\delta)}{2 N^k_{\ell}(s,a) (1-\gamma)^2 }}.
\end{equation*}
Now, by a another union bound over $\mathcal{C}_{\epsilon_{\mathrm{cov}}}(\mathcal{V})$ , $[K]$, $[L]$ and $\sspace\times\aspace$ and denoting $$\bar{P}^k_{\ell} \tilde{V}(s,a) := \frac{1}{N^k_{\ell}(s,a)}\sum_{\bar{s},\bar{a},s' \in \mathcal{R}^k_\ell}\tilde{V}(s') \mathds{1}_{\bc{s,a=\bar{s},\bar{a}}},$$  it holds that 
\begin{align*}
\mathbb{P}\bs{P \tilde{V}(s,a) - \bar{P}^k_{\ell} \tilde{V}(s,a) \leq \sqrt{\frac{\log (K(K+1)\abs{\sspace}\abs{\aspace}\abs{\mathcal{C}_{\epsilon_{\mathrm{cov}}}(\mathcal{V})} L/\delta)}{N^k_{\ell}(s,a) (1-\gamma)^2}} ~~\forall s,a,\ell,k\in\sspace\times\aspace\times [L]\times[K],~~\tilde{V}\in\mathcal{V}} \geq 1-\delta
\end{align*}
Therefore, now let us consider the element $\tilde{V}^k \in \mathcal{C}_{\epsilon_{\mathrm{cov}}}(\mathcal{V})$ such that $\norm{V^k - \tilde{V}^k}_{\infty} \leq \epsilon_{\mathrm{cov}}$.
Then, we have that for all $\ell \in [L]$
\begin{align*}
P V^k(s,a) - \bar{P}^k_{\ell} V^k(s,a) &= P \tilde{V}^k(s,a) - \bar{P}^k_{\ell} \tilde{V}^k(s,a) + (P - \bar{P}^k_\ell) (\tilde{V}^k - V^k) \\
&= P \tilde{V}^k(s,a) - \bar{P}^k_{\ell} \tilde{V}^k(s,a) + 2 \epsilon_{\mathrm{cov}} \\
&\leq \sqrt{\frac{\log (K(K+1)\abs{\sspace}\abs{\aspace}\abs{\mathcal{C}_{\epsilon_{\mathrm{cov}}}(\mathcal{V})} L/\delta)}{N^k_{\ell}(s,a) (1-\gamma)^2}} + 2 \epsilon_{\mathrm{cov}} \\
&\leq \sqrt{\frac{\abs{\sspace}}{N^k_{\ell}(s,a) (1-\gamma)^2}\log \br{\frac{K(K+1)\abs{\sspace}\abs{\aspace}  L}{(1-\gamma)\epsilon_{\mathrm{cov}}\delta}}} + 2 \epsilon_{\mathrm{cov}} \\
\end{align*}
With $\epsilon_{\mathrm{cov}}=K^{-1}$, we get
\begin{align*}
P V^k(s,a) - \bar{P}^k_{\ell} V^k(s,a) \leq \sqrt{\frac{\abs{\sspace}}{N^k_{\ell}(s,a) (1-\gamma)^2}\log \br{\frac{\abs{\sspace}\abs{\aspace}  L K^2(K+1) }{(1-\gamma)\delta}}} + \frac{2}{K} 
\end{align*}
Then, we can continue as follows 
\begin{align*}
\widehat{P}^k_\ell V^k(s,a) &= \frac{N^k_\ell(s,a)}{N^k_\ell(s,a) + 2} \bar{P}^k_\ell V^k(s,a) \\
&\geq \frac{N^k_\ell(s,a)}{N^k_\ell(s,a) + 2} 
\bs{P V^k(s,a) - \sqrt{\frac{\abs{\sspace}}{N^k_{\ell}(s,a) (1-\gamma)^2}\log \br{\frac{\abs{\sspace}\abs{\aspace}  L K^2(K+1) }{(1-\gamma)\delta}}} - \frac{2}{K} } \\
&=
P V^k(s,a) - \frac{2}{N^k_\ell(s,a) + 2} 
P V^k(s,a) - \sqrt{\frac{\abs{\sspace}}{(N^k_{\ell}(s,a) + 2) (1-\gamma)^2}\log \br{\frac{\abs{\sspace}\abs{\aspace}  L K^2(K+1) }{(1-\gamma)\delta}}} - \frac{2}{K}  \\
&\geq 
P V^k(s,a) - \frac{2}{(N^k_\ell(s,a) + 2)(1-\gamma)} 
 - \sqrt{\frac{\abs{\sspace}}{(N^k_{\ell}(s,a) + 2) (1-\gamma)^2}\log \br{\frac{\abs{\sspace}\abs{\aspace}  L K^2(K+1) }{(1-\gamma)\delta}}} - \frac{2}{K}  
\end{align*}
Finally, rearranging and using that $N^k_\ell (s,a) = \floor{\frac{N^k(s,a)}{L}} \geq \frac{N^k(s,a)}{L} - 1, $
we obtain that with probability $1-\delta$, it holds that for all $\ell\in[L]$, $k \in[K]$, $s,a\in\sspace\times\aspace$
\begin{align*}
P V^k(s,a) - \widehat{P}^k_{\ell} V^k(s,a) &\leq \sqrt{\frac{\abs{\sspace}}{(N^k(s,a)/L + 1) (1-\gamma)^2}\log \br{\frac{\abs{\sspace}\abs{\aspace}  L K^2(K+1) }{(1-\gamma)\delta}}} + \frac{2}{K} \\&\phantom{=}+  \frac{2}{(N^k(s,a)/L + 1)(1-\gamma)} 
\end{align*}
\end{proof}
\subsection{Upper bound the regret of the reward player}
\thmcostregret*
\begin{proof}
We decompose the regret as follows 
\begin{align*}
\sum^K_{k=1} \innerprod{c_{\mathrm{true}} - c^k}{d^{\pi^k} - d^\expert} &= \sum^K_{k=1} \innerprod{c_{\mathrm{true}} - c^k}{\mathbf{e}_{s^k_{L^k}} - \widehat{d^\expert}} \\
&\phantom{=}+\sum^K_{k=1} \innerprod{c_{\mathrm{true}} - c^k}{ d^{\pi^k}-\mathbf{e}_{s^k_{L^k}}} \\
&\phantom{=}+\sum^K_{k=1} \innerprod{c_{\mathrm{true}} - c^k}{ \widehat{d^\expert} - d^\expert}
\end{align*}
For the first term, we can invoke a standard online gradient descent bound
and get
\begin{align*}
    \sum^K_{k=1} \innerprod{c_{\mathrm{true}} - c^k}{\mathbf{e}_{s^k_{L^k}} - \widehat{d^\expert}} &\leq \frac{2}{\eta} + \eta K \norm{\widehat{d^\expert}}_2/2 \\
    &\leq \frac{2}{\eta} + \eta K \norm{\widehat{d^\expert}}_1/2 \\
    &\leq \frac{2}{\eta} + \frac{\eta K}{2}\\
\end{align*}
Therefore choosing $\eta = \sqrt{\frac{4}{K}}$, we get
\begin{align*}
    \sum^K_{k=1} \innerprod{c_{\mathrm{true}} - c^k}{\mathbf{e}_{s^k_{L^k}} - \widehat{d^\expert}} &\leq 2\sqrt{K}.
\end{align*}
Then, we can handle the remaining two terms. In particular $\sum^K_{k=1} \innerprod{c_{\mathrm{true}} - c^k}{ d^{\pi^k}-\mathbf{e}_{s^k_{L^k}}}$ is the sum of a martingale difference sequence.
Therefore, applying the Azuma-Hoeffding inequality it holds that with probability $1-\delta$
\begin{equation*}
\sum^K_{k=1} \innerprod{c_{\mathrm{true}} - c^k}{ d^{\pi^k}-\mathbf{e}_{s^k_{L^k}}} \leq \sqrt{2 K \log (1/\delta)}
\end{equation*}
where we used that $\abs{\innerprod{c_{\mathrm{true}} - c^k}{ d^{\pi^k}-\mathbf{e}_{s^k_{L^k}}}} \leq 2$ for all $k \in [K]$.
Finally for the expert concentration term, we have that
\begin{align*}
    \sum^K_{k=1} \innerprod{c_{\mathrm{true}} - c^k}{ \widehat{d^\expert} - d^\expert} \leq K \sqrt{\abs{\sspace}\abs{\aspace}} \norm{d^{\expert} - d^\expert}_{\infty}
\end{align*}
Then, for any fixed state action pair $s,a$ with probability $1- \delta/(\abs{\sspace}\abs{\aspace})$ by Azuma-Hoeffding inequality it holds that
\begin{align*}
d^{\expert}(s) - d^\expert(s) = \frac{1}{\abs{\mathcal{D}_\expert}} \sum_{s' \in \mathcal{D}_\expert } \mathds{1}_{\bc{s'=s}} - d^\expert (s) \leq \sqrt{\frac{\log (\abs{\sspace}\abs{\aspace}\delta^{-1})}{2 \abs{\mathcal{D}_\expert}}}
\end{align*}
Therefore, by a union bound it holds that with probability $ 1-\delta$,
\begin{equation*}
\norm{d^{\expert} - d^\expert}_{\infty} \leq \sqrt{\frac{\log (\abs{\sspace} \abs{\aspace}\delta^{-1})}{2 \abs{\mathcal{D}_\expert}}}.
\end{equation*}
Putting together, the bounds on the three terms allow to conclude the proof.
\end{proof}
\section{Technical Lemmas}
\lemmapdl*
\begin{proof}
    \begin{equation*}
        \innerprod{d^{\pi'}}{\hat{Q}} = \innerprod{d^{\pi'}}{\hat{Q} - \cost - \gamma P \hat{V}^\pi} + \innerprod{d^{\pi'}}{\cost + \gamma P \hat{V}^\pi}
    \end{equation*}
Then, using the property of occupancy measure we have that $\innerprod{d^{\pi'}}{\cost} = (1 - \gamma)\innerprod{\initial}{V^{\pi'}}$ where $V^{\pi'}$ is the value function of the policy $\pi'$ in the MDP.
Then, it holds that
\begin{align*}
        \innerprod{d^{\pi'}}{\hat{Q}} &= \innerprod{d^{\pi'}}{\hat{Q} - \cost - \gamma P \hat{V}^\pi} + (1 - \gamma)\innerprod{\initial}{V^{\pi'}}+\innerprod{d^{\pi'}}{\gamma P \hat{V}^\pi} \\
        & = \innerprod{d^{\pi'}}{\hat{Q} - \cost - \gamma P \hat{V}^\pi} + (1 - \gamma)\innerprod{\initial}{V^{\pi'}}+\innerprod{\gamma P^T d^{\pi'}}{\hat{V}^\pi}
        \\ & = \innerprod{d^{\pi'}}{\hat{Q} - \cost - \gamma P \hat{V}^\pi} + (1 - \gamma)\innerprod{\initial}{V^{\pi'}}+\innerprod{E^T d^{\pi'} -(1 - \gamma) \initial}{\hat{V}^\pi} \\
        & = \innerprod{d^{\pi'}}{\hat{Q} - \cost - \gamma P \hat{V}^\pi} + (1 - \gamma)\innerprod{\initial}{V^{\pi'} - \hat{V}^\pi}+\innerprod{E^T d^{\pi'}}{\hat{V}^\pi}.
    \end{align*}
    Rearranging and using the definition of $\widehat{V}^\pi$ yields the conclusion.
\end{proof}
\begin{lemma}
\label{lemma:count_based}
Let us assume that $L^k \leq L_{\max}$ for all $k \in [K]$. Then, invoking \cite[Lemma D.4]{cohen2019learning}, it holds that with probability $1-\delta$
\begin{equation*}
\sum^K_{k=1}\mathbb{E}_{s,a\sim d^{\pi^k}} \bs{\frac{1}{N^k(s,a)/L + 1}}  \leq 
2 L \abs{\sspace} \abs{\aspace} \log \br{K L_{\max}} + 4 \log \br{2KL_{\max}/\delta} 
\end{equation*}
\end{lemma}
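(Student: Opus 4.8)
The plan is to recognize the left-hand side as a sum of expected inverse pseudo-visitation-counts of exactly the kind controlled by \cite[Lemma D.4]{cohen2019learning}, and to carry out the bookkeeping needed to match its hypotheses.

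\textbf{Identifying the relevant quantities.} The crucial point is that each summand lies in $(0,1]$, so the only non-trivial content is replacing the trivial bound $\sum_{k=1}^K\mathbb{E}_{s,a\sim d^{\pi^k}}\bs{\tfrac{1}{N^k(s,a)/L+1}}\le K$ by a logarithmic one; this is possible only because the pairs carrying large $d^{\pi^k}$-mass are precisely those whose counter $N^k(s,a)$ has already grown large. To make the correspondence with \cite[Lemma D.4]{cohen2019learning} explicit I would record three facts. First, the effective pseudo-count is $N^k(s,a)/L$, because the increments are dispatched round-robin among $L$ batches in Line~6 of \Cref{alg:theory_version}, so each batch has observed about $N^k(s,a)/L$ copies of $(s,a)$; this is where the factor $L$ in the bound enters. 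Second, occupancy measures translate into rollout visitations through $(s^k_{L^k},a^k_{L^k})\sim d^{\pi^k}$ and, more generally, $\mathbb{E}\bs{\sum_{t=1}^{L^k}\mathds{1}\{(s^k_t,a^k_t)=(s,a)\}\mid \mathcal{F}_{k-1}}=(1-\gamma)^{-1}d^{\pi^k}(s,a)$, where $\mathcal{F}_{k-1}$ is the history before iteration $k$; hence the expectation over $d^{\pi^k}$ is, up to a constant, an expectation over points that genuinely update the counters. Third, under the standing assumption $L^k\le L_{\max}$, the counters receive at most $KL_{\max}$ increments in total, which is the effective horizon entering the lemma.

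\textbf{Applying the lemma.} With these identifications, $\sum_{k=1}^K\mathbb{E}_{s,a\sim d^{\pi^k}}\bs{\tfrac{1}{N^k(s,a)/L+1}}$ is a sum, over rounds, of the expected inverse pseudo-count of the point visited in that round, which is exactly the object bounded by \cite[Lemma D.4]{cohen2019learning}; that lemma returns, with probability $1-\delta$, a deterministic potential term plus an additive high-probability correction. The potential term comes from telescoping inverse counts along the counter's own trajectory: for a fixed pair $\sum_{\text{visits}}\tfrac{L}{(\text{running count})+L}\le L\log\!\br{1+\tfrac{N^K(s,a)}{L}}$, and summing over the at most $\abs{\sspace}\abs{\aspace}$ pairs subject to $\sum_{s,a}N^K(s,a)\le KL_{\max}$ gives, by concavity of $\log$, at most $2L\abs{\sspace}\abs{\aspace}\log(KL_{\max})$. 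The correction term, $4\log(2KL_{\max}/\delta)$, is the cost of a Freedman-type comparison between the expected counts $\sum_{j\le k}(1-\gamma)^{-1}d^{\pi^j}$ and the realized counts $N^k$, union-bounded over the at most $2KL_{\max}$ elementary events. Summing the two contributions yields exactly the claimed inequality.

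\textbf{The main obstacle.} The delicate part is this high-probability correction, and it is precisely what makes the citation unavoidable. A naive Azuma--Hoeffding control of the martingale $\sum_{k}\br{\mathbb{E}[Y_k\mid\mathcal{F}_{k-1}]-Y_k}$ with $Y_k=\sum_{t=1}^{L^k}\tfrac{1}{N^{k-1}(s^k_t,a^k_t)/L+1}\in[0,L_{\max}]$ would pay a spurious $L_{\max}\sqrt{K\log(1/\delta)}$ and destroy the $\mathcal{O}(\log K)$ scaling; and one cannot sidestep it by telescoping against the frozen per-round counter $N^{k-1}$, because a single rollout may revisit a rarely-seen pair $\Theta(L_{\max})$ times with the denominator stuck at $1$. \cite[Lemma D.4]{cohen2019learning} handles both issues by exploiting that $Y_k$ has conditional expectation at most $1$ (hence small conditional variance) together with the running-counter structure, which is why I would invoke it rather than reprove it.
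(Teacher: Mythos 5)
Your proposal follows essentially the same route as the paper's proof: a Freedman-type multiplicative comparison via \cite[Lemma D.4]{cohen2019learning} to pass from the occupancy-weighted expectations to the realized on-trajectory inverse counts at the additive cost $4\log\br{2KL_{\max}/\delta}$, followed by a per-pair telescoping of inverse running counts (the paper does this with \Cref{lemma:numerical_sequence} applied to $f(x)=x^{-1}$) and the pigeonhole bound $\sum_{s,a}N^K(s,a)\le KL_{\max}$, yielding $2L\abs{\sspace}\abs{\aspace}\log\br{KL_{\max}}$. The only nit is your remark that $Y_k$ has conditional expectation at most $1$ --- it is in fact at most $(1-\gamma)^{-1}$ --- but this does not affect the argument, since the comparison lemma is applied to the per-step terms, each of which lies in $[0,1]$.
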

\begin{proof}
Let us assume that $L^k \leq L_{\max}$ for all $k \in [K]$. It holds that with probability $1-\delta$
\begin{align*}
\sum^K_{k=1}\mathbb{E}_{s,a\sim d^{\pi^k}} \bs{\frac{1}{N^k(s,a)/L + 1}} &\leq 2\sum^K_{k=1}\sum^{L^k}_{t=1}\frac{1}{N^k(s^k_t,a^k_t)/L + 1} + 4 \log \br{2KL_{\max}/\delta} \\
& \leq 2\sum_{s,a\in\sspace\times\aspace}\sum^K_{k=1}\sum^{L^k}_{t=1}\frac{\mathds{1}_{\bc{s,a=s^k_t,a^k_t}}}{N^k(s,a)/L + 1} + 4 \log \br{2KL_{\max}/\delta} \\
& \leq 2 L \sum_{s,a\in\sspace\times\aspace}\sum^K_{k=1}\sum^{L^k}_{t=1}\frac{\mathds{1}_{\bc{s,a=s^k_t,a^k_t}}}{N^k(s,a)+ 1} + 4 \log \br{2KL_{\max}/\delta} \\
& \leq 2 L \sum_{s,a\in\sspace\times\aspace}\sum^K_{k=1}\sum^{L^k}_{t=1}\frac{\mathds{1}_{\bc{s,a=s^k_t,a^k_t}}}{\sum^k_{\tau=1}\sum^{L_\tau}_{t=1} \mathds{1}_{\bc{s,a=s^\tau_t,a^\tau_t}} + 1} + 4 \log \br{2KL_{\max}/\delta} \\
& \leq 2 L \sum_{s,a\in\sspace\times\aspace} \log \br{\sum^K_{k=1}\sum^{L^k}_{t=1}\mathds{1}_{\bc{s,a=s^k_t,a^k_t}}} + 4 \log \br{2KL_{\max}/\delta} \\
& \leq 2 L \abs{\sspace} \abs{\aspace} \log \br{K L_{\max}} + 4 \log \br{2KL_{\max}/\delta} \\
\end{align*}
where we used \Cref{lemma:numerical_sequence} for $f(x) = x^{-1}$.
\end{proof}
\begin{lemma}
\label{lemma:numerical_sequence}
Let $a_0 \geq 0$ and $f [0, \infty ) \rightarrow [0, \infty ) $ be a non increasing function , then
\begin{equation*}
\sum^T_{t=1} \alpha_t f(a_0 + \sum^T_{t=1} \alpha_t) \leq \int^{\sum^T_{t=1} a_t}_{a_0} f(x) dx
\end{equation*}
\end{lemma}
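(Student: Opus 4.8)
The plan is to establish Lemma~\ref{lemma:numerical_sequence} by comparing each summand with the integral of $f$ over a short interval and then telescoping. I read the statement with its only consistent interpretation: the sequence satisfies $\alpha_t \ge 0$ for all $t$, the argument of $f$ in the $t$-th term is the running partial sum $a_0 + \sum_{s=1}^t \alpha_s$, and the upper limit of the integral is $a_0 + \sum_{t=1}^T \alpha_t$ (so the $a_t$ appearing there is the same object as the $\alpha_t$ on the left).

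First I would set $S_0 := 0$ and $S_t := \sum_{s=1}^t \alpha_s$ for $t \ge 1$, so that $a_0 + S_{t-1} \le a_0 + S_t$ and the interval $I_t := [\,a_0 + S_{t-1},\, a_0 + S_t\,]$ has length exactly $\alpha_t$. Fix $t \in \{1,\dots,T\}$. For every $x \in I_t$ one has $x \le a_0 + S_t$, so monotonicity of $f$ together with $f \ge 0$ gives $f(x) \ge f\big(a_0 + S_t\big) \ge 0$. Since a non-increasing function is Riemann integrable on any bounded interval, integrating this pointwise inequality over $I_t$ yields
\[
\int_{a_0 + S_{t-1}}^{a_0 + S_t} f(x)\, dx \;\ge\; \alpha_t\, f\!\big(a_0 + S_t\big),
\]
which also holds trivially when $\alpha_t = 0$, as both sides then vanish.

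Then I would sum this inequality over $t = 1, \dots, T$. The intervals $I_1,\dots,I_T$ have pairwise disjoint interiors and cover $[\,a_0,\, a_0 + S_T\,]$, so the left-hand sides telescope:
\[
\sum_{t=1}^T \int_{a_0 + S_{t-1}}^{a_0 + S_t} f(x)\, dx \;=\; \int_{a_0}^{a_0 + S_T} f(x)\, dx \;=\; \int_{a_0}^{a_0 + \sum_{t=1}^T \alpha_t} f(x)\, dx ,
\]
while the right-hand sides sum to $\sum_{t=1}^T \alpha_t\, f\big(a_0 + \sum_{s=1}^t \alpha_s\big)$, which is the left-hand side of the lemma. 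Combining the two yields the claimed bound.

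There is no genuine obstacle here; the argument is elementary. The only points worth care are (i) that the hypothesis must include $\alpha_t \ge 0$, since otherwise the intervals $I_t$ degenerate or reverse orientation — this holds in every invocation in the paper because the $\alpha_t$ are $\{0,1\}$-valued indicators — and (ii) that one evaluates $f$ at the partial sum \emph{up to and including} index $t$, which is exactly what makes the per-term comparison $\alpha_t f(a_0 + S_t) \le \int_{I_t} f$ valid in the needed direction.
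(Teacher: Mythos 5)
Your proof is correct: the paper itself offers no argument for this lemma beyond the citation ``See \cite{orabona2023modern} Lemma 4.13,'' and your integral-comparison-with-telescoping argument is exactly the standard proof of that cited result. You also correctly repaired the statement's typos (the inner sum should be the running partial sum $\sum_{s=1}^{t}\alpha_s$ and the integral's upper limit should be $a_0+\sum_{t=1}^{T}\alpha_t$), and rightly flagged that nonnegativity of the $\alpha_t$ is needed, which holds in every invocation in the paper.
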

\begin{proof}
See \cite{orabona2023modern} Lemma 4.13.
\end{proof}
\begin{lemma}
    \label{lemma:slow} The sequence of policies $\bc{\pi^k}^K_{k=1}$ generated by \Cref{alg:theory_version}
    and let $d^\pi$ denote the occupancy measure for the policy $\pi$. Then it holds that
    \begin{equation*}
    \forall k ~~\in [K]~~~\norm{d^{\pi^k} - d^{\pi^{k+1}}}_1 \leq \frac{\eta Q_{\max}}{(1-\gamma)}
    \end{equation*}
    \end{lemma}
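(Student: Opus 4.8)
The plan is to reduce the claim to two standard estimates: first control how far the policy moves between rounds $k$ and $k+1$, measured in $\norm{\cdot}_1$ at each state, and then invoke a ``slow-moving policy $\Rightarrow$ slow-moving occupancy'' perturbation bound.

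\textbf{Step 1 (per-state policy stability).} The update $\pi^{k+1}(a|s) \propto \pi^{k}(a|s)\exp(-\eta Q^{k+1}(s,a))$ of \Cref{alg:theory_version} is exactly one online mirror descent step on the simplex $\Delta_{\aspace}$ with negative-entropy mirror map and subgradient $Q^{k+1}(s,\cdot)$. Since the negative entropy is $1$-strongly convex with respect to $\norm{\cdot}_1$ (Pinsker's inequality), the standard OMD stability estimate—obtained from the first-order optimality condition of the proximal step together with the three-point identity and strong convexity—gives $\norm{\pi^{k+1}(\cdot|s) - \pi^{k}(\cdot|s)}_1 \le \eta \norm{Q^{k+1}(s,\cdot)}_\infty$ for every $s \in \sspace$. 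Using that $\norm{Q^{k+1}}_\infty \le Q_{\max} := (1-\gamma)^{-1}$ is maintained throughout the run (unit-norm costs and value functions in $[0,(1-\gamma)^{-1}]$), we get $\max_{s}\norm{\pi^{k+1}(\cdot|s) - \pi^{k}(\cdot|s)}_1 \le \eta Q_{\max}$.

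\textbf{Step 2 (policy closeness $\Rightarrow$ occupancy closeness).} For any two policies $\pi,\pi'$, write the discounted state-occupancy vector as $d^\pi_\sspace = (1-\gamma)(I-\gamma\tilde P^\pi)^{-1}\initial$, where $\tilde P^\pi(s'|s) = \sum_a \pi(a|s)P(s'|s,a)$. The resolvent identity $A^{-1}-B^{-1} = A^{-1}(B-A)B^{-1}$ applied with $A = I-\gamma\tilde P^\pi$, $B = I-\gamma\tilde P^{\pi'}$ gives $d^\pi_\sspace - d^{\pi'}_\sspace = \tfrac{\gamma}{1-\gamma}\,(1-\gamma)(I-\gamma\tilde P^\pi)^{-1}(\tilde P^\pi - \tilde P^{\pi'})\,d^{\pi'}_\sspace$. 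The operator $(1-\gamma)(I-\gamma\tilde P^\pi)^{-1} = (1-\gamma)\sum_{t\ge 0}\gamma^t(\tilde P^\pi)^t$ is an $\ell_1$-nonexpansion, and $\norm{(\tilde P^\pi - \tilde P^{\pi'})d^{\pi'}_\sspace}_1 \le \mathbb{E}_{s\sim d^{\pi'}_\sspace}\norm{\pi(\cdot|s)-\pi'(\cdot|s)}_1$, hence $\norm{d^\pi_\sspace - d^{\pi'}_\sspace}_1 \le \tfrac{\gamma}{1-\gamma}\mathbb{E}_{s\sim d^{\pi'}_\sspace}\norm{\pi(\cdot|s)-\pi'(\cdot|s)}_1$. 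Passing to state-action occupancies through $d^\pi(s,a) = d^\pi_\sspace(s)\pi(a|s)$ and a triangle-inequality split contributes one more $\mathbb{E}_{s\sim d^{\pi'}_\sspace}\norm{\pi(\cdot|s)-\pi'(\cdot|s)}_1$, yielding $\norm{d^\pi - d^{\pi'}}_1 \le \tfrac{1}{1-\gamma}\max_{s}\norm{\pi(\cdot|s)-\pi'(\cdot|s)}_1$.

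\textbf{Conclusion.} Instantiating Step 2 with $\pi = \pi^{k+1}$, $\pi' = \pi^{k}$ and substituting the uniform bound from Step 1 gives $\norm{d^{\pi^k} - d^{\pi^{k+1}}}_1 \le \tfrac{\eta Q_{\max}}{1-\gamma}$ for every $k \in [K]$, as claimed. I expect the only delicate point to be bookkeeping the constant in Step 2: the state-to-state-action conversion has to be carried out explicitly, since a black-box simulation lemma would typically give the looser factor $\tfrac{2\gamma}{1-\gamma}$ rather than the $\tfrac{1}{1-\gamma}$ required downstream in \Cref{lemma:shift}.
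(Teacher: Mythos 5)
Your proof is correct and reaches the stated bound with the right constant, following the same two-step architecture as the paper (policy stability per round, then a policy-to-occupancy perturbation bound), but it establishes both ingredients by different means. For the stability step, the paper does not prove a uniform per-state bound: it shows $\mathbb{E}_{x\sim d^{\pi^k}}\bs{\norm{\pi^k(\cdot|x)-\pi^{k+1}(\cdot|x)}_1}\leq \eta Q_{\max}$ by lower-bounding $\norm{\pi^k-\pi^{k+1}}_1^2$ with the KL divergence via Pinsker, expanding that KL explicitly using the softmax form of the update, applying Jensen to the log-partition term, and then dividing through by the expected $\ell_1$ distance; your generic OMD prox-stability argument gives the stronger pointwise bound $\norm{\pi^{k+1}(\cdot|s)-\pi^k(\cdot|s)}_1\leq\eta\norm{Q^{k+1}(s,\cdot)}_\infty$ for every $s$, which is cleaner and sidesteps the division step (which in the paper requires ruling out the degenerate case where the expected distance is zero). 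For the perturbation step, the paper cites Lemma A.1 of the reference sun2019dual as a black box, which already delivers $\norm{d^{\pi}-d^{\pi'}}_1\leq\frac{1}{1-\gamma}\mathbb{E}_{x\sim d^{\pi}}\bs{\norm{\pi(\cdot|x)-\pi'(\cdot|x)}_1}$ for state-action occupancies, whereas you rederive it from the resolvent identity and correctly account for the extra term from the state-to-state-action conversion so as to recover the same $\frac{1}{1-\gamma}$ constant rather than the looser $\frac{2\gamma}{1-\gamma}$; your concern about this bookkeeping is exactly the right one, and your explicit split $d^\pi(s,a)-d^{\pi'}(s,a)=(d^\pi_{\sspace}(s)-d^{\pi'}_{\sspace}(s))\pi(a|s)+d^{\pi'}_{\sspace}(s)(\pi(a|s)-\pi'(a|s))$ handles it. The only shared implicit assumption in both proofs is the uniform bound $\norm{Q^{k+1}}_\infty\leq Q_{\max}=(1-\gamma)^{-1}$, which the paper also takes for granted.
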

    \begin{proof}
        By Lemma A.1 in \cite{sun2019dual} it holds that
        \begin{equation*}
    \norm{d^{\pi^k} - d^{\pi^{k+1}}}_1 \leq \frac{1}{1-\gamma} \mathbb{E}_{x \sim d^{\pi^k}}\bs{\norm{\pi^k(\cdot|x) - \pi^{k+1}(\cdot|x)}_1}
        \end{equation*}
    Then, we notice that by $1$-strong convexity of the KL divergence it holds that
        \begin{align*}
     \frac{1}{2}\mathbb{E}_{x \sim d^{\pi^k}}&\bs{\norm{\pi^k(\cdot|x) - \pi^{k+1}(\cdot|x)}^2_1} \leq  \frac{1}{2}\mathbb{E}_{x \sim d^{\pi^k}}\bs{D_{KL}(\pi^{k+1}(\cdot|x), \pi^k(\cdot|x))} \\
     &\leq  \frac{1}{2}\mathbb{E}_{x \sim d^{\pi^k}}\sum_{a \in \aspace} \pi^{k+1}(a|x) \br{- \eta Q_k(x,a) - \log\br{\sum_{a \in \aspace} \pi^{k}(a|x) \exp(- \eta Q_k(x,a)) }} \\
     &= -\frac{\eta}{2} \mathbb{E}_{x \sim d^{\pi^k}}\sum_{a \in \aspace} \pi^{k+1}(a|x) Q_k(x,a) - \frac{1}{2}\mathbb{E}_{x \sim d^{\pi^k}} \log\br{\sum_{a \in \aspace} \pi^{k}(a|x) \exp(- \eta Q_k(x,a)) } \\
     & \leq -\frac{\eta}{2} \mathbb{E}_{x \sim d^{\pi^k}}\sum_{a \in \aspace} \pi^{k+1}(a|x) Q_k(x,a)  + \frac{\eta }{2}\mathbb{E}_{x \sim d^{\pi^k}}\sum_{a \in \aspace} \pi^{k}(a|x) Q_k(x,a)
        \end{align*}
        where the last inequality follows by Jensen's inequality and convexity of $- \log$.
        Hence, we continue the upper bound as follows
        \begin{align*}
    \frac{1}{2}\mathbb{E}_{x \sim d^{\pi^k}}\bs{\norm{\pi^k(\cdot|x) - \pi^{k+1}(\cdot|x)}^2_1} &= \frac{\eta}{2}\mathbb{E}_{x \sim d^{\pi^k}}\sum_{a \in \aspace} Q_k(x,a) \cdot (\pi^k(\cdot|x) - \pi^{k+1}(\cdot|x)) \\
    & \leq \frac{\eta Q_{\max}}{2}\cdot \mathbb{E}_{x \sim d^{\pi^k}}\bs{\norm{\pi^k(\cdot|x) - \pi^{k+1}(\cdot|x)}_1}
        \end{align*}
        Which implies, by Jensen's inequality and diving both sides by $\frac{1}{2}\mathbb{E}_{x \sim d^{\pi^k}}\bs{\norm{\pi^k(\cdot|x) - \pi^{k+1}(\cdot|x)}_1}$ that
        \begin{equation*}
            \mathbb{E}_{x \sim d^{\pi^k}}\bs{\norm{\pi^k(\cdot|x) - \pi^{k+1}(\cdot|x)}_1} \leq \eta Q_{\max}.
        \end{equation*}
    \end{proof}
\begin{lemma}\textbf{Samuelson's inequality}
Let us consider $L$ scalars $\bc{X_{\ell}}^L_{\ell=1}$ and denote the sample mean as $\bar{X} = L^{-1}\sum^L_{\ell=1} X_\ell$
and the empirical standard deviation as $\hat{\sigma} = \sqrt{\frac{\sum^L_{\ell=1}(X_\ell - \bar{X})^2}{L-1}}$, then it holds that
\begin{equation*}
\bar{X} - \sqrt{L-1} \hat{\sigma} \leq X_{\ell} \leq \bar{X} + \sqrt{L-1} \hat{\sigma} ~~~ \forall ~~~\ell\in[L]
\end{equation*}
\label{lemma:samuelson}
\end{lemma}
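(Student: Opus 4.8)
The plan is to fix an arbitrary index $j \in [L]$ and establish the two-sided estimate $\abs{X_j - \bar X} \le \sqrt{L-1}\,\hat\sigma$, which is exactly equivalent to the pair of inequalities in the statement. Since both sides are nonnegative, squaring is harmless, and substituting the definition $\hat\sigma^2 = \tfrac{1}{L-1}\sum_{\ell=1}^L (X_\ell - \bar X)^2$ reduces the goal to
\[
(X_j - \bar X)^2 \;\le\; \sum_{\ell=1}^L (X_\ell - \bar X)^2 .
\]

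This last inequality holds because the right-hand side is a sum of nonnegative terms one of which is precisely $(X_j - \bar X)^2$; hence the bound is immediate, and since $j$ was arbitrary the lemma follows. I would also record, in passing, that the classical sharper form of Samuelson's inequality, namely $(X_j - \bar X)^2 \le \tfrac{L-1}{L}\sum_{\ell=1}^L (X_\ell - \bar X)^2$, can be obtained by applying Cauchy--Schwarz to the $L-1$ deviations $\{X_\ell - \bar X : \ell \ne j\}$ together with the identity $\sum_{\ell=1}^L (X_\ell - \bar X) = 0$ (so that $\sum_{\ell \ne j} (X_\ell - \bar X) = -(X_j - \bar X)$) and then rearranging. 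However, the cruder bound above is all that is needed: in the proof of \Cref{lemma:main_optimism} the lemma is invoked with $X_\ell = \widehat P^k_\ell V^k(s,a)$, and the quantity $\sigma^k = \sqrt{\sum_{\ell=1}^L \br{\widehat P^k_\ell V^k(s,a) - \tfrac1L \sum_{\ell'=1}^L \widehat P^k_{\ell'} V^k(s,a)}^2}$ appearing in \eqref{eq:update2} is exactly $\sqrt{L-1}\,\hat\sigma$, so the inequality $\min_{\ell\in[L]} X_\ell \ge \bar X - \sigma^k$ follows directly.

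There is essentially no obstacle: the argument is a one-line consequence of the nonnegativity of squares. The only point requiring any care is the degenerate case $L=1$, in which $\hat\sigma$ is not well defined; this never occurs here because \Cref{alg:theory_version} sets $L = 36\log(\abs{\sspace}\abs{\aspace}K/\delta) \ge 2$, so I would simply assume $L \ge 2$ from the outset.
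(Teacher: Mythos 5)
Your proof is correct and is essentially the same as the paper's: the paper bounds $\abs{X_\ell - \bar X} \le \norm{v}_\infty \le \norm{v}_2 = \sqrt{\sum_{\ell'}(X_{\ell'}-\bar X)^2} = \sqrt{L-1}\,\hat\sigma$ for the deviation vector $v$, which is exactly your observation that one nonnegative square is dominated by the sum of all of them. Your side remarks (the sharper classical constant via Cauchy--Schwarz, and the $L=1$ degeneracy) are accurate but not needed.
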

\begin{proof}
Let us consider an arbitrary vector $v \in \mathbb{R}^L$. Then, we have that $\norm{v}_\infty \leq \norm{v}_2$. At this point let us consider $v = [X_1 - \bar{X}, \dots, X_L - \bar{X}]^T$. Moreover,
let us define as $\ell^\star$ the index such that $\norm{v}_{\infty} = \abs{X_{\ell^\star} - \bar{X}}$.
Then, we have that for all $\ell \in [L]$,
\begin{equation*}
\abs{X_{\ell} - \bar{X}} \leq \abs{X_{\ell^\star} - \bar{X}}
\leq \sqrt{\sum^L_{\ell=1} (X_\ell - \bar{X})^2} = \sqrt{L-1}\hat{\sigma}.
\end{equation*}
Therefore, rewriting the absolute value it holds that
\begin{equation*}
\bar{X} - \sqrt{L-1}\hat{\sigma} \leq X_\ell \leq \bar{X} + \sqrt{L-1}\hat{\sigma}
\end{equation*}
\end{proof}
The next lemma says that the effective horizon in the original MDP and the binarized MDP is equal up to a $\log_2\br{\abs{\sspace}}$ factor.
\begin{lemma} \label{lemma:eff_horizon}
It holds that \begin{equation*}
\frac{1}{1 - \gamma^{1/\log_2 \abs{\sspace}}} \leq \frac{\log_2{\abs{\sspace}} + 2}{1-\gamma}.
\end{equation*}
\end{lemma}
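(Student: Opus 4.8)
The plan is to reduce the claimed inequality to an elementary convexity estimate. Write $n \triangleq \log_2\abs{\sspace}$, which satisfies $n \geq 1$ since any nontrivial MDP has $\abs{\sspace}\geq 2$; then $\gamma^{1/n}\in[0,1)$ as well, so both denominators appearing in the statement are strictly positive. After cross-multiplying, the inequality is therefore equivalent to
$1-\gamma \leq (n+2)\,(1-\gamma^{1/n})$.

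First I would substitute $y \triangleq \gamma^{1/n}\in[0,1)$, so that $\gamma = y^{\,n}$ and the target becomes $1-y^{\,n}\leq (n+2)(1-y)$. The key step is then to invoke convexity of the map $y\mapsto y^{\,n}$ on $[0,\infty)$, which holds precisely because $n\geq 1$: the tangent line to this map at $y=1$ is $y\mapsto 1+n(y-1)$, and convexity gives $y^{\,n}\geq 1+n(y-1) = 1-n(1-y)$ for all $y\geq 0$. Rearranging yields $1-y^{\,n}\leq n(1-y)$, and since $1-y\geq 0$ this is at most $(n+2)(1-y)$, which is exactly the required bound. Translating back through $y=\gamma^{1/n}$ and dividing by $1-\gamma$ completes the argument.

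I do not expect a serious obstacle here — the statement is essentially a one-line calculus fact. The only point worth care is that $n=\log_2\abs{\sspace}$ need not be an integer, so the geometric-series identity $1-y^{\,n}=(1-y)(1+y+\cdots+y^{\,n-1})$ is not available; the convexity argument above sidesteps this by working with real exponents directly. (The additive slack of $2$ in the numerator is not actually needed for the inequality and is retained only because it is the form used elsewhere in the proof, e.g.\ when inflating the effective horizon in \Cref{thm:main_result}.)
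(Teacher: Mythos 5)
Your proof is correct, and it takes a genuinely different (and arguably cleaner) route than the paper's. The paper proves the lemma by writing $\frac{1-\gamma}{1-\gamma_{\mathrm{bin}}}$ with $\gamma_{\mathrm{bin}}=\gamma^{1/\log_2\abs{\sspace}}$ and expanding it as a finite geometric sum $\sum_{t} \gamma_{\mathrm{bin}}^t$ with $\log_2\abs{\sspace}+2$ terms, each bounded by $1$. That argument implicitly treats $\log_2\abs{\sspace}$ as an integer (the geometric-series identity, and indeed the summation index itself, are only meaningful for integer exponents), which is exactly the gap you flag at the end of your proposal; your convexity/tangent-line argument, $y^{\,n}\geq 1+n(y-1)$ for real $n\geq 1$ and $y\geq 0$, sidesteps this entirely and is rigorous for arbitrary $\abs{\sspace}\geq 2$. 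As a bonus you obtain the sharper bound $\frac{1}{1-\gamma^{1/n}}\leq \frac{n}{1-\gamma}$ with $n=\log_2\abs{\sspace}$, so the additive $+2$ in the statement is pure slack in your derivation, whereas in the paper's derivation it is what absorbs the (at most) $\log_2\abs{\sspace}+2$ unit-bounded terms of the sum. The only hypotheses you use, $\abs{\sspace}\geq 2$ and $\gamma\in[0,1)$, are exactly those in force throughout the paper, so there is nothing to patch.
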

\begin{proof}
\begin{align*}
\frac{1}{1-\gamma^{1/\log_2 \abs{\sspace}}} &= \frac{1}{1-\gamma} \frac{1-\gamma}{1 -\gamma^{1/\log_2 \abs{\sspace}}} \\
&= \frac{1}{1-\gamma} \frac{1-\gamma^{\log_2\abs{\sspace}}_{\mathrm{bin}}}{1 -\gamma_{\mathrm{bin}}} \\
&= \frac{1}{1-\gamma} \sum^{\log_2\abs{\sspace}+1}_{t=0} \gamma^t_{\mathrm{bin}} \\
&\leq \frac{\log_2 \abs{\sspace}+2}{1-\gamma}.
\end{align*}
\end{proof}
\section{Implementation details}
\label{sec:training_proc}

\textbf{Environment:} We use the Hopper-v5, Ant-v5, HalfCheetah-v5, and Walker2d-v5 environments from OpenAI Gym.

\textbf{Expert Samples:} The expert policy is trained using SAC. The training configuration uses 3000 epochs. The agent explores randomly for the first 10 episodes before starting policy learning. A replay buffer of 1 million experiences is used, with a batch size of 100 and a learning rate of 1e-3. The temperature parameter ($\alpha$) is set to 0.2. The policy updates occur every 50 steps, with 1 update per interval. After training 64 experts trajectories are collected to be used later for the agent training. 

\textbf{IL algorithms implementation:} Our starting code base is taken from the repository of \texttt{f-IRL}\footnote{\url{https://github.com/twni2016/f-IRL/tree/main}} \cite{ni2021f}, and the implementation of the other algorithms are based on this one. For more details about the implementation please refer to our repository. %\footnote{\url{https://github.com/stefanoviel/SOAR-IL/tree/master}} 
The most important hyperparameters are reported in Table \ref{tab:hyperparameters}

\begin{itemize}
   \item \textbf{ML-IRL, f-IRL and rkl:} These algorithms were already implemented in the \texttt{f-IRL} repository. The method leverages SAC as the underlying reinforcement learning algorithm and different type of objectives for the cost update. The multi-Q-network exploration bonus is implemented inside the SAC update, there we keep track of multiple Q-networks and use their mean and standard deviation to update the policy. The clipping is applied on the standard deviation which serves as the exploration bonus. 
   \item \textbf{CSIL:} We started from the \texttt{f-IRL} implementation, maintaining the same hyperparameters for a fair comparison. The key modification was removing reward model training from the RL loop, instead training it only once before entering the loop using behavioral cloning and $L_2$ normalization, after which the reward model remained fixed throughout the training.
   \item \textbf{OPT-AIL (state-only and state-action):} We started from the implementation of \texttt{ML-IRL} and added the OPT-AIL exploration bonus, incorporating optimism-regularized Bellman error minimization for Q-value functions as described in the original article \cite{xu2024provably}.
    
    The updated Q-loss can be formulated as:
    
    \begin{equation}
    \mathcal{L}_Q = \mathbb{E}\left[\left(Q_\theta(s,a) - \left(r + \gamma(1-d)(Q_{\bar{\theta}}(s',a') - \alpha \log \pi(a'|s'))\right)\right)^2\right] - \lambda \mathbb{E}[Q_\theta(s,a)]
    \end{equation}
    
    Where:
    - $Q_\theta$ is the current Q-network
    - $Q_{\bar{\theta}}$ is the target Q-network
    - $r$ is the learned reward
    - $\gamma$ is the discount factor
    - $d$ is the done flag
    - $\alpha$ is the entropy coefficient
    - $\lambda$ is the optimism regularization parameter
    - The expectation is taken across the data distribution $\mathcal{D}$ sampled from the replay buffer, which includes state-action-reward-next state-done transitions and individual state-action pairs.

   \item \textbf{SQIL} \cite{sqil}: was implemented by initializing a replay buffer with expert trajectories and assigning them a reward of 1, while collecting additional on-policy trajectories from the agent's current policy with a reward of 0. During training, the \texttt{SAC} agent learns from both expert and agent-generated transitions, effectively learning to imitate expert behavior through the asymmetric reward structure. The agent updates its policy by sampling from this mixed replay buffer, where the expert transitions provide a high-reward signal to guide the learning process.
   
   \item \textbf{GAILs:} we used the implementation available from \texttt{Stable-Baselines3} \cite{stable-baselines3}. This is the only method not based on \texttt{SAC}.
\end{itemize}

\begin{table}[]
\centering
\caption{Core Hyperparameters Across Environments}
\begin{tabular}{lcccc}
\toprule
Parameter & Walker2d & Humanoid & Hopper & Ant \\
\midrule
\text{Number of Iterations} & 1.5\,\text{M} & 1\,\text{M} & 1\,\text{M} & 1.2\,\text{M} \\
Reward Network size & [64, 64] & [64, 64] & [64, 64] & [128, 128] \\
Policy Network size & [256, 256] & [256, 256] & [256, 256] & [256, 256] \\
Reward Learning Rate & 1e-4 & 1e-4 & 1e-4 & 1e-4 \\
SAC Learning Rate & 1e-3 & 1e-3 & 1e-3 & 1e-3 \\
% Expert Trajectories & 16 & 16 & 16 & 16 \\
\bottomrule
 \label{tab:hyperparameters}
\end{tabular}
\end{table}

\subsection{Hyperparameters tuning}

To determine how different numbers of neural networks changed the performance, we conducted an ablation study on both the clipping value and the number of neural networks. We performed this analysis for the ML-IRL algorithm on the Ant-V5 environment. We chose this environment since previous experiments showed that its higher complexity led to higher variance in the performance of different algorithms. It was necessary to perform a grid search on the number of neural networks because we noticed that different numbers of neural networks preferred different clipping values.

The results are reported in \Cref{fig:best_clip_nn}. As we observed, in all cases, adding more neural networks leads to better performances. However, this improvement does not increase proportionally with the number of neural networks; in fact, the run with 10 neural networks is outperformed by the one with 4. This led us to select 4 as the fixed value of neural networks, also justified by the much slower training time of the 10-network case.

\begin{figure}[]
\centering
\includegraphics[width=0.6\linewidth]{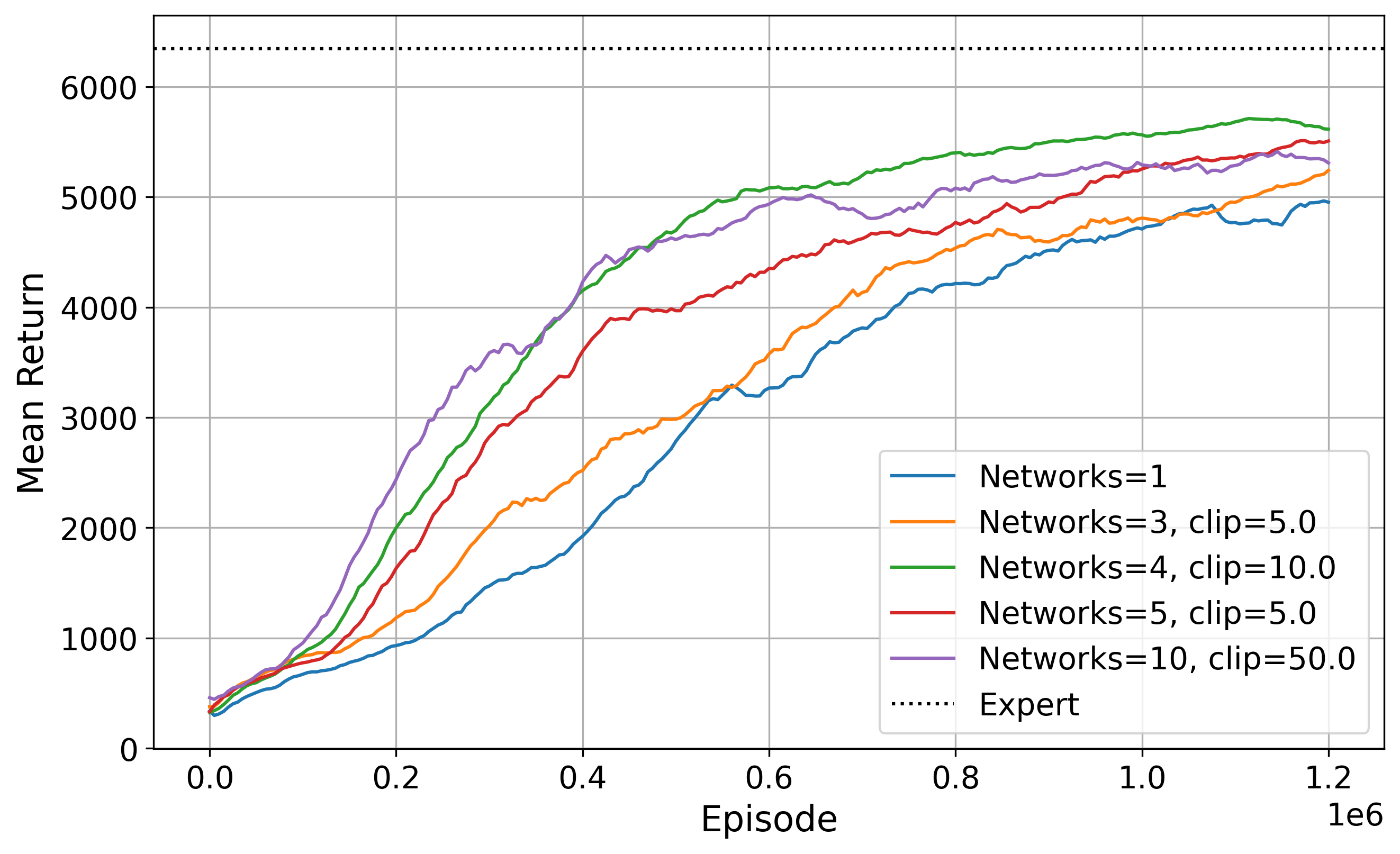}
\caption{Mean return of ML-IRL Ant-v5 with a different number of neural networks. The grid search for the clipping values was performed over the following values [0.1 0.5 1 5 10 50]. Results are averaged over 3 seeds. 
\label{fig:best_clip_nn}}
\end{figure}

For every environment and algorithm, we performed a grid search over different clipping values. The range of clipping values varied across algorithms. Figure \ref{fig:comparison_plots} shows the different values used in the search and their impact on performance. The difference in performance across clipping values is small in simpler environments (e.g., Hopper or Walker2d) while it becomes more evident in more complex environments with larger state-action spaces. These plots also show the necessity of the clipping for the exploration bonus. In most environments and algorithms, when a large clipping value is applied, it leads to performance degradation. 

Empirically, the Q-network's standard deviation diverges due to unclipped Q-values. Without value clipping, high Q-values for specific state-action pairs increase the probability of being visited, causing more of these pairs to accumulate in the replay buffer across different rollouts and potentially amplifying the standard deviation across the q-network for the next update. This justifies the necessity of a clipping value on the exploration bonus.

\begin{figure}[]
    \centering
    \begin{subfigure}[b]{0.48\textwidth}
        \includegraphics[width=\textwidth]{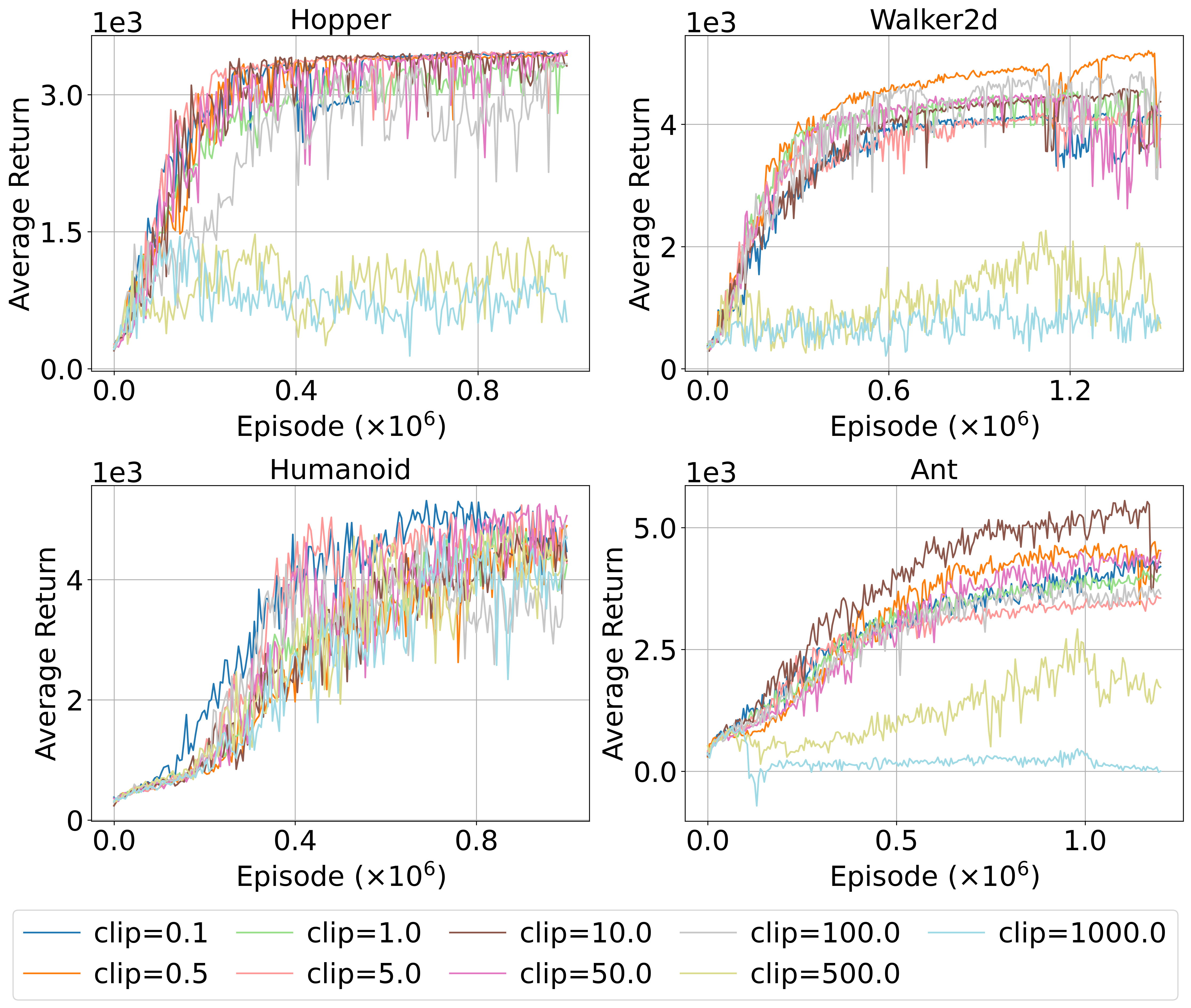}
        \caption{CSIL Comparison}
    \end{subfigure}
    \hfill
    \begin{subfigure}[b]{0.48\textwidth}
        \includegraphics[width=\textwidth]{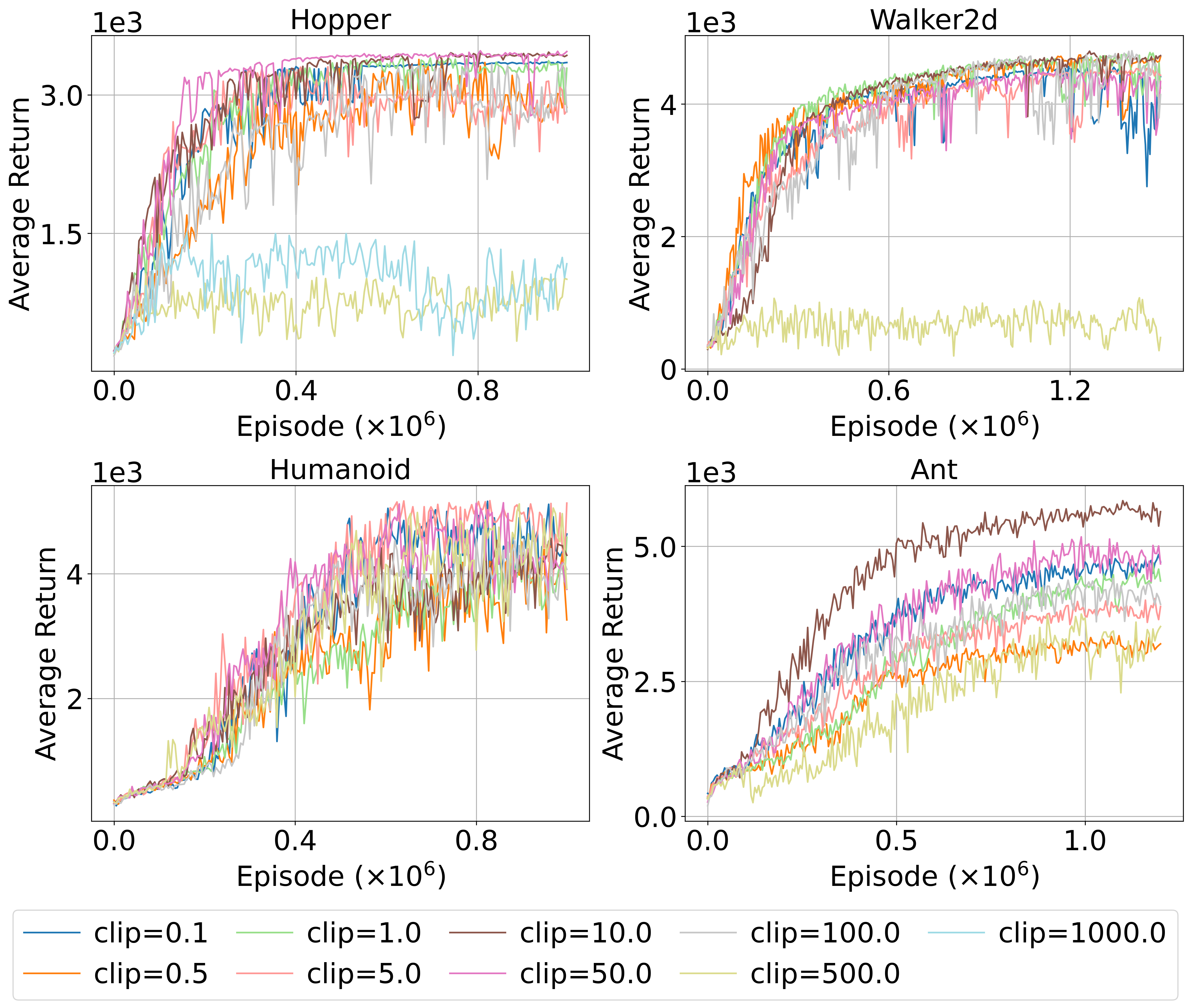}
        \caption{ML-IRL Comparison}
    \end{subfigure}

    \vspace{1em}

    \begin{subfigure}[b]{0.48\textwidth}
        \includegraphics[width=\textwidth]{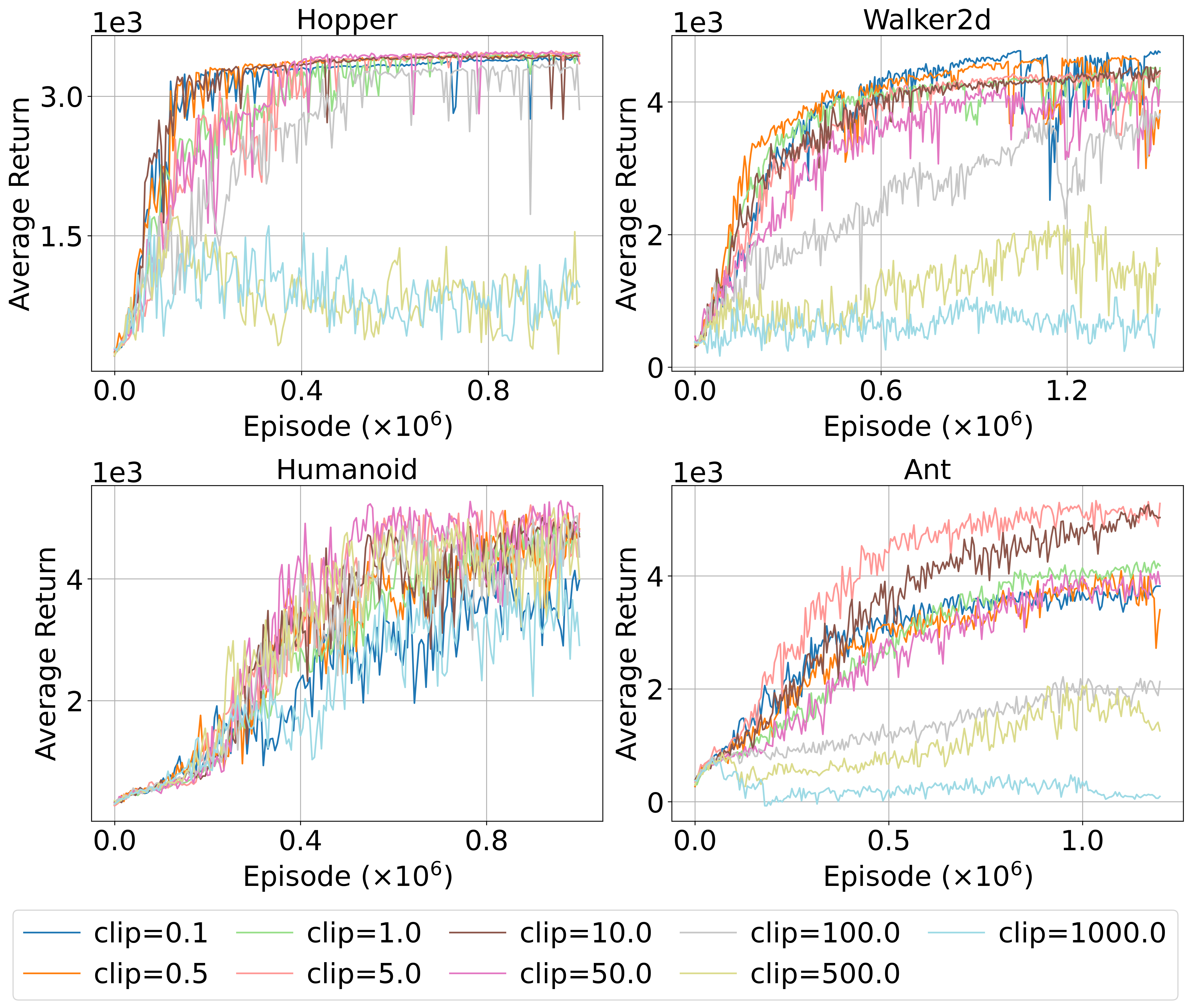}
        \caption{ML-IRL-SA Comparison}
    \end{subfigure}
    \hfill
    \begin{subfigure}[b]{0.48\textwidth}
        \includegraphics[width=\textwidth]{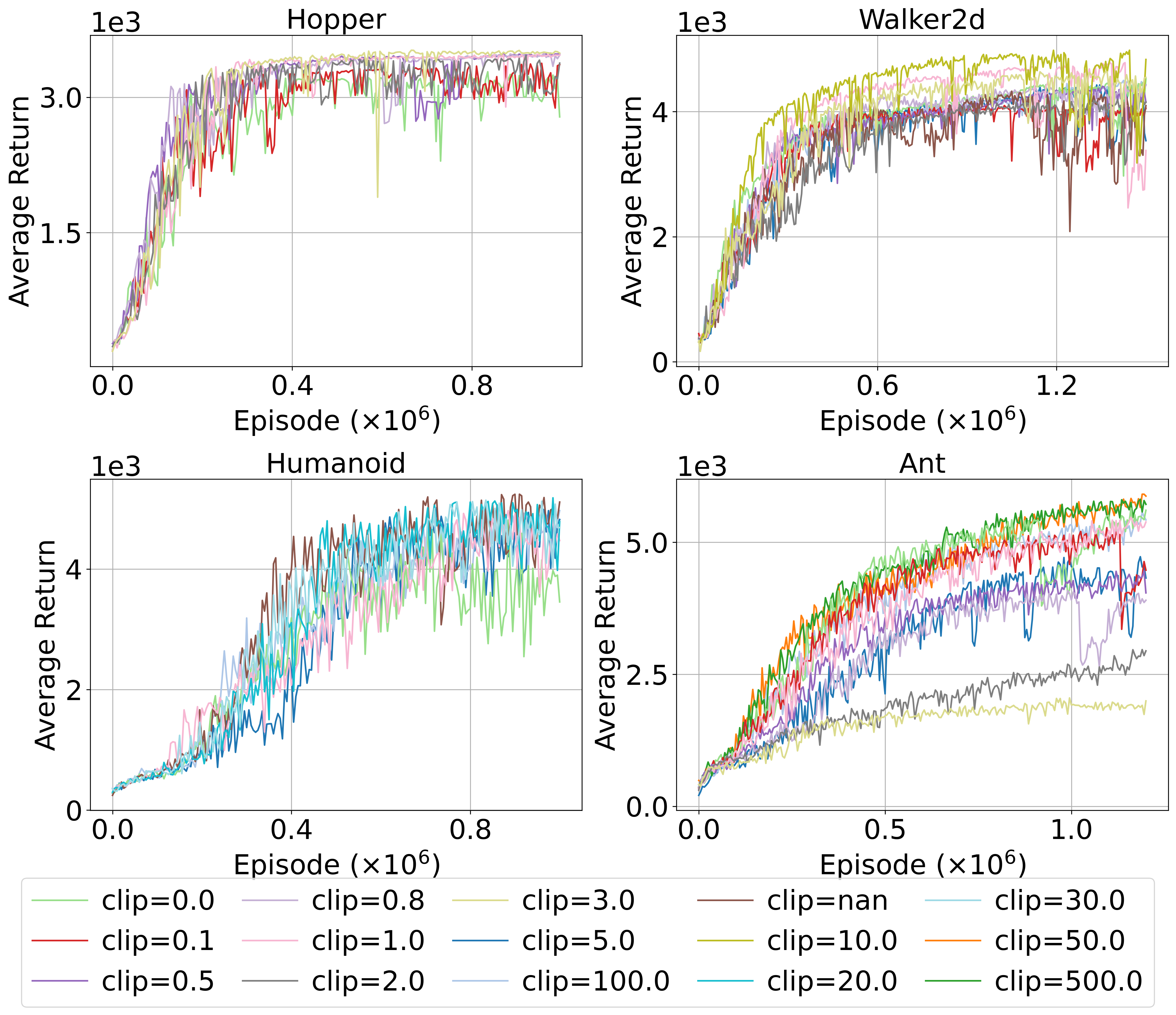}
        \caption{RKL Comparison}
    \end{subfigure}
    
    \caption{Comparison of clipping values across different environments, showing the effect on average return for each environment.}
    \label{fig:comparison_plots}
\end{figure}

\newpage
\section{Experiments with single expert trajectory}

Here we report the we report the result of the experiments using a single trajectory. Our findings indicate that the performance remained consistent regardless of the number of trajectories used and the performance are comparable to the ones with 16 trajectories. Notable differences in performance improvement were observed in Humanoid-v5 and ant state environments in the state only settings, where a more pronounced gap was evident.

\begin{figure*}[h]
    \centering
\includegraphics[width=\textwidth]{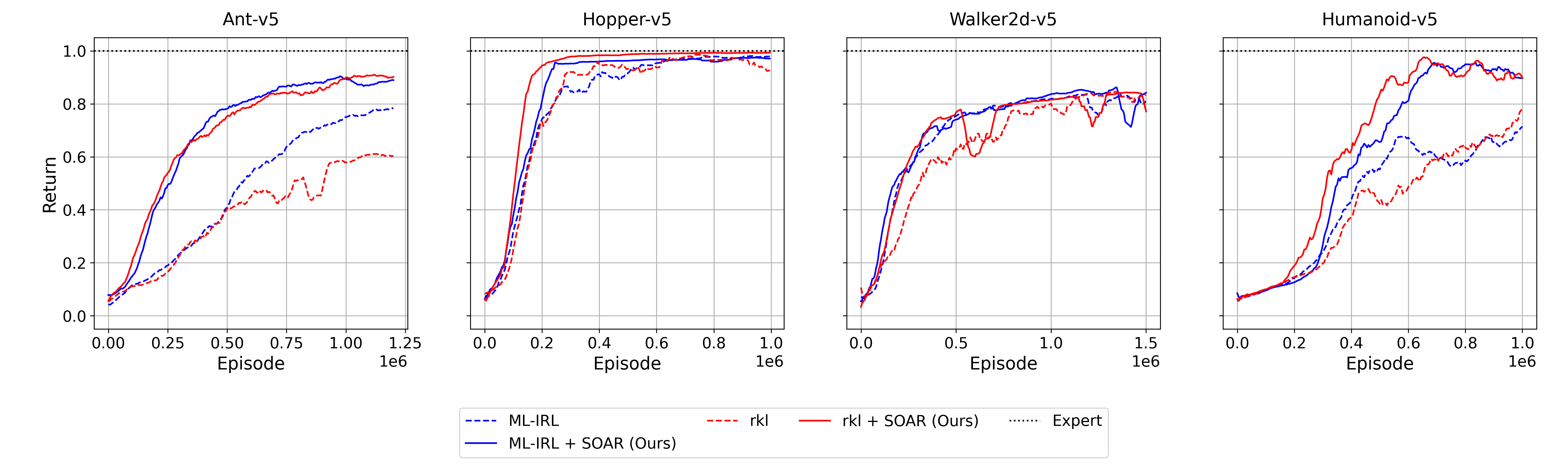}
    \caption{%Comparison of state-only imitation learning methods across OpenAI Gym environments, with dashed lines showing baseline algorithms and solid lines showing SOAR-enhanced versions. 
    \small{\textbf{Experiments from State-Only Expert Trajectories}. 1 expert trajectories, average over 3 seeds, $L=4$ 
     Clipping values $\sigma$ - ML-IRL: [Ant: 0.1, Hopper: 0.1, Walker2d: 50.0, Humanoid: 0.5],  
    rkl: [Ant: 0.1, Hopper: 0.5, Walker2d: 1.0, Humanoid: 50.0]}}
    \label{fig:state_only_1traj}
\end{figure*}

\begin{figure*}[h]
    \centering
\includegraphics[width=\textwidth]{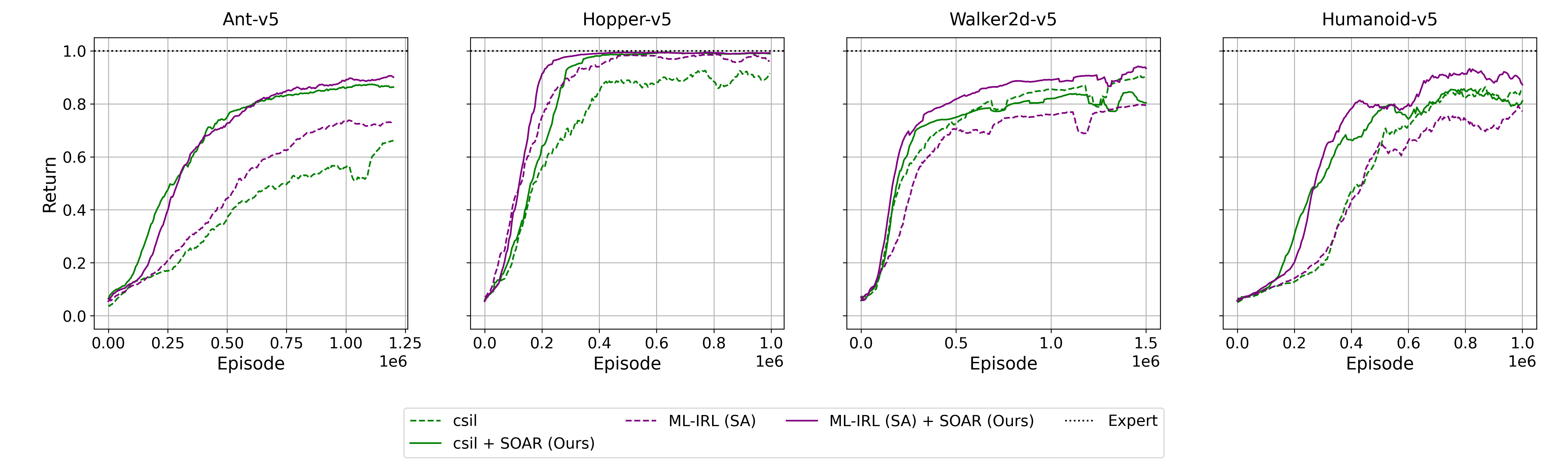}
    \caption{%Performance of state-action imitation learning methods across environments, with dashed lines showing baseline algorithms and solid lines showing SOAR-enhanced versions. 
    \small{\textbf{Experiments from State-Action Expert Trajectories}. 1 expert trajectories, average over 3 seeds, $L=4$.
    Clipping values $\sigma$ - CSIL: [Ant: 0.5, Hopper: 50.0, Walker2d: 0.1, Humanoid: 0.1],  
ML-IRL(SA): [Ant: 0.1, Hopper: 0.5, Walker2d: 0.1, Humanoid: 1.0]}}
\label{fig:state_actions_1traj}
\end{figure*}

\section{Omitted Pseudocodes}
\label{app:pseudo}
This section introduces the omitted pseudocodes to clarify the implementation of the algorithms based on SOAR. We first give a pseudocode (see \Cref{alg:deepmeta}) that mirrors \Cref{alg:meta} in the setting where deep neural network approximation is needed due to the continuous structure of the state-action space.
The critic training is the same as in the standard SAC \cite{Haarnoja:2018} but we report it in \Cref{alg:updateCritics} for safe completeness. 
Notice that we adopt the double critic training originally proposed in \cite{vanhasselt2015deepreinforcementlearningdouble} to avoid an excessive underestimation of the critics value\footnote{Notice that \cite{vanhasselt2015deepreinforcementlearningdouble} talks about excessive overestimation of the prediction target in the critic training rather than underestimation. This difference is due to the fact that their paper casts RL as reward maximization while we adopt a cost minimization perspective. For the same reason we take the maximum between the two critics rather than the minimum as done in \cite{vanhasselt2015deepreinforcementlearningdouble}.}.
\begin{algorithm}[H]
\caption{Base Method + SOAR pseudocode \label{alg:deepmeta}}
\begin{algorithmic}[1]
\REQUIRE Policy step size $\eta$, cost step size $\alpha$, expert dataset $\mathcal{D}_{\tau_E}$, discount factor $\gamma$, maximum standard deviation parameter $\sigma$, %critics network $\{Q^1_\ell, Q^1_\ell^\text{targ}\}_{\ell=1}^L$.
\State Initialize actor network $\pi_{\psi^1}$ randomly.
\State Initialize the cost network $c_{w^1}$ randomly.
\STATE Initialize the $L$ critics $\{Q_{\theta^1_1}, \ldots, Q_{\theta^1_L}\}$ randomly.
\STATE Initialize the $L$ target critics $\{Q_{\theta^{1,\text{targ}}_1}, \ldots, Q_{\theta^{1,\text{targ}}_L}\}$ randomly.
\State Initialize $L$ empty replay buffers $\bc{\mathcal{D}^k_\ell}^L_{\ell=1}$. (One for each critic)
\For{$k = 1$ to $K$}
    \State $\tau_\ell^k \gets \textsc{CollectTrajectory}(\pi)$ for each $\ell \in [L]$.
    \State Add $\tau_\ell^k$ to replay buffer $\mathcal{D}_\ell^k \gets \mathcal{D}_\ell^{k-1} \cup \tau_\ell^k$.
    \State Let $\mathcal{D}^k = \cup^L_{\ell=1} \mathcal{D}^k_\ell$
    \State $ c_{w^{k}} \gets \textsc{UpdateCost}(c_{w^{k-1}}, \mathcal{D}_{\expert}, \mathcal{D}^k, \alpha)$ using the Base Method (such as CSIL, $f$-IRL or ML-IRL ). 
    \For{$\ell = 1$ to $L$}
        \State $Q_{\theta^{k+1}_\ell}, Q_{\theta_\ell^{k+1,\text{targ}}} = \textsc{UpdateCritics}(\mathcal{D}^k_\ell, \pi_{\psi^k}, \eta, \gamma, c_{\theta^k})$
    \EndFor
    \State $\bc{Q^{k+1}(s,a)}_{s,a \in \mathcal{D}^k} = \textsc{OptimisticQ-NN}(\mathcal{D}^k, 
\bc{Q_{\theta^{k+1}_\ell}}^L_{\ell=1}, \sigma)~~~~~~$ (see \Cref{alg:optQnn})
    \State Define the loss
    $\mathcal{L}^k_\pi = \frac{1}{\abs{\mathcal{D}^k}} \sum_{s,a \in \mathcal{D}^k}\left( -\eta \log \pi_{\psi^k}(a|s) +  Q^{k+1}(s,a) \right).
$
    %$\pi_{\psi^{k+1}} \gets %\textsc{UpdatePi}(\cup^L_{\ell=1} \mathcal{D}^k_\ell, \{Q_{\theta^k_\ell}\}_{\ell=1}^L, \pi_{\psi^k}, \eta, \sigma)$
    \State Update policy weights to $\psi^{k+1}$ using Adam \cite{Kingma:2015} on the loss $\mathcal{L}^k_\pi$.
\EndFor
\State \textbf{Return} $\pi$
\end{algorithmic}
\end{algorithm}

\begin{algorithm}[H]
\caption{\textsc{UpdateCritics} \label{alg:updateCritics}}
\begin{algorithmic}[1]
\REQUIRE $\mathcal{D}^k_\ell, \pi_{\psi^k}, \alpha, \gamma,  c_w$
    \State Let $\mathcal{B} = \bc{s_i, a_i, r, s'_i, \text{done}_i}^N_{i=1}$ be a minibatch sampled from $\mathcal{D}$
    %\State $q_k^{(1)}(s, a) \gets Q_k^{(1)}(s, a)$
    %\State $q_k^{(2)}(s, a) \gets Q_k^{(2)}(s, a)$
    \State $a'_i \gets \pi(s'_i)$ for all $i \in [N]$.
    %\State $q_\text{targ}^{(1)} \gets Q_k^{\text{targ},(1)}(s', a')$
    %\State $q_\text{targ}^{(2)} \gets Q_k^{\text{targ},(2)}(s', a')$
    \State Define $Q_{\theta_{\ell}^{k,\text{targ}}}(s_i,a_i) \gets \max \left( Q_{\theta_{\ell}^{k,\text{targ},(1)}}(s_i,a_i), Q_{\theta_{\ell}^{k,\text{targ},(2)}}(s_i,a_i) \right)$ for all $s_i,a_i \in \mathcal{B}$. 
    \State $\text{Backup}_i \gets c_w(s_i,a_i) + \gamma (1 - \text{done}_i) \left( Q_{\theta_{\ell}^{k,\text{targ}}}(s_i,a_i) + \alpha \log \pi_{\psi^k}(a'_i|s'_i) \right)$
    \State $\mathcal{L}_{\theta^{k,(1)}_\ell} = \frac{1}{N} \sum_{i=1}^N \left( Q_{\theta^{k,(1)}_\ell}(s_i, a_i) - \text{Backup}_i \right)^2$
    \State $\mathcal{L}_{\theta^{k,(2)}_\ell} = \frac{1}{N} \sum_{i=1}^N \left( Q_{\theta^{k,(2)}_\ell}(s_i, a_i) - \text{Backup}_i \right)^2$
    \State $\theta^{k+1,(1)}_\ell \gets \theta^{k,(1)}_\ell - \eta_Q \nabla \mathcal{L}_{\theta^{k,(1)}_\ell}$
    \State $\theta^{k+1,(2)}_\ell \gets \theta^{k,(2)}_\ell - \eta_Q \nabla \mathcal{L}_{\theta^{k,(2)}_\ell}$.
    \State $\theta^{k+1, \text{targ},(1)} \gets (1 - \tau_{\text{targ}})\theta^{k, \text{targ},(1)} + \tau_{\text{targ}} \theta^{k,(1)} $.
    \State $\theta^{k+1, \text{targ},(2)} \gets (1 - \tau_{\text{targ}})\theta^{k, \text{targ},(2)} + \tau_{\text{targ}} \theta^{k,(2)} $.
    \State $Q_{\theta^{k+1}_\ell}(s,a) = \max \br{ Q_{\theta^{k+1,(1)}_\ell}(s,a), Q_{\theta^{k+1,(2)}_\ell}(s,a)}$ for all $s,a \in \mathcal{D}^k$.
    \State $Q_{\theta^{k+1, \text{targ}}_\ell}(s,a) = \max \br{ Q_{\theta^{k+1, \text{targ},(1)}_\ell}(s,a), Q_{\theta^{k+1, \text{targ},(2)}_\ell}(s,a)}$ for all $s,a \in \mathcal{D}^k$.
    \State \textbf{return} $Q_{\theta^{k+1}_\ell}, Q_{\theta^{k+1, \text{targ}}_\ell} $.
\end{algorithmic}
\end{algorithm}
\subsection{Instantiating the cost update}
We show after how the algorithmic template in \Cref{alg:meta} captures different imitation learning algorithms just changing the cost update.
For example, $f$-IRL with the reversed KL divergence (RKL) can be seen as \Cref{alg:meta} with the cost update described in \Cref{alg:fIRL}.
Moreover, our SOAR+RKL is obtained plugging in the cost update in \Cref{alg:fIRL} in \Cref{alg:deepmeta}.
\begin{algorithm}[H]
\caption{\textsc{UpdateCost} for RKL ($f$-IRL for reversed KL divergence) \cite{ni2021f} \label{alg:fIRL}}
\begin{algorithmic}[1]
\REQUIRE $c, \mathcal{D}_{\expert}, \mathcal{D}_{\pi^k}, \alpha$, divergence generating function $f(x) = - \log (x)$ for the reversed KL divergence, prior distribution over trajectories $p(\tau)$.
    \State $\rho_w(\tau) = \frac{1}{Z} p(\tau) e^{-c_w(\tau)}$
    \State $\chi^\star \gets \argmax_\omega \mathbb{E}_{s \sim \mathcal{D}_{\expert}} [\log D_\chi(s)] + \mathbb{E}_{s \sim \mathcal{D}^k} [\log (1 - D_\chi(s))]$
    % set is as a sum over the states
    \vspace{0.3em}
    \State Estimate the density ratio:
    \State $\frac{\rho_E(s)}{\rho_w(s)} = \frac{D_{\chi^\star}(s)}{1 - D_{\chi^\star}(s)}$
    \vspace{0.3em}
    % \State $\mathcal{L}_f(\theta) = D_f\left(\rho_E(s) \parallel \rho_\theta(s)\right)$
    \State Compute the stochastic gradient \begin{align*}\widehat{\nabla_w} &= \frac{1}{ T} \mathbb{E}_{\tau \sim \rho_w} 
    \bs{
    \sum_{t=1}^T h_f \left( \frac{\rho_E(s_t)}{\rho_w(s_t)}  \right) \cdot
    \left( - \sum_{t=1}^T \nabla_\theta c_w(s_t) \right)
    } \\&\phantom{=}- \frac{1}{ T} \mathbb{E}_{\tau \sim \rho_w} 
    \bs{
    \sum_{t=1}^T h_f \left( \frac{\rho_E(s_t)}{\rho_\theta(s_t)}  \right)} \cdot
    \mathbb{E}_{\tau \sim \rho_\theta} 
    \bs{\left( - \sum_{t=1}^T \nabla_\theta c_w(s_t) \right)
    }\end{align*}
    %\State $\widehat{\nabla_\theta L_f(\theta)} = \frac{1}{\alpha T} \, \text{cov}_{\tau \sim \rho_\theta(\tau)} 
    %\left( 
    %\left( \sum_{t=1}^T h_f \left( \frac{\rho_E(s_t)}{\rho_\theta(s_t)} \right) \right),
    %\left( \sum_{t=1}^T \nabla_\theta r_\theta(s_t) \right)
    %\right)$
    \State $w \gets w - \alpha \widehat{\nabla_w}$
    \State \textbf{Return} $c_w$
\end{algorithmic}
\end{algorithm}
Next, we present the cost update for the algorithm ML-IRL \cite{zeng2022maximum}. We present it for the state-action version. The state-only version is obtained simply omitting the action dependence everywhere.
\begin{algorithm}[H]
\caption{\textsc{UpdateCost} for ML-IRL (State-Action version) \cite{zeng2022maximum} \label{alg:MLIRL}}
\begin{algorithmic}[1]
\REQUIRE $c_{w}, \mathcal{D}_{\expert}, \tau^k = \bc{s^k_t,a^k_t}^{L^k}_{t=1}, \alpha$.
    \State Sample a state-action trajectory $\tau_E = \bc{s_t^E, a_t^E}^{L_E}_{t=1}$ from the expert dataset $\mathcal{D}_\expert$ where $L_E$ is a geometric random variable with parameter $(1-\gamma)^{-1}$.
    \State Compute the stochastic loss 
    $$ \widehat{\mathcal{L}}_{w}  =\sum^{L_E}_{t=0} \gamma^t c_w(s^E_t,a^E_t) -  \sum^{L^k}_{t=0} \gamma^t c_w(s^k_t,a^k_t)
    $$

    \State $w \gets w - \alpha \nabla_w \widehat{ \mathcal{L}_w}$
    \State \textbf{Return} $c_w$
\end{algorithmic}
\end{algorithm}

To conclude, we present the cost update for CSIL. Notice that since the cost used by CSIL does not leverage the information of the policy at iteration $k$ we can move the cost update before the main loop and keep a constant cost function fixed during the training of the policy.
Notice that the CSIL the reward is simply given by the log probabilities learned by the behavioural cloning policy. Therefore the reward parameters in CSIL coincides with the parameters of the behavioral cloning policy network.
We point out that using a reward of this form is similarly done in \cite{vieillard2020munchausen}. We plan to explore further the connection between Munchausen RL and CSIL in future work.

\begin{algorithm}[H]
\caption{\textsc{UpdateCost} for CSIL \cite{watson2023coherent}\label{alg:CSIL}}
\begin{algorithmic}[1]
\REQUIRE $\mathcal{D}_{\expert}$.
    \State Compute the behavioral cloning policy finding an approximate solution to the following problem.
    $$
    w^\star = \argmax_{w} \sum_{s,a \in \mathcal{D}_\expert}\log \pi_{w}(a|s)
    $$
    \State \textbf{Return} $c_{w^\star}(s,a) = - \log \pi_{w^\star}(a|s)$
\end{algorithmic}
\end{algorithm}

%%%%%%%%%%%%%%%%%%%%%%%%%%%%%%%%%%%%%%%%%%%%%%%%%%%%%%%%%%%%%%%%%%%%%%%%%%%%%%%
%%%%%%%%%%%%%%%%%%%%%%%%%%%%%%%%%%%%%%%%%%%%%%%%%%%%%%%%%%%%%%%%%%%%%%%%%%%%%%%

\end{document}